\def\eqref#1{equation~\ref{#1}}
\def\1{\bm{1}}
\DeclareMathAlphabet{\mathsfit}{\encodingdefault}{\sfdefault}{m}{sl}
\SetMathAlphabet{\mathsfit}{bold}{\encodingdefault}{\sfdefault}{bx}{n}
\def\BibTeX{{\rm B\kern-.05em{\sc i\kern-.025em b}\kern-.08em
		T\kern-.1667em\lower.7ex\hbox{E}\kern-.125emX}}
\newtheorem{theorem}{Theorem}
\newtheorem{proposition}{Proposition}
\newcommand{\modify}[1]{{\color{black} #1}}
\newcommand{\model}{\textsc{DIFFormer}\xspace}
\title{Discovering Data Geometry: \\
Energy-Driven Neural Diffusion}
\title{Diffusion Improves Learning Beyond Graphs with Implicit Energy Regularization}
\title{Energy-Driven Diffusion Graph Networks for Semi-supervised Learning}
\title{Graph-Optimal Diffusion Networks: \\
Discover Data Geometry through Energy Constraint}
\title{Geo-Former: Energy-Driven Geometric Diffusion Networks}
\title{Energy-Guided Geometric Diffusion Networks for Semi-supervised Learning}
\title{Geometric Networks Induced by Energy Constrained Diffusion}
\title{DIFFormer: Scalable (Graph) Transformers Induced by Energy Constrained Diffusion}
\author{
  Qitian Wu$^\dagger$, Chenxiao Yang$^\dagger$, Wentao Zhao$^\dagger$, Yixuan He$^\ddagger$, David Wipf$^\S$, Junchi Yan$^\dagger$\thanks{Corresponding author: Junchi Yan who is also affiliated with Shanghai AI Lab. The work was in part supported by National Key Research and Development Program of China (2020AAA0107600), National Natural Science Foundation of China (62222607), STCSM (22511105100).} \\
  $\dagger$ Department of CSE \& MoE Lab of Artificial Intelligence, Shanghai Jiao Tong University\\
  $\ddagger$ Department of Statistics, University of Oxford \\
  $\S$ Amazon Web Service\\
    \small\texttt{\{echo740,chr26195,permanent,yanjunchi\}@sjtu.edu.cn,}\\ 
    \small\texttt{yixuan.he@stats.ox.ac.uk, davidwipf@gmail.com}
}
\begin{document}

\maketitle

\begin{abstract}
    Real-world data generation often involves complex inter-dependencies among instances, violating the IID-data hypothesis of standard learning paradigms and posing a challenge for uncovering the geometric structures for learning desired instance representations. To this end, we introduce an energy constrained diffusion model which encodes \emph{a batch of instances} from a dataset into evolutionary states that progressively incorporate other instances' information by their interactions. The diffusion process is constrained by descent criteria w.r.t.~a principled energy function that characterizes the global consistency of instance representations over latent structures. We provide rigorous theory that implies closed-form optimal estimates for the pairwise diffusion strength among arbitrary instance pairs, which gives rise to a new class of neural encoders, dubbed as \model (diffusion-based Transformers), with two instantiations: a simple version with linear complexity for prohibitive instance numbers, and an advanced version for learning complex structures. Experiments highlight the wide applicability of our model as a general-purpose encoder backbone with superior performance in various tasks, such as node classification on large graphs, semi-supervised image/text classification, and spatial-temporal dynamics prediction. The codes are available at \url{https://github.com/qitianwu/DIFFormer}.
\end{abstract}

\vspace{-4pt}
\section{Introduction}\label{sec:intro}
\vspace{-4pt}

Real-world data are generated from a convoluted interactive process whose underlying physical principles are often unknown. 
Such a nature violates the common hypothesis of standard representation learning paradigms assuming that data are IID sampled. The challenge, however, is that due to the absence of prior knowledge about ground-truth data generation, it can be practically prohibitive to build feasible methodology for uncovering data dependencies, despite the acknowledged significance. To address this issue, prior works, e.g., \cite{pointcloud-19,LDS-icml19,jiang2019glcn,Bayesstruct-aaai19}, consider encoding the potential interactions between instance pairs, but this requires sufficient degrees of freedom that significantly increases learning difficulty from limited labels~\citep{fatemi2021slaps} and hinders the scalability to large systems~\citep{wunodeformer}. 



Turning to a simpler problem setting where putative instance relations are instantiated as an observed graph, remarkable progress has been made in designing expressive architectures such as graph neural networks (GNNs)~\citep{scarselli2008gnnearly,GCN-vallina,GAT,SGC-icml19,gcnii-icml20,RWLS-icml21} for harnessing inter-connections between instances as a geometric prior~\citep{geometriclearning-2017}. 
However, the observed relations can be incomplete/noisy, due to error-prone data collection, or generated by an artificial construction independent from downstream targets. The potential inconsistency between observation and the underlying data geometry would presumably elicit systematic bias between structured representation of graph-based learning and true data dependencies. While a plausible remedy is to learn more useful structures from the data, this unfortunately brings the previously-mentioned obstacles to the fore.


To resolve the dilemma, we propose a novel general-purpose encoder framework that uncovers data dependencies from observations (a dataset of partially labeled instances), proceeding via two-fold inspiration from physics as illustrated in Fig.~\ref{fig:model}. Our model is defined through feed-forward continuous dynamics (i.e., a PDE) involving all the instances of a dataset as locations on Riemannian manifolds with \emph{latent} structures, upon which the features of instances act as heat flowing over the underlying geometry~\citep{hamzi2021learning}. Such a diffusion model serves an important \emph{inductive bias} for leveraging global information from other instances to obtain more informative representations. Its major advantage lies in the flexibility for the \emph{diffusivity} function, i.e., a measure of the rate at which information spreads~\citep{rosenberg1997laplacian}: we allow for feature propagation between arbitrary instance pairs at each layer, and adaptively navigate this process by pairwise connectivity weights.  
Moreover, for guiding the instance representations towards some ideal constraints of internal consistency, we introduce a principled energy function that enforces layer-wise \emph{regularization} on the evolutionary directions. The energy function provides another view (from a macroscopic standpoint) into the desired instance representations with low energy that are produced, i.e., soliciting a steady state that gives rise to informed predictions on unlabeled data.

\begin{figure}[tb!]
			\centering
			\hspace{5pt}
			\includegraphics[width=0.85\textwidth]{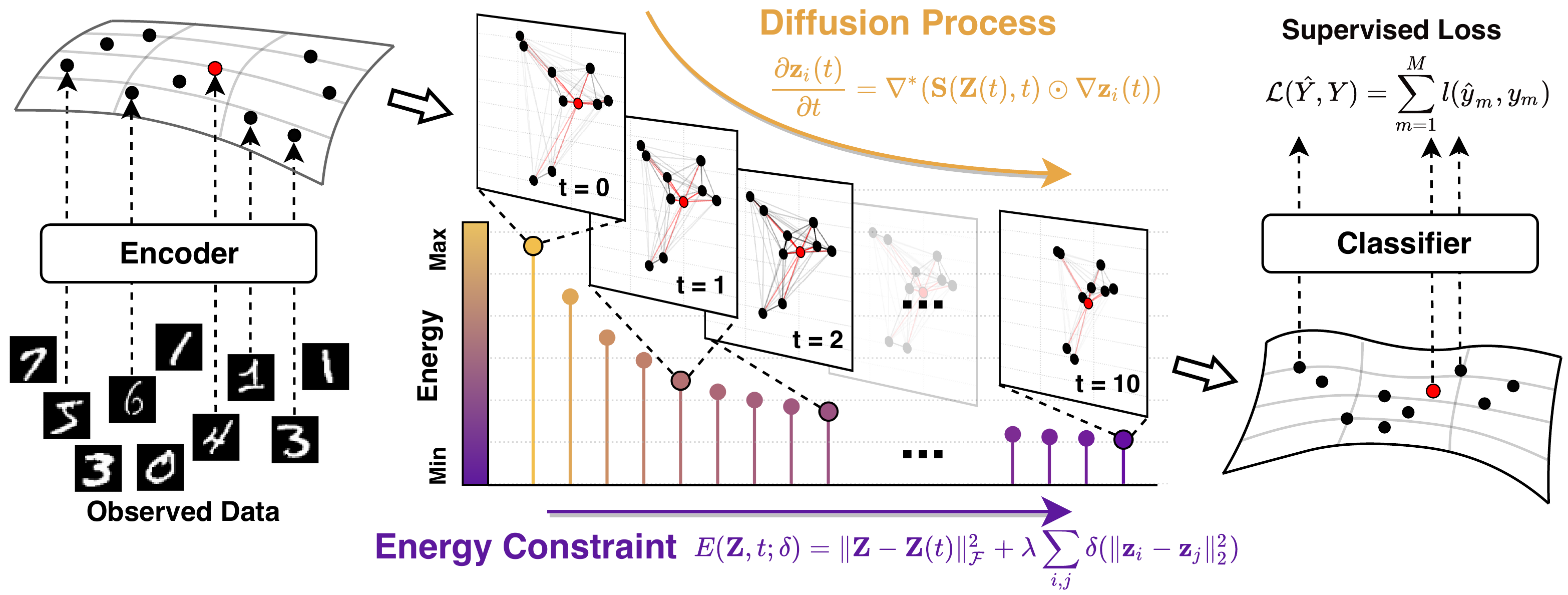}
	    \vspace{-5pt}
		\caption{\looseness=-1 An illustration of the general idea behind \model which takes a whole dataset (or a batch) of instances as input and encodes them into hidden states through a diffusion process aimed at minimizing a regularized energy. This design allows feature propagation among arbitrary instance pairs at each layer with optimal inter-connecting structures for informed prediction on each instance.}
		\label{fig:model}
	\vspace{-20pt}
\end{figure}

As a justification for the tractability of above general methodology, our theory reveals the underlying equivalence between finite-difference iterations of the diffusion process and unfolding the minimization dynamics for an associated regularized energy. This result further suggests a closed-form optimal solution for the diffusivity function that updates instance representations by the ones of all the other instances towards giving a rigorous decrease of the global energy. Based on this, we also show that the energy constrained diffusion model can serve as a principled perspective for unifying popular models like MLP, GCN and GAT which can be viewed as special cases of our framework.

On top of the theory, we propose a new class of neural encoders, Diffusion-based Transformers (\model), and its two practical instantiations: one is a simple version with $\mathcal O(N)$ complexity ($N$ for instance number) for computing all-pair interactions among instances; the other is a more expressive version that can learn complex latent structures. We empirically demonstrate the success of \model on a diverse set of tasks. It outperforms SOTA approaches on semi-supervised node classification benchmarks and performs competitively on large-scale graphs. It also shows promising power for image/text classification with low label rates and predicting spatial-temporal dynamics.

\vspace{-4pt}
\section{Related Work}\label{sec:related}
\vspace{-4pt}

\textbf{Graph-based Semi-supervised Learning.}
Graph-based SSL~\citep{GCN-vallina} aims to learn from partially labeled data, where instances are treated as nodes and their relations are given by a graph. The observed structure can be leveraged as regularization for learning representations~\citep{belkin2006manireg,weston2012semiemb,Panetoid-icml19} or as an inductive bias of modern GNN architectures~\citep{scarselli2008gnnearly}. However, there frequently exist situations where the observed structure is unavailable or unreliable~\citep{LDS-icml19,jiang2019glcn,IDGL-neurips20,fatemi2021slaps,lao2022variational}, in which case the challenge remains how to uncover the underlying relations. This paper explores a new Transformer-like encoder for discovering data geometry to promote learning through the inter-dependence among instances (either labeled or unlabeled).

\textbf{Neural Diffusion Models.} Several recent efforts explore diffusion-based learning where continuous dynamics serve as an inductive bias for representation learning~\citep{hamzi2021learning}. These works can be generally grouped into PDE-based learning, where the model itself is a continuous diffusion process described by a differential equation (e.g., \cite{grand,PDE-GCN,GRAND++}), and PDE-inspired learning, where the diffusion perspective is adopted to develop new models~\citep{atwood2016diffusion,klicpera2019diffusion} or characterize graph geometric properties~\citep{yang2022geometric}. Our work leans on the later category and the key originality lies in two aspects. First, we introduce a novel diffusion model whose dynamics are implicitly defined by optimizing a regularized energy. Second, our theory establishes an equivalence between the numerical iterations of diffusion process and unfolding the optimization of the energy, based on which we develop a new class of neural encoders for uncovering latent structures among a large number of instances.

\modify{\textbf{Transformers.} Transformers serve as a model class of wide research interest showing competitive efficacy for modeling the dependency among tokens of inputs through all-pair attention mechanism. While the original architecture~\citep{transformer} is motivated by the inter-dependence among words of a sentence in NLP tasks, from different aspects, this work targets a Transformer-like architecture for modeling the inter-dependence among instances in a dataset. In the terminology of Transformers in NLPs, one can treat the whole dataset as an extremely long `sequence' (with the length equaling to the dataset size), and each instance (with a label to predict) acts as the `token' in such a sequence. In the context of graph machine learning, our goal can be framed as learning latent interaction graphs among nodes beyond the observed graph (if available), and this can be generally viewed as an embodiment of node-level prediction~\citep{ogb-nips20}. For the latter case, critically though, it remains under-explored how to build a scalable and expressive Transformer for learning node-pair interactions given the prohibitively large number of node instances~\citep{wunodeformer}.}


\vspace{-4pt}

\section{Energy Constrained Geometric Diffusion Transformers}\label{sec:model}
\vspace{-4pt}



Consider a set of partially labeled instances $\{\mathbf x_i\}_{i=1}^N$, whose labeled portion is $\{(\mathbf x_j, y_j)\}_{j=1}^M$ (often $M \ll N$). In some cases there exist relational structures that connect
instances as a graph $\mathcal G = (\mathcal V, \mathcal E)$, where the node set $\mathcal V$ contains all the instances and the edge set $\mathcal E=\{e_{ij}\}$ consists of observed relations. Without loss of generality, the main body of this section does \emph{not} assume graph structures as input, but we will later discuss how to trivially incorporate them if they are available.

\vspace{-4pt}

\subsection{Geometric Diffusion Model}
\vspace{-4pt}

The starting point of our model is a diffusion process that treats a dataset of instances as a whole and produces instance representations through information flows characterized by an anisotropic diffusion process, which is inspired by an analogy with heat diffusion on a Riemannian manifold~\citep{rosenberg1997laplacian}. We use a vector-valued function $\mathbf z_i(t): [0, \infty) \rightarrow \mathbb R^d$ to define an instance's state at time $t$ and location $i$. The anisotropic diffusion process describes the evolution of instance states (i.e., representations) via a PDE with boundary conditions~\citep{freidlin1993diffusion,medvedev2014nonlinear}:
\begin{equation}\label{eqn-diffuse}
    \frac{\partial \mathbf Z(t)}{\partial t} = \nabla^*\left(\mathbf S(\mathbf{Z}(t), t) \odot \nabla \mathbf Z(t)\right), ~~~ \mbox{s. t.} ~~ \mathbf Z(0) = [\mathbf x_i]_{i=1}^N, ~~ t\geq 0, 
\end{equation}
where $\mathbf Z(t) = [\mathbf z_i(t)]_{i=1}^N \in \mathbb R^{N\times d}$, $\odot$ denotes the Hadamard product, and the function $\mathbf S(\mathbf Z(t), t): \mathbb R^{N\times d} \times [0, \infty) \rightarrow [0, 1]^{N\times N}$ defines the \emph{diffusivity} coefficient controlling the diffusion strength between any pair at time $t$. The diffusivity is specified to be dependent on instances' states.
The gradient operator $\nabla$ measures the difference between source and target states, i.e., $(\nabla \mathbf Z(t))_{ij} = \mathbf z_j(t) - \mathbf z_i(t)$, and the divergence operator $\nabla^*$ sums up information flows at a point, i.e., $(\nabla^*)_i = \sum_{j=1}^N \mathbf S_{ij}(\mathbf Z(t), t) \left(\nabla \mathbf Z(t)\right)_{ij}$. Note that both operators are defined over a discrete space consisting of $N$ locations. The physical implication of Eq.~\ref{eqn-diffuse} is that the temporal change of heat at location $i$ equals to
the heat flux that spatially enters into the point. 
Eq.~\ref{eqn-diffuse} can be explicitly written as
\begin{equation}\label{eqn-diffuse2}
    \frac{\partial \mathbf z_i(t)}{\partial t} = \sum_{j=1}^N \mathbf S_{ij}(\mathbf Z(t), t) (\mathbf z_j(t) - \mathbf z_i(t)).
\end{equation} 
Such a diffusion process can serve as an inductive bias that guides the model to use other instances' information at every layer for learning informative instance representations. 
We can use numerical methods to solve the continuous dynamics in Eq.~\ref{eqn-diffuse2}, e.g., the explicit Euler scheme involving finite differences with step size $\tau$, which after some re-arranging gives:
\begin{equation}\label{eqn-diffuse-iter}
    \mathbf z_i^{(k+1)} = \left (1 - \tau \sum_{j=1}^N \mathbf S_{ij}^{(k)} \right ) \mathbf z_i^{(k)} + \tau \sum_{j=1}^N \mathbf S_{ij}^{(k)} \mathbf z_j^{(k)}.
\end{equation}
The numerical iteration can stably converge for $\tau\in(0, 1)$. We can adopt the state after a finite number $K$ of propagation steps  and use it for final predictions, i.e., $\hat y_i = \mbox{MLP}(\mathbf z_i^{(K)})$.


\looseness=-1\textbf{\emph{Remark.}} The diffusivity coefficient in Eq.~\ref{eqn-diffuse} is a measure of the rate at which heat can spread over the space~\citep{rosenberg1997laplacian}. Particularly in Eq.~\ref{eqn-diffuse2}, $\mathbf S(\mathbf Z(t), t)$ determines how information flows over instances and the evolutionary direction of instance states. Much flexibility remains for its specification. For example, a basic choice is to fix $\mathbf S(\mathbf Z(t), t)$ as an identity matrix which constrains the feature propagation to self-loops and the model degrades to an MLP that treats all the instances independently. One could also specify $\mathbf S(\mathbf Z(t), t)$ as the observed graph structure if available in some scenarios. In such a case, however, the information flows are restricted by neighboring nodes in a graph. An ideal case could be to allow $\mathbf S(\mathbf Z(t), t)$ to have non-zero values for arbitrary $(i, j)$ and evolve with time, i.e., the instance states at each layer can efficiently and adaptively propagate to all the others. 

\vspace{-4pt}
\subsection{Diffusion Constrained by a Layer-wise Energy}
\vspace{-4pt}
As mentioned previously, the crux is how to define a proper diffusivity function to induce a desired diffusion process that can maximize the information utility and accord with some inherent consistency. Since we have no prior knowledge for the explicit form or the inner structure of $\mathbf S^{(k)}$, we consider the diffusivity as a time-dependent latent variable and introduce an \emph{energy function} that measures the presumed quality of instance states at a given step $k$:
\begin{equation}\label{eqn-energy}
    E(\mathbf Z, k; \delta) = \|\mathbf Z - \mathbf Z^{(k)}\|_{\mathcal F}^2 + \lambda\sum_{i,j} \delta(\|\mathbf z_i - \mathbf z_j\|_2^2),
\end{equation}
where $\delta:\mathbb R^+ \rightarrow \mathbb R$ is defined as a function that is \emph{non-decreasing} and \emph{concave} on a particular interval of our interest, and promotes robustness against large differences~\citep{RWLS-icml21} among any pair of instances. Eq.~\ref{eqn-energy} assigns each state in $\mathbb R^d$ with an energy scalar which can be leveraged to regularize the updated states (towards lower energy desired). The weight $\lambda$ trades two effects: 1) for each instance $i$, the states not far from the current one $\mathbf z_i^{(k)}$ have low energy; 2) for all instances, the smaller differences their states have, the lower energy is produced. 

\textbf{\emph{Remark.}} Eq.~\ref{eqn-energy} can essentially be seen as a robust version of the energy introduced by \cite{globallocal-2003}, inheriting the spirit of regularizing the global and local consistency of representations. 
``Robust" here particularly implies that the $\delta$ adds uncertainty to each pair of the instances and could \emph{implicitly} filter the information of noisy links (potentially reflected by proximity in the latent space). 


\textbf{Energy Constrained Diffusion.} The diffusion process describes the \emph{microscopic} behavior of instance states through evolution, while the energy function provides a \emph{macroscopic} view for quantifying the consistency. In general, we expect that the final states could yield a low energy, which suggests that the physical system arrives at a steady point wherein the yielded instance representations have absorbed enough global information under a certain guiding principle. 
Thereby, we unify two schools of thoughts into a new diffusive system where instance states would evolve towards producing lower energy, e.g., by finding a valid diffusivity function.
Formally, we aim to find a series of $\mathbf S^{(k)}$'s whose dynamics and constraints are given by 
\begin{equation}\label{eqn-diffuse-appx}
    \begin{split}
    &\mathbf z_i^{(k+1)} = \left (1 - \tau \sum_{j=1}^N \mathbf S_{ij}^{(k)} \right ) \mathbf z_i^{(k)} + \tau \sum_{j=1}^N \mathbf S_{ij}^{(k)} \mathbf z_j^{(k)} \\
    &\mbox{s. t.}~~\mathbf z_i^{(0)} = \mathbf x_i, \quad E(\mathbf Z^{(k+1)}, k; \delta) \leq E(\mathbf Z^{(k)}, k-1; \delta), \quad k\geq 1.
    \end{split}
\end{equation}
The formulation induces a new class of geometric flows on latent manifolds whose dynamics are \emph{implicitly} defined by optimizing
a time-varying energy function (see Fig.~\ref{fig:model} for an illustration). 

\vspace{-4pt}
\subsection{Tractability of Solving Diffusion Process with Energy Minimization}
\vspace{-4pt}

Unfortunately, Eq.~\ref{eqn-diffuse-appx} is hard to solve since we need to infer the value for a series of coupled $\mathbf S^{(k)}$'s that need to satisfy $K$ inequalities by the energy minimization constraint.
The key result of this paper is the following theorem that reveals the underlying connection between the geometric diffusion model and iterative minimization of the energy, which further suggests an explicit closed-form solution for $\mathbf S^{(k)}$ based on the current states $\mathbf Z^{(k)}$ that yields a rigorous decrease of the energy.
\begin{theorem}\label{thm-main}
    For any regularized energy defined by Eq.~\ref{eqn-energy} with a given $\lambda$, there exists $0<\tau<1$ such that the diffusion process of Eq.~\ref{eqn-diffuse-iter} with the diffusivity between pair $(i,j)$ at the $k$-th step given by
    \begin{equation}\label{eqn-optimal-diffuse}
        \mathbf{\hat S}_{ij}^{(k)} = \frac{\omega_{ij}^{(k)}}{\sum_{l=1}^N \omega_{il}^{(k)}}, \quad \omega_{ij}^{(k)} = \left. \frac{\partial \delta(z^2)}{\partial z^2} \right |_{z^2 = \|\mathbf z_i^{(k)} - \mathbf z_j^{(k)}\|_2^2},
    \end{equation}
    yields a descent step on the energy, i.e., $E(\mathbf Z^{(k+1)}, k; \delta) \leq E(\mathbf Z^{(k)}, k-1; \delta)$ for any $k\geq 1$.
\end{theorem}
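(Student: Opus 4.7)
The plan is to interpret~\eqref{eqn-optimal-diffuse} as a majorization--minimization (MM) step on $E(\cdot,k;\delta)$ at the current iterate $\mathbf Z^{(k)}$, exploiting the concavity hypothesis on $\delta$ to build a tractable convex-quadratic surrogate.

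First I would construct the surrogate. Set $d_{ij}^{(k)} = \|\mathbf z_i^{(k)} - \mathbf z_j^{(k)}\|_2^2$; concavity of $\delta$ gives the tangent upper bound
\begin{equation*}
\delta(\|\mathbf z_i - \mathbf z_j\|_2^2) \,\leq\, \delta(d_{ij}^{(k)}) + \omega_{ij}^{(k)}\bigl(\|\mathbf z_i - \mathbf z_j\|_2^2 - d_{ij}^{(k)}\bigr),
\end{equation*}
tight at $\mathbf Z = \mathbf Z^{(k)}$, where $\omega_{ij}^{(k)}$ is exactly the coefficient defined in~\eqref{eqn-optimal-diffuse}. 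Summing over all pairs and adding the proximity term produces the majorizer
\begin{equation*}
\tilde E(\mathbf Z) \,=\, \|\mathbf Z - \mathbf Z^{(k)}\|_{\mathcal F}^2 + 2\lambda\,\mathrm{tr}\bigl(\mathbf Z^\top \mathbf L^{(k)} \mathbf Z\bigr) + \mathrm{const},
\end{equation*}
with $\mathbf L^{(k)} = \mathbf D^{(k)} - \mathbf W^{(k)}$ the weighted Laplacian defined by $\mathbf W^{(k)}_{ij} = \omega_{ij}^{(k)}$ and $\mathbf D^{(k)}_{ii} = \sum_j \omega_{ij}^{(k)}$. By construction $\tilde E(\mathbf Z) \geq E(\mathbf Z,k;\delta)$ pointwise with equality at $\mathbf Z^{(k)}$, and the symmetry $\omega_{ij}^{(k)} = \omega_{ji}^{(k)}$ makes $\mathbf L^{(k)}$ a genuine graph Laplacian.

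Next I would identify~\eqref{eqn-diffuse-iter} with $\hat{\mathbf S}^{(k)}$ from~\eqref{eqn-optimal-diffuse} as a diagonally preconditioned gradient descent step on $\tilde E$. Row-stochasticity of $\hat{\mathbf S}^{(k)} = (\mathbf D^{(k)})^{-1}\mathbf W^{(k)}$ recasts~\eqref{eqn-diffuse-iter} in matrix form as $\mathbf Z^{(k+1)} = \mathbf Z^{(k)} - \tau\,(\mathbf D^{(k)})^{-1}\mathbf L^{(k)}\mathbf Z^{(k)}$, while $\nabla \tilde E(\mathbf Z^{(k)}) = 4\lambda\,\mathbf L^{(k)}\mathbf Z^{(k)}$ since the proximity gradient vanishes at $\mathbf Z^{(k)}$. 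Matching these exhibits the update as preconditioned GD with preconditioner $(\mathbf D^{(k)})^{-1}$ and effective scalar step $\eta = \tau/(4\lambda)$. The usual descent criterion for preconditioned GD on a convex quadratic with Hessian $\mathbf H = 2\mathbf I + 4\lambda \mathbf L^{(k)}$ reads $\eta \leq 2/\lambda_{\max}\bigl((\mathbf D^{(k)})^{-1}\mathbf H\bigr)$, and using that the random-walk Laplacian $(\mathbf D^{(k)})^{-1}\mathbf L^{(k)}$ has spectrum in $[0,2]$, this reduces to $\tau \leq \tfrac{4\lambda\,\min_i \mathbf D^{(k)}_{ii}}{1 + 4\lambda\,\min_i \mathbf D^{(k)}_{ii}}$. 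This upper bound lies strictly inside $(0,1)$, so a valid $\tau$ exists and one obtains $\tilde E(\mathbf Z^{(k+1)}) \leq \tilde E(\mathbf Z^{(k)})$.

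Finally I would chain the inequalities via MM: $E(\mathbf Z^{(k+1)},k;\delta) \leq \tilde E(\mathbf Z^{(k+1)}) \leq \tilde E(\mathbf Z^{(k)}) = E(\mathbf Z^{(k)},k;\delta) = \lambda\sum_{i,j}\delta(d_{ij}^{(k)})$, the last equality because the proximity term vanishes at $\mathbf Z^{(k)}$. Since $E(\mathbf Z^{(k)},k-1;\delta) = \|\mathbf Z^{(k)} - \mathbf Z^{(k-1)}\|_{\mathcal F}^2 + \lambda\sum_{i,j}\delta(d_{ij}^{(k)}) \geq \lambda\sum_{i,j}\delta(d_{ij}^{(k)}) = E(\mathbf Z^{(k)},k;\delta)$, the claim $E(\mathbf Z^{(k+1)},k;\delta) \leq E(\mathbf Z^{(k)},k-1;\delta)$ follows. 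The hardest step is the spectral bound in paragraph three: \eqref{eqn-diffuse-iter} applies a single scalar $\tau$ uniformly across rows while the coordinate-optimal step is row-dependent through $\mathbf D^{(k)}_{ii}$, so uniform-$\tau$ descent requires the worst-case spectral-radius argument above (and implicitly that $\min_i \mathbf D^{(k)}_{ii} > 0$, which can be ensured by any mild positivity on $\delta'$).
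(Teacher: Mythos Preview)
Your argument is correct and follows essentially the same route as the paper: build a quadratic majorizer of $E(\cdot,k;\delta)$ from the concavity of $\delta$ (the paper phrases this via the concave conjugate/Fenchel duality, you via the tangent-line inequality---these are equivalent), identify the diffusion update \eqref{eqn-diffuse-iter} with $\hat{\mathbf S}^{(k)}=(\mathbf D^{(k)})^{-1}\mathbf W^{(k)}$ as diagonally preconditioned gradient descent on that surrogate, and then chain $E(\mathbf Z^{(k+1)},k;\delta)\le E(\mathbf Z^{(k)},k;\delta)\le E(\mathbf Z^{(k)},k-1;\delta)$. Your explicit spectral step-size bound $\tau \le \tfrac{4\lambda\min_i \mathbf D^{(k)}_{ii}}{1+4\lambda\min_i \mathbf D^{(k)}_{ii}}$ is in fact more careful than the paper, which asserts descent of the preconditioned step without quantifying the admissible $\tau$.
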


Theorem~\ref{thm-main} suggests the existence for the optimal diffusivity in the form of a function over the $l_2$ distance between states at the current step, i.e., $\|\mathbf z_i^{(k)} - \mathbf z_j^{(k)}\|_2$. The result enables us to unfold the implicit process and compute $\mathbf S^{(k)}$ in a feed-forward way from the initial states. We thus arrive at a new family of neural model architectures with layer-wise computation specified by:
{\small
\vspace{-5pt}
\begin{center}
\fcolorbox{black}{gray!10}{\parbox{0.97\linewidth}{
\vspace{-5pt}
\begin{equation}\label{eqn-model}
    \begin{split}
        & \mbox{Diffusivity Inference:} \quad \mathbf {\hat S}_{ij}^{(k)} = \frac{f(\|\mathbf z_i^{(k)} - \mathbf z_j^{(k)}\|_2^2)}{\sum_{l=1}^N f(\|\mathbf z_i^{(k)} - \mathbf z_l^{(k)}\|_2^2)}, \quad 1 \leq i, j \leq N,\\
        & \mbox{State Updating:} \quad \mathbf z_i^{(k+1)} = \underbrace{\left (1 - \tau \sum_{j=1}^N \mathbf {\hat S}_{ij}^{(k)} \right ) \mathbf z_i^{(k)} }_{\mbox{state conservation}} + \underbrace{\tau \sum_{j=1}^N \mathbf {\hat S}_{ij}^{(k)} \mathbf z_j^{(k)} }_{\mbox{state propagation}}, \quad 1 \leq i \leq N.
    \end{split}
\end{equation}
\vspace{-10pt}
}
}
\end{center} }
\vspace{-5pt}
\textbf{\emph{Remark.}} The choice of function $f$ in above formulation is not arbitrary, but needs to be a non-negative and decreasing function of $z^2$, so that the associated $\delta$ in Eq.~\ref{eqn-energy} is guaranteed to be non-decreasing and concave w.r.t. $z^2$. Critically though, there remains much room for us to properly design the specific $f$, 
so as to provide adequate capacity and scalability. Also, in our model presented by Eq.~\ref{eqn-model} we only have one hyper-parameter $\tau$ in practice, noting that the weight $\lambda$ in the regularized energy is implicitly determined through $\tau$ by Theorem \ref{thm-main}, which reduces the cost of hyper-parameter searching.


\vspace{-4pt}

\section{Instantiations of \model}\label{sec:inst}
\vspace{-4pt}


\subsection{Model Instantiations}
\vspace{-4pt}

We next go into model instantiations based on the above theory, with two specified $f$'s as practical versions of our model. Due to space limits, we describe the key ideas concerning the model design in this subsection, and defer the details of model architectures to the self-contained Appendix~\ref{appx-alg}. 
First, because $\|\mathbf z_i - \mathbf z_j\|_2^2 = \|\mathbf z_i\|_2^2 + \|\mathbf z_j\|_2^2 - 2\mathbf z_i^\top \mathbf z_j$, we can convert $f(\|\mathbf z_i - \mathbf z_j\|_2^2)$ into the form $g(\mathbf z_i^\top \mathbf z_j)$ using a change of variables on the condition that $\|\mathbf z_i\|_2$ remains constant. And we add layer normalization to each layer to loosely enforce such a property in practice.

\textbf{Simple Diffusivity Model.} A straightforward design is to adopt the linear function $g(x) = 1+x$:
\begin{equation}\label{eqn-S1}
    \omega_{ij}^{(k)} = f(\|\tilde{\mathbf z}_i^{(k)} - \tilde{\mathbf z}_j^{(k)}\|_2^2) = 1 + \left (\frac{\mathbf z_i^{(k)}}{\|\mathbf z_i^{(k)}\|_2} \right)^\top 
    \left (\frac{\mathbf z_j^{(k)}}{\|\mathbf z_j^{(k)}\|_2} \right ),
\end{equation}
Assuming $\tilde{\mathbf z}_i^{(k)} = \frac{\mathbf z_i^{(k)}}{\|\mathbf z_i^{(k)}\|_2}$, $\tilde{\mathbf z}_j^{(k)} = \frac{\mathbf z_j^{(k)}}{\|\mathbf z_j^{(k)}\|_2}$ and $z = \|\tilde{\mathbf z}_i^{(k)} - \tilde{\mathbf z}_j^{(k)}\|_2$, Eq.~\ref{eqn-S1} can be written as $f(z^2) = 2 - \frac{1}{2}z^2$, which yields a non-negative result and is decreasing on the interval $[0, 2]$ in which $z^2$ lies.
One scalability concern for the model Eq.~\ref{eqn-model} arises because of the need to compute pairwise diffusivity and propagation for each individual, inducing $\mathcal O(N^2)$ complexity. Remarkably, the simple diffusivity model allows a significant acceleration by noting that the state propagation can be re-arranged via 
\begin{equation}
     \sum_{j=1}^N \mathbf S_{ij}^{(k)} \mathbf z_j^{(k)} = \sum_{j=1}^N \frac{1 + (\tilde{\mathbf z}_i^{(k)})^\top \tilde{\mathbf z}_j^{(k)}}{\sum_{l=1}^N\left (1 + (\tilde{\mathbf z}_i^{(k)})^\top \tilde{\mathbf z}_l^{(k)} \right ) } \mathbf z_j^{(k)} = \frac{\sum_{j=1}^N\mathbf z_j^{(k)} + \left (\sum_{j=1}^N \tilde{\mathbf z}_j^{(k)}\cdot  (\mathbf z_j^{(k)})^\top \right ) \cdot \tilde{\mathbf z}_i^{(k)} }{N + (\tilde{\mathbf z}_i^{(k)})^\top \sum_{l=1}^N \tilde{\mathbf z}_l^{(k)}}.
\end{equation}
The two summation terms above can be computed once and shared to every instance $i$, reducing the complexity in each iteration to $\mathcal O(N)$ (see Appendix~\ref{appx-alg} for how we achieve linear complexity w.r.t. $N$ in the matrix form for model implementation). We refer to this implementation as \model-s. 

\textbf{Advanced Diffusivity Model.} The simple model facilitates efficiency/scalability, yet may sacrifice the  capacity for complex latent geometry. We thus propose an advanced version with $g(x) = \frac{1}{1+\exp(-x)}$:
\begin{equation}\label{eqn-S2}
    \omega_{ij}^{(k)} = f(\|\tilde{\mathbf z}_i^{(k)} - \tilde{\mathbf z}_j^{(k)}\|_2^2) = \frac{1}{1+\exp{\left(- (\mathbf z_i^{(k)} )^\top 
    (\mathbf z_j^{(k)} ) \right ) }},
\end{equation}
which corresponds with $f(z^2) = \frac{1}{1+e^{z^2/2-1}}$ guaranteeing monotonic decrease and non-negativity. We dub this version as \model-a. Appendix~\ref{appx-inst} further compares the two models (i.e., different $f$'s and $\delta$'s) through synthetic results. Real-world empirical comparisons are in Section~\ref{sec:exp}.

\begin{table}[t!]
\centering
\caption{A unified view for MLP, GCN and GAT from our energy-driven geometric diffusion framework regarding 
energy function forms, diffusivity specifications and algorithmic complexity. \label{tbl-existing}}
\vspace{-5pt}
\small
    \resizebox{0.99\textwidth}{!}{
\begin{tabular}{@{}c|c|c|c@{}}
\toprule
Models & Energy Function $E(\mathbf Z, k; \delta)$ & Diffusivity $\mathbf S^{(k)}$ & Complexity \\
\midrule
MLP & $\|\mathbf Z - \mathbf Z^{(k)}\|_2^2$ & $\mathbf S^{(k)}_{ij} = \left\{ 
    \begin{aligned}
         &1, \quad \mbox{if} \; i = j  \\
         &0, \quad otherwise
    \end{aligned}
    \right. $ & $\mathcal O(NKd^2)$ \\
\hline
GCN & $\sum_{(i,j)\in \mathcal E} \|\mathbf z_i - \mathbf z_j\|_2^2$ & $\mathbf S^{(k)}_{ij} = \left\{ 
    \begin{aligned}
         &\frac{1}{\sqrt{d_id_j}}, \quad \mbox{if} \; (i,j) \in \mathcal E  \\
         &0, \quad otherwise
    \end{aligned}
    \right. $ & $\mathcal O(|\mathcal E|Kd^2)$  \\
\hline
GAT & $\sum_{(i,j)\in \mathcal E} \delta(\|\mathbf z_i - \mathbf z_j\|_2^2)$ & $\mathbf S^{(k)}_{ij} = \left\{ 
    \begin{aligned}
         &\frac{f(\|\mathbf z_i^{(k)} - \mathbf z_j^{(k)}\|_2^2)}{\sum_{l: (i,l)\in \mathcal E} f(\|\mathbf z_i^{(k)} - \mathbf z_l^{(k)}\|_2^2)}, \quad \mbox{if} \; (i,j) \in \mathcal E  \\
         &0, \quad otherwise
    \end{aligned}
    \right.  $& $\mathcal O(|\mathcal E|Kd^2)$  \\
\hline
\model & $\|\mathbf Z - \mathbf Z^{(k)}\|_2^2 + \lambda \sum_{i,j} \delta(\|\mathbf z_i - \mathbf z_j\|_2^2)$ & $\mathbf S^{(k)}_{ij} = \frac{f(\|\mathbf z_i^{(k)} - \mathbf z_j^{(k)}\|_2^2)}{\sum_{l=1}^N f(\|\mathbf z_i^{(k)} - \mathbf z_l^{(k)}\|_2^2)}, \quad 1\leq i,j \leq N$ & $\begin{aligned} &\mbox{\model-s}:\quad \mathcal O(NKd^2) \\ &\mbox{\model-a}:\quad \mathcal O(N^2Kd^2) \end{aligned}$ \\
\bottomrule
\end{tabular}}
\vspace{-10pt}
\end{table}

\subsection{Model Extensions and Further Discussion}
\textbf{Incorporating Layer-wise Transformations.} Eq.~\ref{eqn-model} does not use feature transformations for each layer. To further improve the representation capacity, we can add such transformations after the updating, i.e., $\mathbf z_i^{(k)} \leftarrow h^{(k)}(\mathbf z_i^{(k)})$ where $h^{(k)}$ can be a fully-connected layer (see Appendix~\ref{appx-alg} for details). In this way, each iteration of the diffusion yields a descent of a particular energy $E(\mathbf Z, k;\delta, h^{(k)}) = \|\mathbf Z - h^{(k)}(\mathbf Z^{(k)})\|_2^2 + \sum_{i,j} \delta(\|\mathbf z_i - \mathbf z_j\|_2^2)$ dependent on $k$. 
The trainable transformation $h^{(k)}$ can be optimized w.r.t. the supervised loss to map the instance representations into a proper latent space. Our experiments find that the layer-wise transformation is not necessary for small datasets, but contributes to positive effects for datasets with larger sizes. Furthermore, one can consider non-linear activations in the layer-wise transformation $h^{(k)}$ though we empirically found that using a linear model already performs well. We also note that Theorem~\ref{thm-main} can be extended to hold even when incorporating such a non-linearity in each layer (see Appendix~\ref{appx-non} for detailed discussions).





\textbf{Incorporating Input Graphs.} For the model presented so far, we do \emph{not} assume an input graph for the model formulation. For situations with observed structures as available input, we have $\mathcal G = (\mathcal V, \mathcal E)$ that can be leveraged as a geometric prior. We can thus modify the updating rule as:
\begin{equation}\label{eqn-diffuse-graph}
    \mathbf z_i^{(k+1)} = \left (1 - \frac{\tau}{2} \sum_{j=1}^N \left(\mathbf {\hat S}_{ij}^{(k)} + \tilde{\mathbf A}_{ij} \right ) \right ) \mathbf z_i^{(k)} + \frac{\tau}{2} \sum_{j=1}^N \left (\mathbf {\hat S}_{ij}^{(k)} + \tilde{\mathbf A}_{ij}  \right )\mathbf z_j^{(k)},
\end{equation}
where $\tilde{\mathbf A}_{ij} = \frac{1}{\sqrt{d_id_j}}$ if $(i,j)\in \mathcal E$ and 0 otherwise, and $d_i$ is instance $i$'s degree in $\mathcal G$. The diffusion iteration of Eq.~\ref{eqn-diffuse-graph} is essentially a descent step on a new energy additionally incorporating a graph-based penalty~\citep{graphkernel}, i.e., $\sum_{(i,j)\in \mathcal{E}} \|\mathbf z_i^{(k)} - \mathbf z_j^{(k)}\|_2^2$ (see Appendix~\ref{appx-inst} for details). 

\modify{\textbf{Scaling to Large Datasets.} Another advantage of \model over GNNs is the flexibility for mini-batch training. For datasets with prohibitive instance numbers that make it hard for full-batch training on a single GPU, the common practice for GNNs resorts to subgraph sampling~\cite{graphsaint} or graph clustering~\cite{clustergcn-kdd19} to reduce the overhead. These strategies, however, require extra time and tricky designs to preserve the internal structures. In contrast, thanks to the less reliance on input graphs, for training \model, we can naturally partition the dataset into random mini-batches and feed one mini-batch for one feed-forward and backward computation. The flexibility for mini-batch training also brings up convenience for parallel acceleration and federated learning. In fact, such a computational advantage is shared by Transformer-like models for cases where instances are inter-dependent, as demonstrated by the NodeFormer~\citep{wunodeformer}.}

\textbf{Connection with Existing Models.} Our theory in fact gives rise to a general diffusion framework that unifies some existing models as special cases of ours. As a non-exhaustive summary and high-level comparison, Table~\ref{tbl-existing} presents the relationships with MLP, GCN and GAT (see more elaboration in Appendix~\ref{appx-connection}). 
\modify{Specifically, the multi-layer perceptrons (MLP) can be seen as only considering the local consistency regularization in the energy function and only allowing non-zero diffusivity for self-loops. For graph convolution networks (GCN)~\citep{GCN-vallina}, it only regularizes the global consistency term constrained within 1-hop neighbors of the observed structure and simplifies $\delta$ as an identity function. The diffusivity of GCN induces information flows through observed edges. For graph attention networks (GAT)~\citep{GAT}, the energy function is similar to GCN's except that the non-linearity $\delta$ remains (as a certain specific form), and the diffusivity is computed by attention over edges.
In contrast with the above models, \model is derived from the general diffusion model that enables non-zero diffusivity values between arbitrary instance pairs.}




\begin{table}[t]
\centering
\caption{Mean and standard deviation of testing accuracy on node classification (with five different random initializations). All the models are split into groups with a comparison of non-linearity (whether the model requires activation for layer-wise transformations), PDE-solver (whether the model requires PDE-solver) and Input-G (whether the propagation purely relies on input graphs).\label{tbl-bench}}
  \vspace{-5pt}
\small
    \resizebox{\textwidth}{!}{
\begin{tabular}{@{}c|c|cccccc@{}}
\toprule
\textbf{Type} & \textbf{Model} & \textbf{Non-linearity} & \textbf{PDE-solver} & \textbf{Input-G}  & \textbf{Cora} & \textbf{Citeseer} & \textbf{Pubmed}   \\ 
\midrule
\multirow{3}{*}{Basic models} & MLP & R   & -  & -   & 56.1 $\pm$ 1.6 & 56.7 $\pm$ 1.7 & 69.8 $\pm$ 1.5  \\
& LP & -  & -  & R   & 68.2 & 42.8 & 65.8 \\
& ManiReg & R    & -  & R  & 60.4 $\pm$ 0.8 & 67.2 $\pm$ 1.6 & 71.3 $\pm$ 1.4 \\
\midrule
\multirow{8}{*}{Standard GNNs} 
& GCN & R   & -  & R   & 81.5 $\pm$ 1.3  & 71.9 $\pm$ 1.9 & 77.8 $\pm$ 2.9 \\
& GAT & R   & -  & R   & 83.0 $\pm$ 0.7 & 72.5 $\pm$ 0.7 & 79.0 $\pm$ 0.3 \\
& SGC & -  & -  & R   & 81.0 $\pm$ 0.0 & 71.9 $\pm$ 0.1 & 78.9 $\pm$ 0.0 \\
& GCN-$k$NN & R   & -  & -    & 72.2 $\pm$ 1.8 & 56.8 $\pm$ 3.2 & 74.5 $\pm$ 3.2 \\
& GAT-$k$NN & R   & -  & -    & 73.8 $\pm$ 1.7 & 56.4 $\pm$ 3.8 & 75.4 $\pm$ 1.3 \\
& Dense GAT & R   & -  & -    & 78.5 $\pm$ 2.5 & 66.4 $\pm$ 1.5 & 66.4 $\pm$ 1.5 \\
& LDS & R   & -  & -    & 83.9 $\pm$ 0.6 & \color{brown}\textbf{74.8 $\pm$ 0.3} & out-of-memory \\
& GLCN & R   & -  & -    & 83.1 $\pm$ 0.5 & 72.5 $\pm$ 0.9 & 78.4 $\pm$ 1.5 \\
\midrule
\multirow{6}{*}{Diffusion-based models} 
& GRAND-l & -  & R   & R   & 83.6 $\pm$ 1.0 & 73.4 $\pm$ 0.5 & 78.8 $\pm$ 1.7 \\
& GRAND & R   & R   & R    & 83.3 $\pm$ 1.3 & 74.1 $\pm$ 1.7 & 78.1 $\pm$ 2.1 \\
& GRAND++ & R   & R   & R    & 82.2 $\pm$ 1.1 & 73.3 $\pm$ 0.9 & 78.1 $\pm$ 0.9 \\
& GDC & R   & -  & R   & 83.6 $\pm$ 0.2 & 73.4 $\pm$ 0.3 & 78.7 $\pm$ 0.4 \\
& GraphHeat & R   & -  & R  & 83.7 & 72.5 & 80.5 \\
& DGC-Euler & -  & -  & R   & 83.3 $\pm$ 0.0 & 73.3 $\pm$ 0.1 & 80.3 $\pm$ 0.1 \\
\midrule
\multirow{3}{*}{Graph Transformers} &
NodeFormer & -  & -  & -  & 83.4 $\pm$ 0.2 & 73.0 $\pm$ 0.3 & \color{brown}\textbf{81.5 $\pm$ 0.4} \\
& {\model}-s & -  & -  & -    & \color{purple}\textbf{85.9 $\pm$ 0.4} & 73.5 $\pm$ 0.3 & \color{purple}\textbf{81.8 $\pm$ 0.3} \\
& {\model}-a & -   & -  & -    & \color{brown}\textbf{84.1 $\pm$ 0.6} & \color{purple}\textbf{75.7 $\pm$ 0.3} & 80.5 $\pm$ 1.2 \\
\bottomrule
\end{tabular}}
\vspace{-15pt}
\end{table}

\vspace{-4pt}
\section{Experiments}\label{sec:exp}
\vspace{-4pt}

We apply \model to various tasks for evaluation: 1) graph-based node classification where an input graph is given as observation; 2) image and text classification without input graphs; 3) spatial temporal dynamics prediction. In each case, we compare a different set of competing models closely associated with \model and speciﬁcally designed for the particular task. Unless otherwise stated, for datasets where input graphs are available, we incorporate them for feature propagation as is defined by Eq.~\ref{eqn-diffuse-graph}. Due to space limit, we defer details of datasets to Appendix~\ref{appx-dataset} and the implementations to Appendix~\ref{appx-implementation}. Also, we provide additional empirical results in Appendix~\ref{appx-results}.

\vspace{-4pt}
\subsection{Semi-supervised Node Classification Benchmarks}
\vspace{-4pt}

We test \model on three citation networks \texttt{Cora}, \texttt{Citeseer} and \texttt{Pubmed}. Table~\ref{tbl-bench} reports the testing accuracy. We compare with several sets of baselines linked with our model from different aspects. 1) Basic models: \emph{MLP} and two classical graph-based SSL models Label Propagation (\emph{LP})~\citep{lp-icml2003} and \emph{ManiReg}~\citep{belkin2006manireg}. 2) GNN models: \emph{SGC}~\citep{SGC-icml19}, \emph{GCN}~\citep{GCN-vallina}, \emph{GAT}~\citep{GAT}, their variants \emph{GCN-$k$NN}, \emph{GAT-$k$NN} (operating on $k$NN graphs constructed from input features) and \emph{Dense GAT} (with a densely connected graph replacing the input one), and two strong structure learning models LDS~\citep{LDS-icml19} and GLCN~\citep{jiang2019glcn}. 3) PDE graph models: the SOTA models \emph{GRAND}~\citep{grand} (with its linear variant GRAND-l) and \emph{GRAND++}~\citep{GRAND++}. 
4) Diffusion-inspired GNN models: \emph{GDC}~\citep{klicpera2019diffusion}, \emph{GraphHeat}~\citep{xu2020heat} and a recent work \emph{DGC-Euler}~\citep{wang2021dissecting}. Table~\ref{tbl-bench} shows that \model achieves the best results on three datasets with significant improvements. Also, we notice that the simple diffusivity model \model-s significantly exceeds the counterparts without non-linearity (SGC, GRAND-l and DGC-Euler) and even comes to the first on \texttt{Cora} and \texttt{Pubmed}. These results suggest that \model can serve as a very competitive encoder backbone for node-level prediction that learns inter-instance interactions for generating informative representations and boosting downstream performance.


\begin{figure}[t]
\begin{minipage}{0.33\linewidth}
    \centering
	\captionof{table}{Testing ROC-AUC for \texttt{Proteins} and Accuracy for \texttt{Pokec} on large-scale node classification datasets. $*$ denotes using mini-batch training. \label{tbl-largebench}}
	\vspace{-6pt}
    \resizebox{0.98\textwidth}{!}{
\begin{tabular}{@{}c|cc@{}}
\toprule
\textbf{Models} & \textbf{Proteins} & \textbf{Pokec}   \\ 
\midrule
MLP & 72.41 $\pm$ 0.10 & 60.15 $\pm$ 0.03 \\
LP & 74.73 & 52.73\\
SGC &49.03 $\pm$ 0.93 & 52.03 $\pm$ 0.84 \\
GCN & 74.22 $\pm$ 0.49$^*$ & 62.31 $\pm$ 1.13$^*$ \\
GAT & 75.11 $\pm$ 1.45$^*$ & 65.57 ± 0.34$^*$  \\
NodeFormer & \color{brown}\textbf{77.45 $\pm$ 1.15}$^*$ & \color{brown}\textbf{68.32 $\pm$ 0.45}$^*$ \\
\midrule
{\model}-s & \color{purple}\textbf{79.49 $\pm$ 0.44}$^*$ & \color{purple}\textbf{69.24 $\pm$ 0.76}$^*$ \\
\bottomrule
\end{tabular}}
    \end{minipage}
    \hspace{5pt}
    \begin{minipage}{0.66\linewidth}
   \subfigure[]{
    \begin{minipage}[t]{0.48\linewidth}
    \centering
    \includegraphics[width=\linewidth]{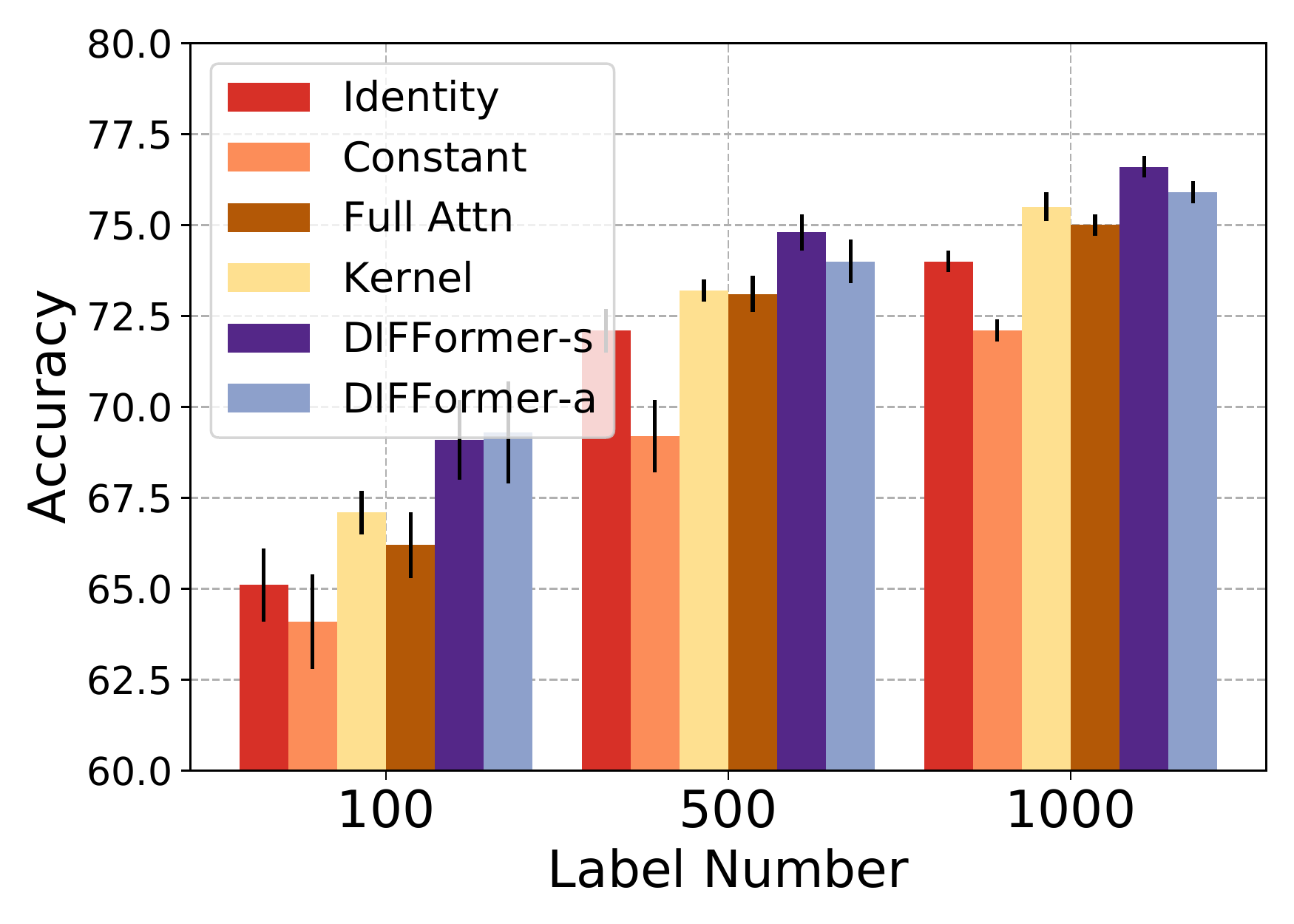}
    \label{fig-abl}
    \vspace{-10pt}
    \end{minipage}
    }
    \subfigure[]{
    \begin{minipage}[t]{0.48\linewidth}
    \centering
    \includegraphics[width=\linewidth]{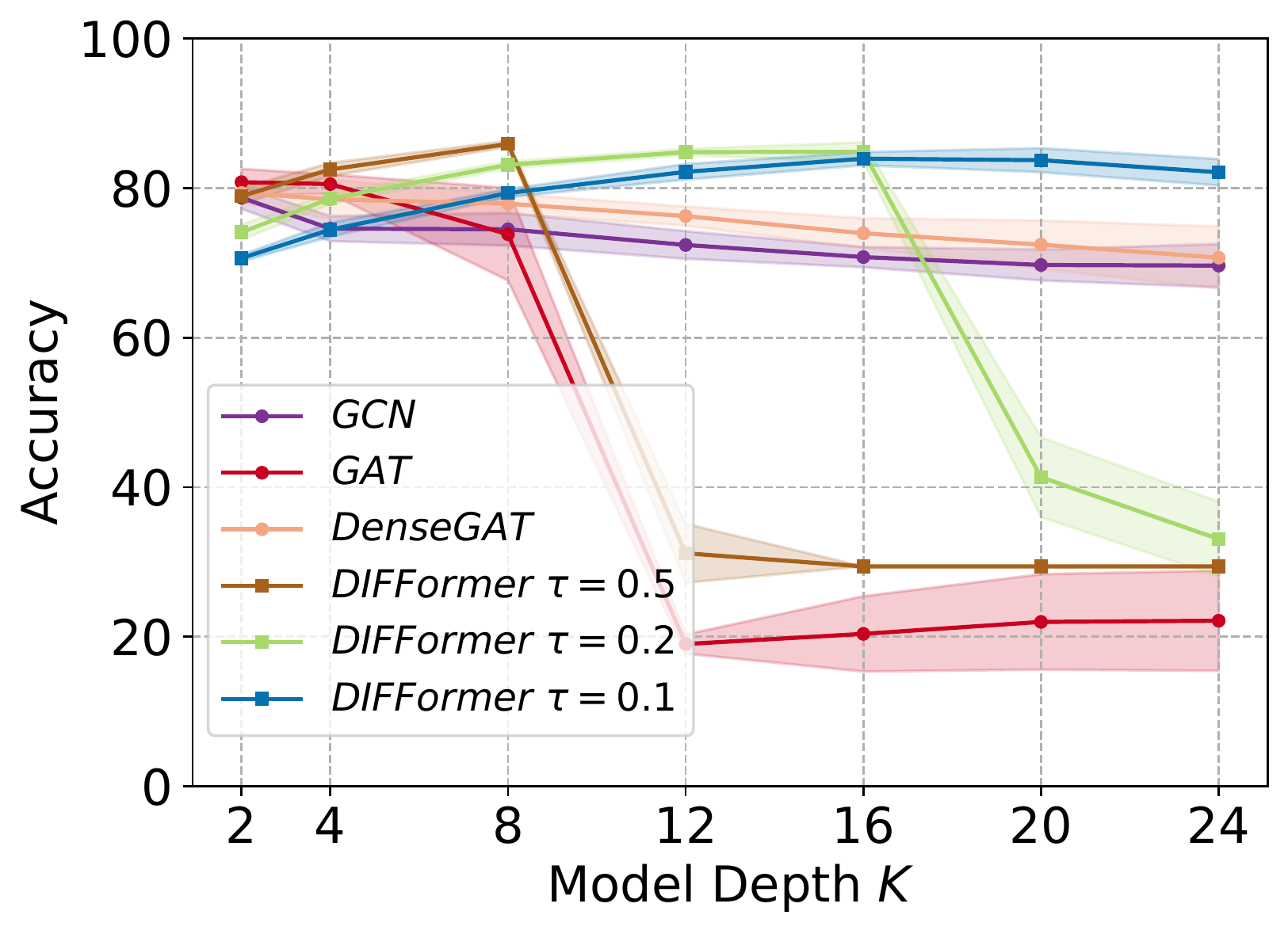}
    \label{fig-hyper}
    \vspace{-10pt}
    \end{minipage}
    }
    \vspace{-10pt}
    \caption{(a) Ablation studies w.r.t. different diffusivity function forms on \texttt{CIFAR}. (b) Impact of $K$ and $\tau$ on \texttt{Cora}.}
    \end{minipage}
    \vspace{-10pt}
\end{figure}

\begin{table}[tb!]
		\centering
		\small
		\caption{Testing accuracy for image (\texttt{CIFAR} and \texttt{STL}) and text (\texttt{20News}) classification.}
  \vspace{-5pt}
		\label{tab:image classification}
		\resizebox{0.95\textwidth}{!}{
		\begin{tabular}{c|c|ccccccc|cc}
		\midrule
		\multicolumn{2}{c|}{\textbf{Dataset}}  & MLP & LP & ManiReg  & GCN-$k$NN & GAT-$k$NN & DenseGAT & GLCN  & \model-s & \model-a  \\
		\midrule
		\multirow{3}{*}{\textbf{CIFAR}} 
		& 100 labels &65.9 ± 1.3&66.2&67.0 ± 1.9&66.7 ± 1.5&66.0 ± 2.1 & out-of-memory &66.6 ± 1.4&\color{brown}\textbf{69.1 ± 1.1}&\color{purple}\textbf{69.3 ± 1.4} \\
		& 500 labels &73.2 ± 0.4&70.6&72.6 ± 1.2&72.9 ± 0.4&72.4 ± 0.5 & out-of-memory &72.8 ± 0.5&\color{purple}\textbf{74.8 ± 0.5}&\color{brown}\textbf{74.0 ± 0.6} \\
		& 1000 labels &75.4 ± 0.6&71.9&74.3 ± 0.4&74.7 ± 0.5&74.1 ± 0.5 & out-of-memory &74.7 ± 0.3&\color{purple}\textbf{76.6 ± 0.3}&\color{brown}\textbf{75.9 ± 0.3} \\
		\midrule
		\multirow{3}{*}{\textbf{STL}} 
		& 100 labels &66.2 ± 1.4&65.2&66.5 ± 1.9&\color{brown}\textbf{66.9 ± 0.5}&66.5 ± 0.8 & out-of-memory &66.4 ± 0.8&\color{purple}\textbf{67.8 ± 1.1}&66.8 ± 1.1 \\
		& 500 labels &\color{brown}\textbf{73.0 ± 0.8}&71.8&72.5 ± 0.5&72.1 ± 0.8&72.0 ± 0.8 & out-of-memory &72.4 ± 1.3&\color{purple}\textbf{73.7 ± 0.6}&72.9 ± 0.7 \\
		& 1000 labels &75.0 ± 0.8&72.7&74.2 ± 0.5&73.7 ± 0.4&73.9 ± 0.6 & out-of-memory &74.3 ± 0.7&\color{purple}\textbf{76.4 ± 0.5}&\color{brown}\textbf{75.3 ± 0.6} \\
		\midrule
		\multirow{3}{*}{\textbf{20News}} 
		& 1000 labels &54.1 ± 0.9&55.9&56.3 ± 1.2&56.1 ± 0.6&55.2 ± 0.8 & 54.6 ± 0.2 &56.2 ± 0.8&\color{brown}\textbf{57.7 ± 0.3}&\color{purple}\textbf{57.9 ± 0.7} \\
		& 2000  labels &57.8 ± 0.9&57.6&60.0 ± 0.8&60.6 ± 1.3&59.1 ± 2.2 & 59.3 ± 1.4 &60.2 ± 0.7&\color{brown}\textbf{61.2 ± 0.6}&\color{purple}\textbf{61.3 ± 1.0}  \\
		& 4000 labels &62.4 ± 0.6&59.5&63.6 ± 0.7&64.3 ± 1.0&62.9 ± 0.7 & 62.4 ± 1.0 &64.1 ± 0.8&\color{purple}\textbf{65.9 ± 0.8}&\color{brown}\textbf{64.8 ± 1.0}  \\
		\midrule
	\end{tabular}				
		}
\vspace{-15pt}
	\end{table}
 
\vspace{-4pt}
\subsection{Large-Scale Node Classification Graphs}
\vspace{-4pt}

We also consider two large-scale graph datasets \texttt{ogbn-Proteins}, a multi-task protein-protein interaction network, and \texttt{Pokec}, a social network. 
Table~\ref{tbl-largebench} presents the results. Due to the dataset size (0.13M/1.63M nodes for two graphs) and scalability issues that many of the competitors in Table~\ref{tbl-bench} as well as \model-a would potentially experience, we only compare \model-s with standard GNNs. In particular, we found GCN/GAT/\model-s are still hard for full-graph training on a single V100 GPU with 16GM memory. We thus consider mini-batch training with batch size 10K/100K for \texttt{Proteins}/\texttt{Pokec}. 
We found that \model outperforms common GNNs by a large margin, which suggests its desired efficacy on large datasets. As mentioned previously, we prioritize the efficacy of \model as a general encoder backbone for solving node-level prediction tasks on large graphs. While there are quite a few practical tricks shown to be effective for training GNNs for this purpose, e.g., hop-wise attention~\citep{adadiffusion-2022} or various label re-use strategies, these efforts are largely orthogonal to our contribution here and can be applied to most any model to further boost performance. For further investigation, we supplement more results using different mini-batch sizes for training and study its impact on testing performance in Appendix~\ref{appx-batch}. Furthermore, we compare the training time and memory costs in Appendix~\ref{appx-time} where we found that \model-s is about 6 times faster than GAT and 39 times faster than DenseGAT on \texttt{Pokec}, which suggests superior scalability and efficiency of \model-s on large graphs.



\vspace{-4pt}

\subsection{Image and Text Classification with Low Label Rates}
\vspace{-4pt}

We next conduct experiments on \texttt{CIFAR-10}, \texttt{STL-10} and \texttt{20News-Group} datasets to test \model for standard classification tasks with limited label rates.
For \texttt{20News} provided by \cite{pedregosa2011scikit}, we take 10 topics and use words with TF-IDF more than 5 as features. 
For \texttt{CIFAR} and \texttt{STL}, two public image datasets, we first use the self-supervised approach SimCLR~\citep{chen2020simple} (that does not use labels for training) to train a ResNet-18 for extracting the feature maps as input features of instances.
These datasets contain no graph structure, so we use $k$NN to construct a graph over input features for GNN competitors and do \emph{not} use input graphs for \model.

Table~\ref{tab:image classification} reports the testing accuracy of \model and competitors including MLP, ManiReg, GCN-$k$NN, GAT-$k$NN, DenseGAT and GLCN. 
Two \model models perform much better than MLP in nearly all cases, suggesting the effectiveness of learning the inter-dependence over instances. Besides, \model yields large improvements over GCN and GAT which are in some sense limited by the handcrafted graph that leads to sub-optimal propagation. Moreover, \model significantly outperforms GLCN, a strong baseline that learns new (static) graph structures, which demonstrates the superiority of our evolving diffusivity that can adapt to different layers.

\vspace{-4pt}
\subsection{Spatial-Temporal Dynamics Prediction}
\vspace{-4pt}

We consider three spatial-temporal datasets with details in Appendix~\ref{appx-dataset}. Each dataset consists of a series of graph snapshots where nodes are treated as instances and each of them has a integer label (e.g., reported cases for \texttt{Chickenpox} or \texttt{Covid}). The task is to predict the labels of one snapshot based on the previous ones. 
Table~\ref{tab:spatial_temporal_results} compares testing MSE of four \model variants (here \model-s w/o g denotes the model \model-s without using input graphs) against baselines. We can see that two \model variants without input graphs even outperform the counterparts using input structures in four out of six cases. This implies that our diffusivity estimation module could learn useful structures for informed prediction, and the input structure might not always contribute to positive effect. In fact, for temporal dynamics, the underlying relations that truly influence the trajectory evolution can be much complex and the observed relations could be unreliable with missing or noisy links, in which case GNN models relying on input graphs may perform undesirably. Compared to the competitors, our models rank the first with significant improvements. 

\begin{table}[t!]
\centering
\caption{Mean and standard deviation of MSE on spatial-temporal prediction datasets.}
  \vspace{-5pt}
\label{tab:spatial_temporal_results}
\small
    \resizebox{1.0\textwidth}{!}{
\begin{tabular}{c|cccccc|cccc}
\toprule
\textbf{Dataset}& MLP&GCN&  GAT&Dense GAT&GAT-$k$NN&GCN-$k$NN&{\model}-s& {\model}-a& {\model}-s w/o g& {\model}-a w/o g \\ 
\midrule
\multirow{2}{*}{\textbf{Chickenpox}}& 0.924&0.923&0.924&0.935&0.926&0.936&\color{purple}\textbf{0.914} & \color{brown}\textbf{0.915} & 0.916 & 0.916
\\
& ($\pm$0.001)&($\pm$0.001)&($\pm$0.002)&($\pm$0.005)&($\pm$0.004)&($\pm$0.004)&\color{purple}\textbf{(0.006)} & \color{brown}\textbf{(0.008)} & (0.006) & (0.006)\\
\midrule
\multirow{2}{*}{\textbf{Covid}}& 0.956&1.080&1.052&1.524&0.861&1.475& 0.779 & \color{brown}\textbf{0.757} & 0.779 & \color{purple}\textbf{0.741}\\
& ($\pm$0.198)&($\pm$0.162)&($\pm$0.336)&($\pm$0.319)&($\pm$0.123)&($\pm$0.560)& (0.037) & \color{brown}\textbf{(0.048)} & (0.028) & \color{purple}\textbf{(0.052)}\\
\midrule
\multirow{2}{*}{\textbf{WikiMath}}& 1.073&1.292&1.339&0.826&0.882&1.023&0.731 & 0.763 & \color{brown}\textbf{0.727} & \color{purple}\textbf{0.716}\\
& ($\pm$0.042)&($\pm$0.125)&($\pm$0.073)&($\pm$0.070)&($\pm$0.015)&($\pm$0.058)&(0.007) & (0.020) & \color{brown}\textbf{(0.025)} & \color{purple}\textbf{(0.030)}\\
\bottomrule
\end{tabular}}
\vspace{-15pt}
\end{table}

\vspace{-4pt}

\subsection{Further Results and Discussions}
\vspace{-4pt}

\textbf{How do different diffusivity functions perform?} Figure~\ref{fig-abl} compares \model with four variants using other diffusivity functions that have no essential connection with energy minimization: 1) \emph{Identity} sets $\mathbf S^{(k)}$ as a fixed identity matrix; 2) \emph{Constant} fixes $\mathbf S^{(k)}$ as all-one constant matrix; 3) \emph{Full Attn} parameterizes $\mathbf S^{(k)}$ by attention networks~\citep{transformer}; 4) \emph{Kernel} adopts Gaussian kernel for computing $\mathbf S^{(k)}$. More results on other datasets are in Appendix~\ref{appx-abl} and they consistently show that our adopted diffusivity forms produce superior performance, which verifies the effectiveness of our diffusivity designs derived from minimization of a principled energy.

\textbf{How do model depth and step size impact the performance?} We discuss the influence of model depth $K$ and step size $\tau$ on \texttt{Cora} in Fig.~\ref{fig-hyper}. More results are deferred to Appendix~\ref{appx-hyper}. The curves indicate that the step size impacts the model performance as well as the variation trend w.r.t. model depth. When using larger step size (e.g., $\tau=0.5$), the model can yield superior performance with shallow layers; when using smaller step size (e.g., $\tau=0.1$), the model needs to stack deep layers for performing competitively. The reason could be that larger $\tau$ contributes to more concentration on global information from other instances in each diffusion iteration, which brings up more beneficial information with shallow layers yet could lead to instability for deep layers. And, using small $\tau$ can enhance the model's insensitivity to deep model depth, which often leads to the performance degradation of other baselines like GCN and GAT.



\vspace{-4pt}

\section{Conclusions}
\vspace{-4pt}

\looseness=-1 This paper proposes an energy-driven geometric diffusion model with latent diffusivity function for data representations. The model encodes all the instances as a whole into evolving states aimed at minimizing a principled energy as implicit regularization. We further design two practical implementations with enough scalability and capacity for learning complex interactions over the underlying data geometry. Extensive experiments demonstrate the effectiveness and superiority of the model.

\bibliographystyle{iclr2023_conference}
\bibliography{ref}

\begin{thebibliography}{61}
\providecommand{\natexlab}[1]{#1}
\providecommand{\url}[1]{\texttt{#1}}
\expandafter\ifx\csname urlstyle\endcsname\relax
  \providecommand{\doi}[1]{doi: #1}\else
  \providecommand{\doi}{doi: \begingroup \urlstyle{rm}\Url}\fi

\bibitem[Atwood \& Towsley(2016)Atwood and Towsley]{atwood2016diffusion}
James Atwood and Don Towsley.
\newblock Diffusion-convolutional neural networks.
\newblock In \emph{Advances in neural information processing systems}, 2016.

\bibitem[Belkin et~al.(2006)Belkin, Niyogi, and Sindhwani]{belkin2006manireg}
Mikhail Belkin, Partha Niyogi, and Vikas Sindhwani.
\newblock Manifold regularization: A geometric framework for learning from
  labeled and unlabeled examples.
\newblock \emph{Journal of machine learning research}, 7\penalty0 (11), 2006.

\bibitem[Bojchevski et~al.(2020)Bojchevski, Klicpera, Perozzi, Kapoor, Blais,
  R{\'{o}}zemberczki, Lukasik, and G{\"{u}}nnemann]{scale}
Aleksandar Bojchevski, Johannes Klicpera, Bryan Perozzi, Amol Kapoor, Martin
  Blais, Benedek R{\'{o}}zemberczki, Michal Lukasik, and Stephan
  G{\"{u}}nnemann.
\newblock Scaling graph neural networks with approximate pagerank.
\newblock In \emph{{ACM} {SIGKDD} Conference on Knowledge Discovery and Data
  Mining}, 2020.

\bibitem[Bronstein et~al.(2017)Bronstein, Bruna, LeCun, Szlam, and
  Vandergheynst]{geometriclearning-2017}
Michael~M. Bronstein, Joan Bruna, Yann LeCun, Arthur Szlam, and Pierre
  Vandergheynst.
\newblock Geometric deep learning: going beyond euclidean data.
\newblock \emph{IEEE Signal Process. Mag}, 34:\penalty0 18--42, 2017.

\bibitem[Chamberlain et~al.(2021{\natexlab{a}})Chamberlain, Rowbottom,
  Gorinova, Bronstein, Webb, and Rossi]{grand}
Ben Chamberlain, James Rowbottom, Maria~I. Gorinova, Michael~M. Bronstein,
  Stefan Webb, and Emanuele Rossi.
\newblock {GRAND:} graph neural diffusion.
\newblock In \emph{International Conference on Machine Learning (ICML)}, pp.\
  1407--1418, 2021{\natexlab{a}}.

\bibitem[Chamberlain et~al.(2021{\natexlab{b}})Chamberlain, Rowbottom, Eynard,
  Giovanni, Dong, and Bronstein]{beltrami}
Benjamin~Paul Chamberlain, James Rowbottom, Davide Eynard, Francesco~Di
  Giovanni, Xiaowen Dong, and Michael~M. Bronstein.
\newblock Beltrami flow and neural diffusion on graphs.
\newblock In \emph{Advances in Neural Information Processing Systems
  (NeurIPS)}, 2021{\natexlab{b}}.

\bibitem[Chen et~al.(2020{\natexlab{a}})Chen, Wei, Huang, Ding, and
  Li]{gcnii-icml20}
Ming Chen, Zhewei Wei, Zengfeng Huang, Bolin Ding, and Yaliang Li.
\newblock Simple and deep graph convolutional networks.
\newblock In \emph{International Conference on Machine Learning}, pp.\
  1725--1735, 2020{\natexlab{a}}.

\bibitem[Chen et~al.(2018)Chen, Rubanova, Bettencourt, and
  Duvenaud]{chen2018neuralode}
Ricky~TQ Chen, Yulia Rubanova, Jesse Bettencourt, and David~K Duvenaud.
\newblock Neural ordinary differential equations.
\newblock In \emph{Advances in neural information processing systems}, 2018.

\bibitem[Chen et~al.(2020{\natexlab{b}})Chen, Kornblith, Norouzi, and
  Hinton]{chen2020simple}
Ting Chen, Simon Kornblith, Mohammad Norouzi, and Geoffrey Hinton.
\newblock A simple framework for contrastive learning of visual
  representations.
\newblock In \emph{International conference on machine learning}, pp.\
  1597--1607, 2020{\natexlab{b}}.

\bibitem[Chen et~al.(2020{\natexlab{c}})Chen, Wu, and Zaki]{IDGL-neurips20}
Yu~Chen, Lingfei Wu, and Mohammed~J. Zaki.
\newblock Iterative deep graph learning for graph neural networks: Better and
  robust node embeddings.
\newblock In \emph{Advances in Neural Information Processing Systems},
  2020{\natexlab{c}}.

\bibitem[Chiang et~al.(2019)Chiang, Liu, Si, Li, Bengio, and
  Hsieh]{clustergcn-kdd19}
Wei{-}Lin Chiang, Xuanqing Liu, Si~Si, Yang Li, Samy Bengio, and Cho{-}Jui
  Hsieh.
\newblock Cluster-gcn: An efficient algorithm for training deep and large graph
  convolutional networks.
\newblock In \emph{{ACM} {SIGKDD} International Conference on Knowledge
  Discovery {\&} Data Mining}, pp.\  257--266, 2019.

\bibitem[Choromanski et~al.(2021)Choromanski, Likhosherstov, Dohan, Song, Gane,
  Sarl{\'{o}}s, Hawkins, Davis, Mohiuddin, Kaiser, Belanger, Colwell, and
  Weller]{performer-iclr21}
Krzysztof~Marcin Choromanski, Valerii Likhosherstov, David Dohan, Xingyou Song,
  Andreea Gane, Tam{\'{a}}s Sarl{\'{o}}s, Peter Hawkins, Jared~Quincy Davis,
  Afroz Mohiuddin, Lukasz Kaiser, David~Benjamin Belanger, Lucy~J. Colwell, and
  Adrian Weller.
\newblock Rethinking attention with performers.
\newblock In \emph{International Conference on Learning Representations}, 2021.

\bibitem[Dosovitskiy et~al.(2021)Dosovitskiy, Beyer, Kolesnikov, Weissenborn,
  Zhai, Unterthiner, Dehghani, Minderer, Heigold, Gelly, Uszkoreit, and
  Houlsby]{vit}
Alexey Dosovitskiy, Lucas Beyer, Alexander Kolesnikov, Dirk Weissenborn,
  Xiaohua Zhai, Thomas Unterthiner, Mostafa Dehghani, Matthias Minderer, Georg
  Heigold, Sylvain Gelly, Jakob Uszkoreit, and Neil Houlsby.
\newblock An image is worth 16x16 words: Transformers for image recognition at
  scale.
\newblock In \emph{International Conference on Learning Representations}, 2021.

\bibitem[Dwivedi \& Bresson(2020)Dwivedi and Bresson]{graphtransformer-2020}
Vijay~Prakash Dwivedi and Xavier Bresson.
\newblock A generalization of transformer networks to graphs.
\newblock \emph{CoRR}, abs/2012.09699, 2020.

\bibitem[Eliasof et~al.(2021)Eliasof, Haber, and Treister]{PDE-GCN}
Moshe Eliasof, Eldad Haber, and Eran Treister.
\newblock {PDE-GCN:} novel architectures for graph neural networks motivated by
  partial differential equations.
\newblock In \emph{Advances in Neural Information Processing Systems
  (NeurIPS)}, 2021.

\bibitem[Fatemi et~al.(2021)Fatemi, El~Asri, and Kazemi]{fatemi2021slaps}
Bahare Fatemi, Layla El~Asri, and Seyed~Mehran Kazemi.
\newblock Slaps: Self-supervision improves structure learning for graph neural
  networks.
\newblock In \emph{Advances in Neural Information Processing Systems}, 2021.

\bibitem[Franceschi et~al.(2019)Franceschi, Niepert, Pontil, and
  He]{LDS-icml19}
Luca Franceschi, Mathias Niepert, Massimiliano Pontil, and Xiao He.
\newblock Learning discrete structures for graph neural networks.
\newblock In \emph{International Conference on Machine Learning}, pp.\
  1972--1982, 2019.

\bibitem[Freidlin \& Wentzell(1993)Freidlin and
  Wentzell]{freidlin1993diffusion}
Mark~I Freidlin and Alexander~D Wentzell.
\newblock Diffusion processes on graphs and the averaging principle.
\newblock \emph{The Annals of probability}, pp.\  2215--2245, 1993.

\bibitem[Hamzi \& Owhadi(2021)Hamzi and Owhadi]{hamzi2021learning}
Boumediene Hamzi and Houman Owhadi.
\newblock Learning dynamical systems from data: A simple cross-validation
  perspective, part i: Parametric kernel flows.
\newblock \emph{Physica D: Nonlinear Phenomena}, 421:\penalty0 132817, 2021.

\bibitem[Hu et~al.(2020)Hu, Fey, Zitnik, Dong, Ren, Liu, Catasta, and
  Leskovec]{ogb-nips20}
Weihua Hu, Matthias Fey, Marinka Zitnik, Yuxiao Dong, Hongyu Ren, Bowen Liu,
  Michele Catasta, and Jure Leskovec.
\newblock Open graph benchmark: Datasets for machine learning on graphs.
\newblock In \emph{Advances in Neural Information Processing Systems}, 2020.

\bibitem[Ioannidis et~al.(2017)Ioannidis, Ma, Nikolakopoulos, and
  Giannakis]{graphkernel}
Vassilis~N. Ioannidis, Meng Ma, Athanasios~N. Nikolakopoulos, and Georgios~B.
  Giannakis.
\newblock Kernel-based inference of functions over graphs.
\newblock \emph{CoRR}, abs/1711.10353, 2017.

\bibitem[Jiang et~al.(2019)Jiang, Zhang, Lin, Tang, and Luo]{jiang2019glcn}
Bo~Jiang, Ziyan Zhang, Doudou Lin, Jin Tang, and Bin Luo.
\newblock Semi-supervised learning with graph learning-convolutional networks.
\newblock In \emph{IEEE/CVF conference on computer vision and pattern
  recognition}, pp.\  11313--11320, 2019.

\bibitem[Katharopoulos et~al.(2020)Katharopoulos, Vyas, Pappas, and
  Fleuret]{linformer}
Angelos Katharopoulos, Apoorv Vyas, Nikolaos Pappas, and Fran{\c{c}}ois
  Fleuret.
\newblock Transformers are rnns: Fast autoregressive transformers with linear
  attention.
\newblock In \emph{International Conference on Machine Learning}, pp.\
  5156--5165, 2020.

\bibitem[Kipf \& Welling(2017)Kipf and Welling]{GCN-vallina}
Thomas~N. Kipf and Max Welling.
\newblock Semi-supervised classification with graph convolutional networks.
\newblock In \emph{International Conference on Learning Representations
  (ICLR)}, 2017.

\bibitem[Klicpera et~al.(2019)Klicpera, Wei{\ss}enberger, and
  G{\"u}nnemann]{klicpera2019diffusion}
Johannes Klicpera, Stefan Wei{\ss}enberger, and Stephan G{\"u}nnemann.
\newblock Diffusion improves graph learning.
\newblock In \emph{Advances in neural information processing systems}, 2019.

\bibitem[Lagaris et~al.(1998)Lagaris, Likas, and
  Fotiadis]{lagaris1998artificial}
Isaac~E Lagaris, Aristidis Likas, and Dimitrios~I Fotiadis.
\newblock Artificial neural networks for solving ordinary and partial
  differential equations.
\newblock \emph{IEEE transactions on neural networks}, 9\penalty0 (5):\penalty0
  987--1000, 1998.

\bibitem[Lao et~al.(2022)Lao, Yang, Wu, and Yan]{lao2022variational}
Danning Lao, Xinyu Yang, Qitian Wu, and Junchi Yan.
\newblock Variational inference for training graph neural networks in low-data
  regime through joint structure-label estimation.
\newblock In \emph{Proceedings of the 28th ACM SIGKDD Conference on Knowledge
  Discovery and Data Mining}, pp.\  824--834, 2022.

\bibitem[Ma et~al.(2021)Ma, Liu, Zhao, Liu, Tang, and Shah]{graphdenoise}
Yao Ma, Xiaorui Liu, Tong Zhao, Yozen Liu, Jiliang Tang, and Neil Shah.
\newblock A unified view on graph neural networks as graph signal denoising.
\newblock In \emph{{ACM} International Conference on Information and Knowledge
  Management}, 2021.

\bibitem[Medvedev(2014)]{medvedev2014nonlinear}
Georgi~S Medvedev.
\newblock The nonlinear heat equation on dense graphs and graph limits.
\newblock \emph{SIAM Journal on Mathematical Analysis}, 46\penalty0
  (4):\penalty0 2743--2766, 2014.

\bibitem[Pedregosa et~al.(2011)Pedregosa, Varoquaux, Gramfort, Michel, Thirion,
  Grisel, Blondel, Prettenhofer, Weiss, Dubourg, et~al.]{pedregosa2011scikit}
Fabian Pedregosa, Ga{\"e}l Varoquaux, Alexandre Gramfort, Vincent Michel,
  Bertrand Thirion, Olivier Grisel, Mathieu Blondel, Peter Prettenhofer, Ron
  Weiss, Vincent Dubourg, et~al.
\newblock Scikit-learn: Machine learning in python.
\newblock \emph{the Journal of machine Learning research}, 12:\penalty0
  2825--2830, 2011.

\bibitem[Pham et~al.(2021)Pham, Dai, Xie, and Le]{mpl}
Hieu Pham, Zihang Dai, Qizhe Xie, and Quoc~V. Le.
\newblock Meta pseudo labels.
\newblock In \emph{{IEEE} Conference on Computer Vision and Pattern Recognition
  (CVPR)}, pp.\  11557--11568, 2021.

\bibitem[Rockafellar(1970)]{convex-1970}
R.~T Rockafellar.
\newblock Convex analysis.
\newblock \emph{Princeton University Press}, 1970.

\bibitem[Rosenberg \& Steven(1997)Rosenberg and Steven]{rosenberg1997laplacian}
Steven Rosenberg and Rosenberg Steven.
\newblock \emph{The Laplacian on a Riemannian manifold: an introduction to
  analysis on manifolds}.
\newblock Number~31. Cambridge University Press, 1997.

\bibitem[Rozemberczki et~al.(2021)Rozemberczki, Scherer, He, Panagopoulos,
  Riedel, Astefanoaei, Kiss, Beres, L{\'o}pez, Collignon,
  et~al.]{rozemberczki2021pytorch}
Benedek Rozemberczki, Paul Scherer, Yixuan He, George Panagopoulos, Alexander
  Riedel, Maria Astefanoaei, Oliver Kiss, Ferenc Beres, Guzm{\'a}n L{\'o}pez,
  Nicolas Collignon, et~al.
\newblock Pytorch geometric temporal: Spatiotemporal signal processing with
  neural machine learning models.
\newblock In \emph{Proceedings of the 30th ACM International Conference on
  Information \& Knowledge Management}, pp.\  4564--4573, 2021.

\bibitem[Scarselli et~al.(2008)Scarselli, Gori, Tsoi, Hagenbuchner, and
  Monfardini]{scarselli2008gnnearly}
Franco Scarselli, Marco Gori, Ah~Chung Tsoi, Markus Hagenbuchner, and Gabriele
  Monfardini.
\newblock The graph neural network model.
\newblock \emph{IEEE transactions on neural networks}, 20\penalty0
  (1):\penalty0 61--80, 2008.

\bibitem[Sen et~al.(2008)Sen, Namata, Bilgic, Getoor, Gallagher, and
  Eliassi{-}Rad]{Sen08collectiveclassification}
Prithviraj Sen, Galileo Namata, Mustafa Bilgic, Lise Getoor, Brian Gallagher,
  and Tina Eliassi{-}Rad.
\newblock Collective classification in network data.
\newblock \emph{{AI} Mag.}, 29\penalty0 (3):\penalty0 93--106, 2008.

\bibitem[Sun et~al.(2022)Sun, Hu, Gu, Chen, and Yang]{adadiffusion-2022}
Chuxiong Sun, Jie Hu, Hongming Gu, Jinpeng Chen, and Mingchuan Yang.
\newblock Adaptive graph diffusion networks.
\newblock \emph{CoRR}, abs/2012.15024, 2022.

\bibitem[Thorpe et~al.(2022)Thorpe, Xia, Nguyen, Strohmer, Bertozzi, Osher, and
  Wang]{GRAND++}
Matthew Thorpe, Hedi Xia, Tan Nguyen, Thomas Strohmer, Andrea~L. Bertozzi,
  Stanley~J. Osher, and Bao Wang.
\newblock {GRAND++:} graph neural diffusion with a source term.
\newblock In \emph{International Conference on Learning Representations
  (ICLR)}, 2022.

\bibitem[Tsitsulin et~al.(2018)Tsitsulin, Mottin, Karras, Bronstein, and
  M{\"{u}}ller]{NetLSD}
Anton Tsitsulin, Davide Mottin, Panagiotis Karras, Alexander~M. Bronstein, and
  Emmanuel M{\"{u}}ller.
\newblock Netlsd: Hearing the shape of a graph.
\newblock In \emph{{ACM} {SIGKDD} International Conference on Knowledge
  Discovery {\&} Data Mining}, pp.\  2347--2356, 2018.

\bibitem[Vaswani et~al.(2017)Vaswani, Shazeer, Parmar, Uszkoreit, Jones, Gomez,
  Kaiser, and Polosukhin]{transformer}
Ashish Vaswani, Noam Shazeer, Niki Parmar, Jakob Uszkoreit, Llion Jones,
  Aidan~N. Gomez, Lukasz Kaiser, and Illia Polosukhin.
\newblock Attention is all you need.
\newblock In \emph{Advances in Neural Information Processing Systems}, pp.\
  5998--6008, 2017.

\bibitem[Velickovic et~al.(2018)Velickovic, Cucurull, Casanova, Romero,
  Li{\`{o}}, and Bengio]{GAT}
Petar Velickovic, Guillem Cucurull, Arantxa Casanova, Adriana Romero, Pietro
  Li{\`{o}}, and Yoshua Bengio.
\newblock Graph attention networks.
\newblock In \emph{International Conference on Learning Representations
  (ICLR)}, 2018.

\bibitem[Wang et~al.(2021)Wang, Wang, Yang, and Lin]{wang2021dissecting}
Yifei Wang, Yisen Wang, Jiansheng Yang, and Zhouchen Lin.
\newblock Dissecting the diffusion process in linear graph convolutional
  networks.
\newblock In \emph{Advances in Neural Information Processing Systems}, 2021.

\bibitem[Wang et~al.(2019)Wang, Sun, Liu, Sarma, Bronstein, and
  Solomon]{pointcloud-19}
Yue Wang, Yongbin Sun, Ziwei Liu, Sanjay~E. Sarma, Michael~M. Bronstein, and
  Justin~M. Solomon.
\newblock Dynamic graph {CNN} for learning on point clouds.
\newblock \emph{{ACM} Trans. Graph.}, 38\penalty0 (5):\penalty0 146:1--146:12,
  2019.

\bibitem[Wang et~al.(2023)Wang, Yi, Liu, Wang, and Jin]{wang2023acmp}
Yuelin Wang, Kai Yi, Xinliang Liu, Yu~Guang Wang, and Shi Jin.
\newblock {ACMP}: Allen-cahn message passing with attractive and repulsive
  forces for graph neural networks.
\newblock In \emph{International Conference on Learning Representations}, 2023.

\bibitem[Weston et~al.(2012)Weston, Ratle, Mobahi, and
  Collobert]{weston2012semiemb}
Jason Weston, Fr{\'e}d{\'e}ric Ratle, Hossein Mobahi, and Ronan Collobert.
\newblock Deep learning via semi-supervised embedding.
\newblock In \emph{Neural networks: Tricks of the trade}, pp.\  639--655.
  Springer, 2012.

\bibitem[Wu et~al.(2019)Wu, Jr., Zhang, Fifty, Yu, and Weinberger]{SGC-icml19}
Felix Wu, Amauri H.~Souza Jr., Tianyi Zhang, Christopher Fifty, Tao Yu, and
  Kilian~Q. Weinberger.
\newblock Simplifying graph convolutional networks.
\newblock In \emph{International Conference on Machine Learning}, pp.\
  6861--6871, 2019.

\bibitem[Wu et~al.(2022{\natexlab{a}})Wu, Zhang, Yan, and Wipf]{eerm}
Qitian Wu, Hengrui Zhang, Junchi Yan, and David Wipf.
\newblock Handling distribution shifts on graphs: An invariance perspective.
\newblock In \emph{International Conference on Learning Representations},
  2022{\natexlab{a}}.

\bibitem[Wu et~al.(2022{\natexlab{b}})Wu, Zhao, Li, Wipf, and
  Yan]{wunodeformer}
Qitian Wu, Wentao Zhao, Zenan Li, David Wipf, and Junchi Yan.
\newblock Nodeformer: A scalable graph structure learning transformer for node
  classification.
\newblock In \emph{Advances in Neural Information Processing Systems},
  2022{\natexlab{b}}.

\bibitem[Xu et~al.(2020)Xu, Shen, Cao, Cen, and Cheng]{xu2020heat}
Bingbing Xu, Huawei Shen, Qi~Cao, Keting Cen, and Xueqi Cheng.
\newblock Graph convolutional networks using heat kernel for semi-supervised
  learning.
\newblock \emph{arXiv preprint arXiv:2007.16002}, 2020.

\bibitem[Yang et~al.(2022)Yang, Wu, and Yan]{yang2022geometric}
Chenxiao Yang, Qitian Wu, and Junchi Yan.
\newblock Geometric knowledge distillation: Topology compression for graph
  neural networks.
\newblock In \emph{Advances in Neural Information Processing Systems}, 2022.

\bibitem[Yang et~al.(2023)Yang, Wu, Wang, and Yan]{yang2023graph}
Chenxiao Yang, Qitian Wu, Jiahua Wang, and Junchi Yan.
\newblock Graph neural networks are inherently good generalizers: Insights by
  bridging {GNN}s and {MLP}s.
\newblock In \emph{International Conference on Learning Representations}, 2023.

\bibitem[Yang et~al.(2021)Yang, Liu, Wang, Zhou, Gan, Wei, Zhang, Huang, and
  Wipf]{RWLS-icml21}
Yongyi Yang, Tang Liu, Yangkun Wang, Jinjing Zhou, Quan Gan, Zhewei Wei, Zheng
  Zhang, Zengfeng Huang, and David Wipf.
\newblock Graph neural networks inspired by classical iterative algorithms.
\newblock In \emph{International Conference on Machine Learning}, pp.\
  11773--11783, 2021.

\bibitem[Yang et~al.(2016)Yang, Cohen, and Salakhutdinov]{Panetoid-icml19}
Zhilin Yang, William~W. Cohen, and Ruslan Salakhutdinov.
\newblock Revisiting semi-supervised learning with graph embeddings.
\newblock In \emph{International Conference on Machine Learning (ICML)}, pp.\
  40--48, 2016.

\bibitem[Ying et~al.(2021)Ying, Cai, Luo, Zheng, Ke, He, Shen, and
  Liu]{graphformer-neurips21}
Chengxuan Ying, Tianle Cai, Shengjie Luo, Shuxin Zheng, Guolin Ke, Di~He,
  Yanming Shen, and Tie{-}Yan Liu.
\newblock Do transformers really perform bad for graph representation?
\newblock In \emph{Advances in Neural Information Processing Systems}, 2021.

\bibitem[Zaheer et~al.(2020)Zaheer, Guruganesh, Dubey, Ainslie, Alberti,
  Onta{\~{n}}{\'{o}}n, Pham, Ravula, Wang, Yang, and Ahmed]{bigbird}
Manzil Zaheer, Guru Guruganesh, Kumar~Avinava Dubey, Joshua Ainslie, Chris
  Alberti, Santiago Onta{\~{n}}{\'{o}}n, Philip Pham, Anirudh Ravula, Qifan
  Wang, Li~Yang, and Amr Ahmed.
\newblock Big bird: Transformers for longer sequences.
\newblock In \emph{Advances in Neural Information Processing Systems}, 2020.

\bibitem[Zeng et~al.(2020)Zeng, Zhou, Srivastava, Kannan, and
  Prasanna]{graphsaint}
Hanqing Zeng, Hongkuan Zhou, Ajitesh Srivastava, Rajgopal Kannan, and Viktor~K.
  Prasanna.
\newblock Graphsaint: Graph sampling based inductive learning method.
\newblock In \emph{International Conference on Learning Representations}, 2020.

\bibitem[Zhang et~al.(2020)Zhang, Zhang, Xia, and Sun]{graphbert-2020}
Jiawei Zhang, Haopeng Zhang, Congying Xia, and Li~Sun.
\newblock Graph-bert: Only attention is needed for learning graph
  representations.
\newblock \emph{CoRR}, abs/2001.05140, 2020.

\bibitem[Zhang et~al.(2022)Zhang, Wu, Yan, Zhao, and Han]{zhang2022scalegcn}
Tianqi Zhang, Qitian Wu, Junchi Yan, Yunan Zhao, and Bing Han.
\newblock Scalegcn: Efficient and effective graph convolution via channel-wise
  scale transformation.
\newblock \emph{IEEE Transactions on Neural Networks and Learning Systems},
  2022.

\bibitem[Zhang et~al.(2019)Zhang, Pal, Coates, and
  {\"{U}}stebay]{Bayesstruct-aaai19}
Yingxue Zhang, Soumyasundar Pal, Mark Coates, and Deniz {\"{U}}stebay.
\newblock Bayesian graph convolutional neural networks for semi-supervised
  classification.
\newblock In \emph{{AAAI} Conference on Artificial Intelligence}, pp.\
  5829--5836, 2019.

\bibitem[Zhou et~al.(2004)Zhou, Bousquet, Lal, Weston, and
  Sch{\"{o}}lkopf]{globallocal-2003}
Dengyong Zhou, Olivier Bousquet, Thomas~Navin Lal, Jason Weston, and Bernhard
  Sch{\"{o}}lkopf.
\newblock Learning with local and global consistency.
\newblock In \emph{Advances in Neural Information Processing Systems}, pp.\
  321--328, 2004.

\bibitem[Zhu et~al.(2003)Zhu, Ghahramani, and Lafferty]{lp-icml2003}
Xiaojin Zhu, Zoubin Ghahramani, and John~D. Lafferty.
\newblock Semi-supervised learning using gaussian fields and harmonic
  functions.
\newblock In \emph{International Conference on Machine Learning (ICML)}, pp.\
  912--919, 2003.

\end{thebibliography}


\appendix

\section{Further Related Works and Connection with Ours}

We discuss more related works that associate with ours from different aspects to properly position this paper with different areas. Based on this, we further shed more lights on the technical contributions of our work and its potential impact in different communities.

\subsection{Neural Diffusion Models} 

\textbf{PDE-based Learning.} The diffusion-based learning has gained increasing research interests, as the continuous dynamics can serve as an inductive bias incorporated with prior knowledge of the tasks at hand. One category directly solves a continuous process of differential equations~\citep{lagaris1998artificial,chen2018neuralode}, e.g.,~\cite{grand} and its follow-ups~\citep{beltrami,GRAND++} reveal the analogy between the discretization of diffusion process and GNNs' feedforward rules, and devise new (continuous) models on graphs whose training requires PDE-solving tools. A concurrent work \citep{wang2023acmp} explores how to derive neural networks from gradient flows and proposes Allen-Cahn Message Passing that combines attractive and repulsive effects.

\textbf{PDE-inspired Learning.} Another category is PDE-inspired learning using the diffusion perspective as principled guidelines on top of which new (discrete) neural network-based approaches are designed for node classification~\citep{atwood2016diffusion,klicpera2019diffusion,xu2020heat}, graph comparison~\citep{NetLSD}, and geometric knowledge distillation~\citep{yang2022geometric}.
Our work leans on PDE-inspired learning and introduce a new diffusion model that is implicitly defined as minimizing a regularized energy. Our theory also reveals the underlying equivalence between the numerical iterations of the diffusion process and unfolding the minimization dynamics of a corresponding energy. \modify{The new results can serve to extend the class of diffusion process and illuminate its fundamental connection with energy optimization systems. More importantly, as we will in the maintext, such a new perspective brings up a unifying view for existing models like MLP and GNNs.}

\subsection{Theoretical Interpretations of GNNs} 

\textbf{Optimization-induced Models.} As GNNs have proven to be powerful encoders for modeling data with geometry, there arise quite a few studies that aim at understanding the efficacy of GNNs with respect to node representations yielded by the propagation. For instance, \cite{RWLS-icml21} founds that the layer-wise updates of common GNNs can be derived from a classical iterative algorithm solving an optimization problem defined over an energy. Furthermore, \cite{graphdenoise} points out the relationship between GNNs' message passing rules and the graph denoising problem. These works interpret GNNs as optimization-induced models. \modify{In our model, the diffusion process is implicitly defined by an optimization problem and our theory further reveals the fundamental equivalence between diffusion iterations and energy optimization dynamics, which can bridge two schools of thinking and facilitate the understandings from both perspectives.}

\textbf{Generalization of GNNs.} Another line of works turn attention to GNNs' generalization capability. As increasing evidence shows that GNNs are sensitive to distribution shifts~\citep{eerm}, it becomes urgent to investigate and figure out how to improve GNNs' generalization. A concurrent work~\citep{yang2023graph} identifies an intriguing phenomenon that GNNs without propagation (i.e., an MLP architecture) during training can still yield competitive or even close performance at inference time to standard GNNs (using feature propagation at both training and inference stage). This suggests that the propagation design contributes to better generalization instead of the model capacity, and the former plays as a critical factor that makes GNNs a more powerful encoder than MLP. This result can further interprets the success of \model: the new propagation of \model that aggregates other nodes' embeddings can enhance the generalization at inference time on testing data.

\subsection{Transformers} 

\textbf{General Transformers.} The key design of Transformers lies in the (all-pair) attention mechanism that captures the interactions among tokens of an input sequence. The attention design is originally motivated by the inter-dependence among words of a sentence in NLP tasks~\citep{transformer}. Furthermore, a surge of follow-up works extend the attention mechanisms for capturing the inter-dependence among patches of an image~\citep{vit}, atoms of a molecule~\citep{graphtransformer-2020}, etc. 
From different aspects, our work explores a Transformer model that targets learning inter-dependence among instances in a dataset.

\textbf{Efficient Transformers.} When the length of inputs goes large,  the quadratic complexity of all-pair attention becomes the computational bottleneck of Transformers. To address the long sequences in NLP tasks, quite a few recent works propose various strategies via, e.g., Softmax kernel~\citep{performer-iclr21}, local-global attention~\citep{bigbird}, and low-rank approximation~\citep{linformer}. In our problem setting, if one treats the whole dataset as an input `sequence' for the Transformer model, then the length of such a sequence could go to even million-level, which poses demanding scalability challenges. Our model \model-s achieves strictly linear complexity w.r.t. $N$ through introducing a new attention function that keeps a simple form yet accommodates the influence among arbitrary node pairs.

\textbf{Transformers on Graphs.} There is also increasing research interest on building powerful Transformers for graph-structured data, due to the inherent good ability of Transformer models for capturing long-range dependence and potential node-pair interactions that are unobserved in input structures. Recent works, e.g.,~\cite{graphtransformer-2020, graphbert-2020, graphformer-neurips21} explore various effective architectures for graph-level prediction (GP) tasks, e.g., molecular property prediction, whereby each graph itself generally has a label to predict and a dataset contains many graph instances. Our target problem, as mentioned in Section~\ref{sec:related}, can be seen an embodiment of node-level prediction (NP) studied in graph learning community. These two problems are typically tackled separately in the literature~\citep{ogb-nips20} with disparate technical considerations. This is because input instances are inter-dependent in NP (due to the instance interactions involved in the data-generating process), while in GP tasks the instances can be treated as IID samples. For node-level prediction, a recent work \citep{wunodeformer} proposes to leverage the random feature map to construct kernelized Gumbel-Softmax-based message passing that can achieve all-pair feature propagation with linear complexity. 

\subsection{Graph Neural Networks} 

\textbf{Scalable Graph Neural Networks.} Due to the large graph sizes that could be frequently encountered in graph-based predictive tasks (particularly NP), many recent works focus on designing scalable GNNs. From different technical aspects, these works can be generally grouped into graph sampling-based partition~\citep{graphsaint,clustergcn-kdd19}, approximation-based propagation~\citep{scale} and simplified architectures~\citep{SGC-icml19,zhang2022scalegcn}. In contrast with them, \model can scale to medium-sized graphs with the linear complexity, and to large-scale graphs with mini-batch training that is simple, stable and also flexible for balancing efficiency and precision with proper mini-batch sizes. The later is allowed by the all-pair message passing schemes that do not rely on input structures (the input graphs play an auxiliary role in the model). 

\textbf{Graph Structure Learning.} Learning latent graph structures is a well-established research problem in graph learning community. Motivated by the observation that input graphs can often be unreliable or unavailable, based on which the message passing of GNNs could yield undesired results, graph structure learning aims at learning adaptive structures that can boost GNNs towards better representations and downstream prediction~\citep{LDS-icml19,IDGL-neurips20,jiang2019glcn,fatemi2021slaps,lao2022variational}. Since graph structure learning requires estimation for $N^2$ potential edges that connect all node pairs in a dataset, the time and space complexity of most existing models are at least quadratic w.r.t. node numbers, which is prohibitive for large graphs. The recent work~\citep{wunodeformer} proposes a Transformer model with linear complexity that can scale graph structure learning to graphs with millions of nodes. The proposed model \model-s can be another powerful approach for learning latent structures in large-scale systems, without sacrificing the accommodation of all-pair interactions at each layer.

\section{Proof for Theorem~\ref{thm-main}}\label{appx-proof}

First of all, we can convert the minimization of Eq.~\ref{eqn-energy} into a minimization of its variational upper bound, shown in the following proposition.
\begin{proposition}\label{prop1}
    The energy function $E(\mathbf Z, k; \delta)$ is upper bounded by
    \begin{equation}\label{eqn-prop1-bound}
        \tilde E(\mathbf Z, k; \{\omega_{ij}\}, \tilde \delta) = \|\mathbf Z - \mathbf Z^{(k)}\|_{\mathcal F}^2 + \lambda \left[\sum_{i,j} \omega_{ij} \|\mathbf z_i - \mathbf z_j\|_2^2 - \tilde \delta(\omega_{ij})\right ],
    \end{equation}
    where $\tilde \delta$ is the concave conjugate of $\delta$, and the equality holds if and only if the variational parameters satisfy
    \begin{equation}\label{eqn-prop1-cond}
        \omega_{ij} = \left. \frac{\partial \delta(z^2)}{\partial z^2} \right |_{z = \|\mathbf z_i - \mathbf z_j\|_2}.
    \end{equation}
\end{proposition}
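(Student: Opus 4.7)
The plan is to apply a Fenchel-type variational bound to each summand $\delta(\|\mathbf z_i - \mathbf z_j\|_2^2)$ inside $E$. Since $\delta$ is concave on the interval of interest, its concave conjugate is well-defined by $\tilde\delta(\omega) := \inf_{u \ge 0}\{\omega u - \delta(u)\}$. By the defining property of the infimum, for every $u \ge 0$ and every $\omega$ we have the pointwise upper bound
\begin{equation*}
    \delta(u) \;\le\; \omega\, u - \tilde\delta(\omega),
\end{equation*}
with equality precisely when the infimum defining $\tilde\delta(\omega)$ is attained at $u$. Differentiating $\omega u - \delta(u)$ in $u$ and using concavity of $\delta$, the attainment condition reduces to $\omega = \partial\delta(u)/\partial u$. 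Since $\delta$ is non-decreasing on the relevant interval, the resulting $\omega$ is non-negative, so the variational parameters $\omega_{ij}$ inherit a meaningful sign.

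Next I would instantiate this inequality with $u = \|\mathbf z_i - \mathbf z_j\|_2^2$ and $\omega = \omega_{ij}$ for every pair $(i,j)$, giving
\begin{equation*}
    \delta(\|\mathbf z_i - \mathbf z_j\|_2^2) \;\le\; \omega_{ij}\,\|\mathbf z_i - \mathbf z_j\|_2^2 - \tilde\delta(\omega_{ij}).
\end{equation*}
Summing these inequalities over all pairs, multiplying by $\lambda$, and adding the term $\|\mathbf Z - \mathbf Z^{(k)}\|_{\mathcal F}^2$ (which is independent of the variational parameters and common to both $E$ and $\tilde E$) directly yields $E(\mathbf Z, k; \delta) \le \tilde E(\mathbf Z, k; \{\omega_{ij}\}, \tilde\delta)$, which is exactly \eqref{eqn-prop1-bound}.

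Finally, the equality condition follows by tracking when each pairwise bound is tight: equality in the Fenchel inequality holds iff $\omega_{ij} = \left.\partial\delta(z^2)/\partial z^2\right|_{z^2 = \|\mathbf z_i - \mathbf z_j\|_2^2}$, and since the bound for $\tilde E$ is obtained as a sum of non-negative slacks (one per pair), the overall inequality is tight iff every pairwise slack vanishes, giving precisely condition~\eqref{eqn-prop1-cond}. The main technical care point is ensuring that the values $\|\mathbf z_i - \mathbf z_j\|_2^2$ lie in the interval on which $\delta$ is concave and non-decreasing so that the concave conjugate and the derivative identity are valid; this is guaranteed by the hypothesis on $\delta$ stated after \eqref{eqn-energy}. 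There is no substantive analytic obstacle; the content is essentially the Fenchel--Young equality for concave functions applied entrywise.
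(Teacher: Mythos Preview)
Your proposal is correct and follows essentially the same route as the paper: both arguments apply the Fenchel--Young variational representation of the concave function $\delta$ termwise, obtaining $\delta(u)\le \omega u-\tilde\delta(\omega)$ for each pair, summing, and identifying the tightness condition via the first-order optimality $\omega=\partial\delta(z^2)/\partial z^2$. Your write-up is in fact slightly more careful than the paper's (which has a harmless sign typo in its displayed inequality), and your explicit remark that the non-decreasing property of $\delta$ ensures $\omega_{ij}\ge 0$ is a nice touch the paper leaves implicit.
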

\begin{proof}
    The proof of the proposition follows the principles of convex analysis and Fenchel duality~\citep{convex-1970}. For any concave and non-decreasing function $\rho: \mathbb R^+\rightarrow \mathbb R$, one can express it as the variational decomposition
    \begin{equation}\label{eqn-proof-convex}
        \rho(z^2) = \min_{\omega \geq 0} [\omega z^2 - \tilde \rho(\omega)] \geq \omega z^2 - \tilde \rho(\omega),
    \end{equation}
    where $\omega$ is a variational parameter and $\tilde \rho$ is the concave conjugate of $\rho$.
    Eq.~\ref{eqn-proof-convex} essentially defines $\rho(z^2)$ as the minimal envelope of a series of quadratic bounds $\omega z^2 - \tilde \rho(\omega)$ defined by a different values of $\omega\geq 0$ and the upper bound is given for a fixed $\omega$ when removing the minimization operator. Based on this, we obtain the result of Eq.~\ref{eqn-prop1-bound}. In terms of the sufficient and necessary condition for equality, we note that for any optimal $\omega^*$ we have
    \begin{equation}
        \omega^* z^2 - \tilde \rho(\omega^*) = \rho(z^2),
    \end{equation}
    which is tangent to $\rho$ at $z^2$ and $\omega^* = \frac{\partial \delta(z^2)}{\partial z^2}$. We thus obtain the result of Eq.~\ref{eqn-prop1-cond}.
\end{proof}

We next continue to prove the main result of Theorem~\ref{thm-main}. According to Proposition~\ref{prop1}, we can minimize the upper bound surrogate Eq.~\ref{eqn-prop1-bound} and it becomes equivalent to a minimization of the original energy on condition that the variational parameters are given by Eq.~\ref{eqn-prop1-cond}. Then with a one-step gradient decent of Eq.~\ref{eqn-prop1-bound}, the instance states could be updated via (assuming $l$ as the index of steps and $\alpha$ as step size)
\begin{equation}\label{eqn-iter-unfolding}
\begin{split}
    \mathbf Z^{(l+1)} &= \mathbf Z^{(l)} - \alpha \left.\frac{\partial \tilde E(\mathbf Z; \delta)}{ \partial \mathbf Z}\right|_{\mathbf Z = \mathbf Z^{(l)}} \\
    & = \mathbf Z^{(l)} - \alpha\left(  \lambda (\mathbf D^{(l)} - \bm \Omega^{(l)}) \mathbf Z^{(l)}  + \mathbf Z^{(l)} - \mathbf Z^{(l)}  \right) \\
    & = \mathbf Z^{(l)} - \alpha'(\mathbf D^{(l)} - \bm \Omega^{(l)}) \mathbf Z^{(l)}
\end{split}
\end{equation}
where $\bm \Omega^{(l)} = \{\omega_{ij}^{(l)}\}_{N\times N}$, $\mathbf D^{(l)}$ denotes the diagonal degree matrix associated with $\bm \Omega^{(l)}$ and we introduce $\alpha'=\alpha\lambda$ to combine two parameters as one. Common practice to accelerate convergence adopts a positive definite preconditioner term, e.g., $(\mathbf D^{(l)})^{-1}$, to re-scale the updating gradient and the final updating form becomes
\begin{equation}\label{eqn-iter-unfolding2}
    \mathbf Z^{(l+1)} = (1 - \alpha') \mathbf Z^{(l)} + \alpha' (\mathbf D^{(l)})^{-1}  \bm \Omega^{(l)}\mathbf Z^{(l)}. 
\end{equation}

One can notice that Eq.~\ref{eqn-iter-unfolding2} shares similar forms as the numerical iteration Eq.~\ref{eqn-diffuse-iter} for the PDE diffusion system, in particular if we write Eq.~\ref{eqn-diffuse-iter} as a matrix form:
\begin{equation}\label{eqn-iter-pde}
    \mathbf Z^{(k+1)} = \left ( 1 - \tau \tilde{\mathbf D}^{(k)} \right ) \mathbf Z^{(k)} + \tau \mathbf S^{(k)} \mathbf Z^{(k)}. 
\end{equation}
where $\tilde{\mathbf D}^{(k)}$ is the degree matrix associated with $\mathbf S^{(k)}$. Pushing further,
we can see that the effect of Eq.~\ref{eqn-iter-pde} is the same as \ref{eqn-iter-unfolding2} when we let $\tau = \alpha'$, $k = l$, $\mathbf S^{(k)} = (\mathbf D^{(l)})^{-1}\bm \Omega^{(l)}$ and $\mathbf S^{(k)}$ is row-normalized, i.e., $\sum_{v\in V} \mathbf S^{(k)}_{uv} = 1$ and $\tilde{\mathbf D}^{(k)} = \mathbf I$.  

Thereby, we have proven by construction that a one-step numerical iteration by the explicit Euler scheme, specifically shown by Eq.~\ref{eqn-iter-unfolding2} is equivalent to a one-step gradient descent on the surrogate Eq.~\ref{eqn-proof-convex} which further equals to the original energy Eq.~\ref{eqn-energy}. We thus have the result $E(\mathbf Z^{(k+1)}, k; \delta) \leq E(\mathbf Z^{(k)}, k; \delta)$. Besides, we notice that for a fixed $\mathbf Z$, $E(\mathbf Z, k; \delta) = \|\mathbf Z - \mathbf Z^{(k)}\|^2_{\mathcal F} + \lambda \sum_{ij} \rho(\|\mathbf z_i - \mathbf z_j\|^2_2)$ becomes a function of $k$ and its optimum is achieved if and only if $\mathbf Z^{(k)} = \mathbf Z$. Such a fact yields that $E(\mathbf Z^{(k)}, k; \delta) \leq E(\mathbf Z^{(k)}, k-1; \delta)$. The result of the main theorem follows by noting that $E(\mathbf Z^{(k+1)}, k; \delta) \leq E(\mathbf Z^{(k)}, k; \delta) \leq E(\mathbf Z^{(k)}, k-1; \delta)$.


\begin{figure}[t!]
\centering
\begin{minipage}{0.995\linewidth}
\subfigure[\model-s: $\delta(z^2) = 2z^2 - \frac{1}{4}z^4$]{
\begin{minipage}[t]{0.48\linewidth}
\centering
\label{fig-sparse}
\includegraphics[width=0.98\textwidth,angle=0]{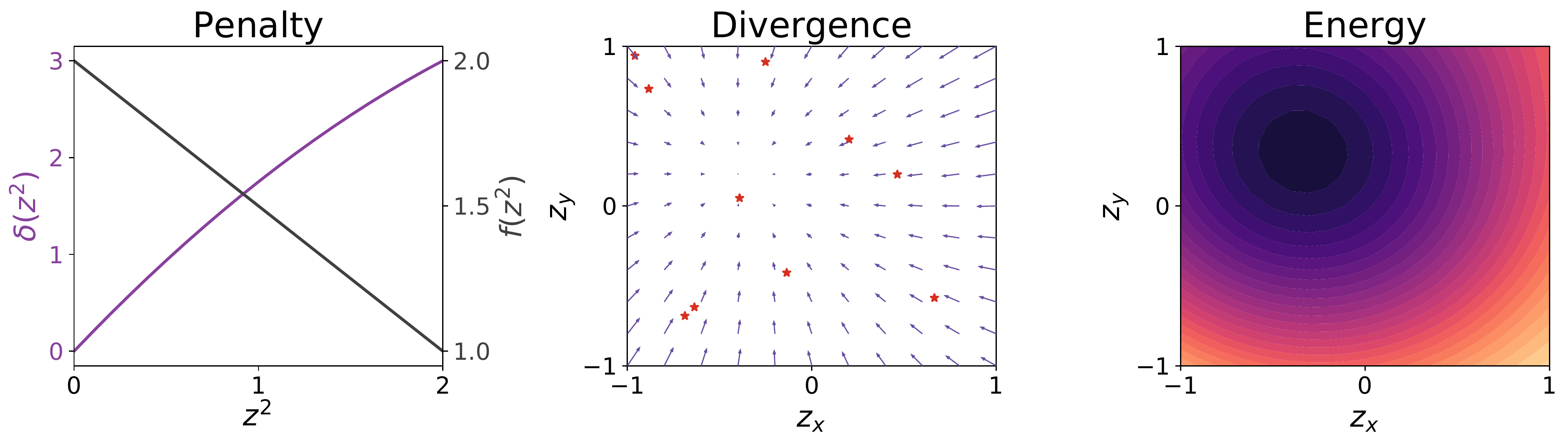}
\end{minipage}%
}%
\hspace{5pt}
\subfigure[\model-a: $\delta(z^2) = z^2 - 2 \log (e^{z^2/2-1} + 1)$]{
\begin{minipage}[t]{0.48\linewidth}
\centering
\label{fig-attn-a}
\includegraphics[width=0.98\textwidth,angle=0]{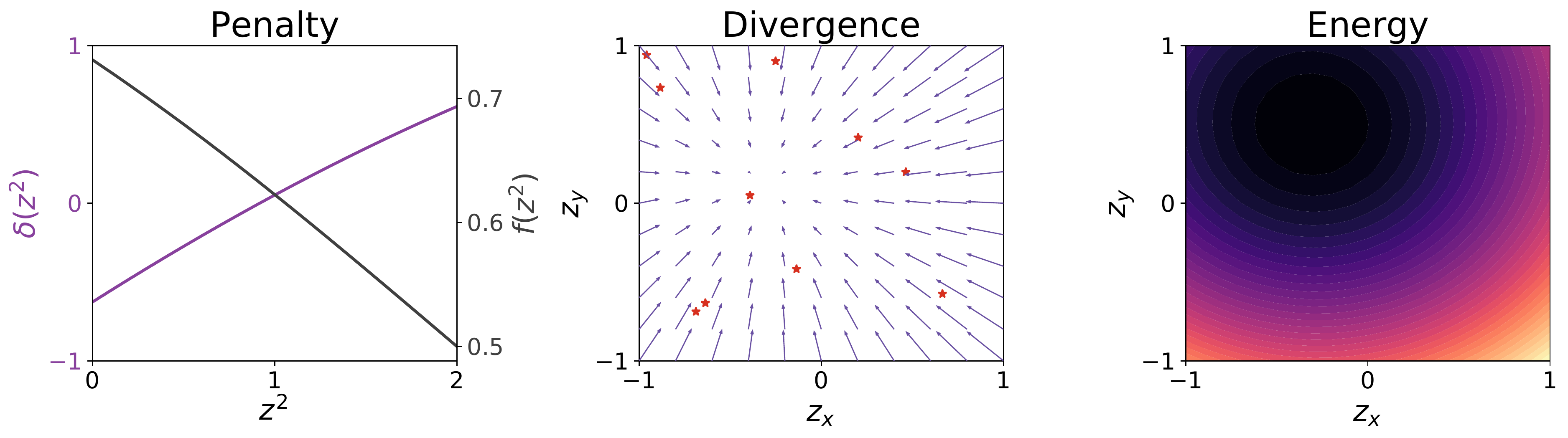}
\end{minipage}%
}
\end{minipage}
\vspace{-10pt}
\caption{Plot of penalty curves $\delta(z^2)$ and $f(z^2)=\frac{\partial \delta(z^2)}{\partial z^2}$, divergence field (produced by 10 randomly generated instances marked as red stars) and cross-section energy field of an individual.}
\label{fig-endiff-vis}
\vspace{-10pt}
\end{figure}

\section{Extension of our Theory to Incorporate Non-Linearity}\label{appx-non}

In Section~\ref{sec:model}, we mainly focus on the situation without non-linearity in each model layer to stay consistent with our implementation where we empirically found that omitting the non-linearity in the middle diffusion layers works smoothly in practice (see the pseudo code of Alg.~\ref{alg-endiff} in appendix for details). Even so, our theory can also be extended to incorporate the layer-wise non-linearity in diffusion propagation. Specifically, the non-linear activation can be treated as a proximal operator (which projects the output into a feasible region) and the gradient descent used in our analysis Eqn.~\ref{eqn-iter-unfolding} and \ref{eqn-iter-unfolding2} can be modified to add a proximal operator:
\[
\mathbf Z^{(l+1)} = \mbox{Prox}_{\Omega} \left ((1 - \alpha) \mathbf Z^{(l)} + \alpha (\mathbf D^{(l)})^{-1}  \mathbf \Omega^{(l)}\mathbf Z^{(l)} \right ),\]
where $\mbox{Prox}_{\Omega} (z) = \arg\min_{x\in \Omega} \|x - z\|_2$ and $\Omega$ defines a feasible region. The updating above corresponds to proximial gradient descent which also guarantees a strict minimization for the energy function and our Theorem~\ref{thm-main} will still hold. In particular, if one uses ReLU activation, the proximal operator will be $\mbox{Prox}_{\Omega} (z) = \max(0, z)$.

\section{Different Energy Forms}\label{appx-inst}

We present more detailed illustration for the choices of $f$ and specific energy function forms in Eq.~\ref{eqn-energy}.

\textbf{Simple Diffusivity Model.} As discussed in Section~\ref{sec:inst}, the simple model assumes $f(z^2) = 2 - \frac{1}{2}z^2$ that corresponds to $g(x) = 1+x$, where we define $z = \|\mathbf z_i - \mathbf z_j\|_2$ and $x = \mathbf z_i^\top \mathbf z_j$. The corresponding penalty function $\delta$ whose first-order derivative is $f$ would be $\delta(z^2) = 2z^2 - \frac{1}{4}z^4$. We plot the penalty function curves in Fig.~\ref{fig-endiff-vis}(a). As we can see, the $f$ is a non-negative, decreasing function of $z^2$, which implies that the $\delta$ satisfies the non-decreasing and concavity properties to guarantee a valid regularized energy function. Also, in Fig.~\ref{fig-endiff-vis}(a) we present the divergence field produced by 10 randomly generated instances (marked as red stars) and the cross-section energy field of one instance.  

\textbf{Advanced Diffusivity Model.} The diffusivity model defines $f(z^2) = \frac{1}{1+e^{z^2/2 - 1}}$ with $g(x) = \frac{1}{1+\exp(-x)}$, and the corresponding penalty function $\delta(z^2) = z^2 - 2 \log (e^{z^2/2-1} + 1)$. The penalty function curves, divergence field and energy field are shown in Fig.~\ref{fig-endiff-vis}(b).

\textbf{Incorporating Input Graphs.} In Section~\ref{sec:inst}, we present an extended version of our model for incorporating input graphs. Such an extension defines a new diffusion process whose iterations are equivalent (up to a re-scaling factor on the adjacency matrix) to a sequence of descending steps for the following regularized energy: 
\begin{equation}\label{eqn-energy-graph}
    E(\mathbf Z, k; \delta) = \|\mathbf Z - \mathbf Z^{(k)}\|_{\mathcal F}^2 + \frac{\lambda}{2}\sum_{i,j} \delta(\|\mathbf z_i - \mathbf z_j\|_2^2) + \frac{\lambda}{2}\sum_{(i,j)\in \mathcal E} \|\mathbf z_i - \mathbf z_j\|_2^2,
\end{equation}
where the last term contributes to a penalty for observed edges in the input graph.

\section{Details of DIFFormer Model}\label{appx-alg}

In this section, we present the details for the feed-forward computation of \model. 

\subsection{DIFFormer's Feed-forward with A Matrix View}

\textbf{Input Layer.} For input data $\mathbf X = \{\mathbf x_i\}_{i=1}^N \in \mathbb R^{N\times D}$ where $\mathbf x_i$ denotes the $D$-dimensional input features of the $i$-th instance, we first use a shallow fully-connected layer to convert it into a $d$-dimensional embedding in the latent space:
\begin{equation}
    \mathbf Z = \sigma\left (\mbox{LayerNorm}(\mathbf W_I \mathbf X + \mathbf b_I) \right ),
\end{equation}
where $\mathbf W_I \in \mathbb R^{d\times D}$ and $\mathbf b_I \in \mathbb R^d$ are trainable parameters, and $\sigma$ is a non-linear activation (i.e., ReLU). Then the node embeddings $\mathbf Z$ will be used for feature propagation with our diffusion-induced Transformer model by letting $\mathbf Z^{(0)} = \mathbf Z$ as the initial states.

\textbf{Propagation Layer.} The initial embeddings $\mathbf Z^{(0)}$ will be transformed into $\mathbf Z^{(1)}, \cdots, \mathbf Z^{(L)}$ with $L$ layers of propagation. We next illustrate one-layer propagation from $\mathbf Z^{(k)}$ to $\mathbf Z^{(k+1)}$. We use the superscript $(k, h)$ to denote the $k$-th layer and the $h$-th head:
\begin{equation}
    \mathbf K^{(k, h)} =  \mathbf W_{K}^{(k, h)}\mathbf Z^{(k)}, \quad \mathbf Q^{(k, h)} =  \mathbf W_{Q}^{(k, h)}\mathbf Z^{(k)}, \quad
    \mathbf V^{(k, h)} =  \mathbf W_{V}^{(k, h)}\mathbf Z^{(k)},
\end{equation}
where $\mathbf W_{K}^{(k, h)}\in \mathbb R^{d\times d}, \mathbf W_{Q}^{(k, h)}\in \mathbb R^{d\times d}, \mathbf W_{V}^{(k, h)}\in \mathbb R^{d\times d}$ are trainable parameters of the $h$-th head at the $k$-th layer. Then the transformed embeddings will be fed into the all-pair propagation unit of \model-s or \model-a.
\begin{itemize}
    \item For \model-s: we adopt L2 normalization for key and query vectors
    \begin{equation}
        \tilde{\mathbf K}^{(k, h)} = \left [\frac{\mathbf K^{(k, h)}_{i}}{\|\mathbf K^{(k, h)}_{i}\|_2} \right ]_{i=1}^N, \quad \tilde{\mathbf Q}^{(k, h)} = \left [\frac{\mathbf Q^{(k, h)}_{i}}{\|\mathbf Q^{(k, h)}_{i}\|_2}\right ]_{i=1}^N,
    \end{equation}
    where $\mathbf K^{(k,h)}_{i}$ denotes the $i$-th row vector of $\mathbf K^{(k, h)}\in \mathbb R^{N\times d}$. Then the all-pair propagation of the $h$-th head is achieved by
    \begin{equation}
        \mathbf R^{(k, h)} = \mbox{diag}^{-1}\left (N + \tilde{\mathbf Q}^{(k, h)} \left ((\tilde{\mathbf K}^{(k, h)} )^\top \mathbf 1 \right ) \right ), 
    \end{equation}
    \begin{equation}
    \mathbf P^{(k, h)} = \mathbf R^{(k, h)} \left [\mathbf 1 \left(\mathbf 1^\top \mathbf V^{(k, h)} \right ) + \tilde{\mathbf Q}^{(k, h)} \left ((\tilde{\mathbf K}^{(k, h)}  )^\top \mathbf V^{(k, h)} \right ) \right ],
    \end{equation}
    where $\mathbf 1_{N\times 1}$ is an all-one vector. The above computation only requires linear complexity w.r.t. $N$ since the bottleneck computation lies in $\tilde{\mathbf Q}^{(k, h)} \left ((\tilde{\mathbf K}^{(k, h)}  )^\top \mathbf V^{(k, h)} \right )$ where the two matrix products both require $O(Nd^2)$.
    \item For \model-a: we need to compute the all-pair similarity before aggregating the results
    \begin{equation}
        \tilde{\mathbf A}^{(k, h)} = \mbox{Sigmoid}\left (\mathbf Q^{(k, h)} (\mathbf K^{(k, h)})^\top \right ),
    \end{equation}
    \begin{equation}
        \mathbf R^{(k, h)} = \mbox{diag}^{-1} \left( \tilde{\mathbf A}^{(k, h)} \mathbf 1 \right ),
    \end{equation}
    \begin{equation}
        \mathbf P^{(k, h)} = \mathbf R^{(k, h)} \tilde{\mathbf A}^{(k, h)} \mathbf V^{(k, h)}. 
    \end{equation}
    The above computation requires $O(Nd^2+N^2d)$ due to the explicit computation of the $N\times N$ matrix $\tilde{\mathbf A}^{(k, h)}$.
\end{itemize}
If using input graphs, we add the updated embeddings of GCN-based propagation to the all-pair propagation's ones:
\begin{equation}
    \overline{\mathbf P}^{(k, h)} = \mathbf P^{(k, h)} + \mathbf D^{-\frac{1}{2}} \mathbf A \mathbf D^{-\frac{1}{2}} \mathbf V^{(k, h)},
\end{equation}
where $\mathbf A$ is the input graph and $\mathbf D$ denotes its corresponding diagonal degree matrix.

We then average the propagated results of multiple heads:
\begin{equation}
    \overline{\mathbf P}^{(k)} = \frac{1}{H} \sum_{h=1}^H \overline{\mathbf P}^{(k, h)}.
\end{equation}
The next-layer embeddings will be updated by
\begin{equation}
    \mathbf Z^{(k+1)} = \sigma' \left ( \mbox{LayerNorm}\left ( \tau \overline{\mathbf P}^{(k)} + (1 - \tau) \mathbf Z^{(k)}\right ) \right ),
\end{equation}
where $\sigma'$ can be identity mapping or non-linear activation (e.g., ReLU). 

\textbf{Output Layer.} After $K$ layers of propagation, we then use a shallow fully-connected layer to output the predicted logits:
\begin{equation}
    \hat {\mathbf Y} = \mathbf Z^{(K)} \mathbf W_O + \mathbf b_O,
\end{equation}
where $\mathbf W_O\in \mathbb R^{d\times C}$ and $\mathbf b_O \in \mathbb R^{C}$ are trainable parameters, and $C$ denotes the number of classes. And, the predicted logits $\hat {\mathbf Y}$ will be used for computing a loss of the form $l(\hat {\mathbf Y}, \mathbf Y)$ where $l$ can be cross-entropy for classification or mean square error for regression.

\subsection{Pseudo Codes}

We provide the Pytorch-style pseudo codes for \model class in Alg.~\ref{alg-endiff} and the one-layer propagation of two model versions (shown in Alg.~\ref{alg-endiff-l} for \model-s and Alg.~\ref{alg-endiff-n} for \model-a). 
The key design of our methodology lies in the model architectures which are shown in detail in Alg.~\ref{alg-endiff-l} for \model-s and Alg.~\ref{alg-endiff-n} for \model-a, where for each case, the model takes the data as input and outputs prediction for each individual instance. For more details concerning the implementation, please refer to our provided codes at the first page.

\begin{algorithm}[H]
        \caption{PyTorch-style Code for \model}
        \label{alg-endiff}
        \definecolor{codeblue}{rgb}{0.25,0.5,0.5}
        \definecolor{codekw}{rgb}{0.85, 0.18, 0.50}
        \lstset{
          backgroundcolor=\color{white},
          basicstyle=\fontsize{7.5pt}{7.5pt}\ttfamily\selectfont,
          columns=fullflexible,
          breaklines=true,
          captionpos=b,
          commentstyle=\fontsize{7.5pt}{7.5pt}\color{codeblue},
          keywordstyle=\fontsize{7.5pt}{7.5pt}\color{codekw},
        }
\begin{lstlisting}[language=python]
# fcs: fully-connected layers
# bns: layer normalization layers
# convs: DIFFormer layers (see implementation in Alg. 2 and 3)
# activation: activation function for the input layer
# x: input data sized [N, D], N for instance number, D for input feature dimension
# edge_index: input graph structure if available, None otherwise
# tau: step size for each iteration update
# use_act: whether to use activation for propagation layers

layer_ = []

# input MLP layer
x = fcs[0](x)
x = bns[0](x)
x = activation(x)

# store as residual link
layer_.append(x)

for i, conv in enumerate(convs):
    # graph convolution with global all-pair attention (specified by Alg. 2 for DIFFormer-s and Alg. 3 for DIFFormer-a)
    x = conv(x, x, edge_index)
    x = tau * x + (1-tau) * layer_[i]
    x = bns[i+1](x)
    if use_act:
        x = activation(x)
    layer_.append(x)

# output MLP layer
out = fcs[-1](x)

# supervised loss calculation, negative log-likelihood
y_logp = F.log_softmax(out, dim=1)
loss = criterion(y_logp[train_idx], y_true[train_idx])

\end{lstlisting}
\vspace{-7pt}
\end{algorithm}
  
\begin{algorithm}[H]
        \caption{PyTorch-style Code for One-layer Feed-forward of \model-s}
        \label{alg-endiff-l}
        \definecolor{codeblue}{rgb}{0.25,0.5,0.5}
        \definecolor{codekw}{rgb}{0.85, 0.18, 0.50}
        \lstset{
          backgroundcolor=\color{white},
          basicstyle=\fontsize{7.5pt}{7.5pt}\ttfamily\selectfont,
          columns=fullflexible,
          breaklines=true,
          captionpos=b,
          commentstyle=\fontsize{7.5pt}{7.5pt}\color{codeblue},
          keywordstyle=\fontsize{7.5pt}{7.5pt}\color{codekw},
        }
\begin{lstlisting}[language=python]
# x: data embeddings sized [N, d], N for instance number, d for hidden size
# edge_index: input graph structure if available, None otherwise
# H: head number
# use_graph: whether to use input graph
# use_weight: whether to use feature transformation for each layer
# graph_conv: graph convolution operatior using the normalized adjacency matrix D^{-1/2}AD^{-1/2}
# Wq, Wk, Wv: weight matrices for feature transformation

Q = Wq(x) # [N, H, D]
K = Wk(x) # [N, H, D]
V = use_weight * Wv(x) + (1 - use_weight) * x # [N, H, D]

# numerator
KV = torch.einsum("lhm,lhd->hmd", K, V)
num = torch.einsum("nhm,hmd->nhd", Q, KV) # [N, H, D]
num += N * V

# denominator
all_ones = torch.ones(N)
K_sum = torch.einsum("lhm,l->hm", K, all_ones)
den = torch.einsum("nhm,hm->nh", Q, K_sum)  # [N, H]

# aggregated results
den += torch.ones_like(den) * N
agg = num / den.unsqueeze(2) # [N, H, D]

# use input graph for graph conv
if use_graph:
    agg = agg + graph_conv(V, edge_index)
output = agg.mean(dim=1)

\end{lstlisting}
\vspace{-7pt}
  \end{algorithm}
  
 \begin{algorithm}[H]
        \caption{PyTorch-style Code for One-layer Feed-forward of \model-a}
        \label{alg-endiff-n}
        \definecolor{codeblue}{rgb}{0.25,0.5,0.5}
        \definecolor{codekw}{rgb}{0.85, 0.18, 0.50}
        \lstset{
          backgroundcolor=\color{white},
          basicstyle=\fontsize{7.5pt}{7.5pt}\ttfamily\selectfont,
          columns=fullflexible,
          breaklines=true,
          captionpos=b,
          commentstyle=\fontsize{7.5pt}{7.5pt}\color{codeblue},
          keywordstyle=\fontsize{7.5pt}{7.5pt}\color{codekw},
        }
\begin{lstlisting}[language=python]
# x: data embeddings sized [N, d], N for instance number, d for hidden size
# edge_index: input graph structure if available, None otherwise
# H: head number
# use_graph: whether to use input graph
# use_weight: whether to use feature transformation for each layer
# graph_conv: graph convolution operatior using the normalized adjacency matrix D^{-1/2}AD^{-1/2}
# Wq, Wk, Wv: weight matrices for feature transformation

Q = Wq(x) # [N, H, D]
K = Wk(x) # [N, H, D]
V = use_weight * Wv(x) + (1 - use_weight) * x # [N, H, D]

# numerator
num = torch.sigmoid(torch.einsum("nhm,lhm->nlh", Q, K))  # [N, N, H]

# denominator
all_ones = torch.ones(N)
den = torch.einsum("nlh,l->nh", num, all_ones)
den = den.unsqueeze(1).repeat(1, N, 1)  # [N, N, H]

# aggregated results
attn = num / den # [N, N, H]
agg = torch.einsum("nlh,lhd->nhd", attn, V)  # [N, H, D]

# use input graph for graph conv
if use_graph:
    agg = agg + graph_conv(V, edge_index)
output = agg.mean(dim=1)
\end{lstlisting}
\vspace{-7pt}
  \end{algorithm}

\section{Connections with Existing Models}\label{appx-connection}

\textbf{MLP.} MLPs can be viewed as a simplified diffusion model with only non-zero diffusivity values on the diagonal line, i.e., $\mathbf S^{(k)}_{ij} = 1$ if $i = j$ and 0 otherwise. From a graph convolution perspective, MLP only considers propagation on the self-loop connection in each layer. Correspondlying, the energy function the feed-forward process of the model essentially optimizes would be $E(\mathbf Z, k) = \|\mathbf Z - \mathbf Z^{(k)}\|^2_{\mathcal F}$, which only counts for the local consistency term and ignores the global information. 

\textbf{GCN.} Graph Convolution Networks~\citep{GCN-vallina} define a convolution operator on a graph $\mathcal G=(\mathcal V, \mathcal E)$ by multiplying the node feature matrix with $D^{-\frac{1}{2}}AD^{-\frac{1}{2}}$ where $A$ denotes the adjacency matrix and $D$ is its associated degree matrix. The layer-wise updating rule can be written as
\begin{equation}\label{eqn-model-gcn}
    \begin{split}
        & \mathbf {\hat S}^{(k)}_{ij} = \left\{ 
    \begin{aligned}
         &\frac{1}{\sqrt{d_id_j}}, \quad \mbox{if} \; (i,j) \in \mathcal E,  \\
         &0, \quad otherwise,
    \end{aligned}
    \right. \\
        & \mathbf z_i^{(k+1)} = \left (1 - \tau \sum_{j=1}^N \mathbf {\hat S}_{ij}^{(k)} \right ) \mathbf z_i^{(k)}  + \tau \sum_{j=1}^N \mathbf {\hat S}_{ij}^{(k)} \mathbf z_j^{(k)}, \quad 1 \leq i \leq N.
    \end{split}
\end{equation}
Eq.~\ref{eqn-model-gcn} generalizes the original message-passing rule of GCN by adding an additional self-loop links with adaptive weights for different nodes. The diffusivity matrix is defined with the observed adjacency, i.e., $\mathbf {\hat S}^{(k)} = D^{-\frac{1}{2}}AD^{-\frac{1}{2}}$. Though such a design leverages the geometric information from input structures as a guidance for feature propagation, it constrains the efficiency of layer-wise information flows within the receptive field of local neighbors and could only exploit partial global information with a finite number of iterations.

\textbf{GAT.} Graph Attention Networks~\citep{GAT} extend the GCN architecture to incorporate attention mechanisms as a learnable function producing adaptive weights for each observed edge. From our diffusion perspective, the attention matrix for layer-wise convolution can be treated as the diffusivity in our updating rule:
\begin{equation}\label{eqn-model-gat}
    \begin{split}
        & \mathbf {\hat S}_{ij}^{(k)} = \frac{f(\|\mathbf z_i^{(k)} - \mathbf z_j^{(k)}\|_2^2)}{\sum_{(i,l)\in \mathcal E} f(\|\mathbf z_i^{(k)} - \mathbf z_l^{(k)}\|_2^2)}, \quad (i,j)\in \mathcal E,\\
        & \mathbf z_i^{(k+1)} = \left (1 - \tau \sum_{j=1}^N \mathbf {\hat S}_{ij}^{(k)} \right ) \mathbf z_i^{(k)}  + \tau \sum_{j=1}^N \mathbf {\hat S}_{ij}^{(k)} \mathbf z_j^{(k)}, \quad 1 \leq i \leq N.
    \end{split}
\end{equation}
We notice that $\sum_{j=1}^N \mathbf {\hat S}_{ij}^{(k)} = 1$ due to the normalization in the denominator. Therefore, the layer-wise updating can be viewed as an attentive aggregation over graph structures and a subsequent residual link. In fact, the original implementation for the GAT model~\citep{GAT} specifies the function $f$ as a particular form, i.e., $\exp(\mbox{LeakyReLU}(\mathbf a^\top [\mathbf W \mathbf z_i^{(k)} \| \mathbf W \mathbf z_j^{(k)}]))$, which can be viewed as a generalized similarity function given trainable $\mathbf W$, $\mathbf a$ towards optimizing a supervised loss.

\section{Dataset Information}\label{appx-dataset}

In this section, we present the detailed information for all the experimental datasets, the pre-processing and evaluation protocol used in Section~\ref{sec:exp}. 

\begin{table*}[htbp]
    \centering
    \caption{Information for node classification datasets.}
    \label{tbl:dataset}
    \resizebox{0.75\textwidth}{!}{
    \begin{tabular}{lccccccc}
    \toprule
    Dataset & Type & \# Nodes & \# Edges & \# Node features & \# Class  \\ 
    \midrule
    Cora & Citation network & 2,708 & 5,429 & 1,433 & 7 \\
    Citeseer & Citation network & 3,327 & 4,732 & 3,703 & 6 \\
    Pubmed & Citation network & 19,717 & 44,338 & 500 & 3 \\
    Proteins & Protein interaction & 132,534 & 39,561,252 & 8  & 2 \\ 
    Pokec & Social network & 1,632,803 & 30,622,564 & 65 & 2 \\
    \bottomrule
    \end{tabular}
    }
\end{table*}

\subsection{Node Classification Datasets}
\texttt{Cora}, \texttt{Citeseer} and \texttt{Pubmed}~\citep{Sen08collectiveclassification} are commonly used citation networks for evaluating models on node classification tasks., These datasets are small-scale networks (with 2K$\sim$20K nodes) and the goal is to classify the topics of documents (instances) based on input features of each instance (bag-of-words representation of documents) and graph structure (citation links). 
Following the semi-supervised learning setting in \cite{GCN-vallina}, we randomly choosing 20 instances per class for training, 500/1000 instances for validation/testing for each dataset. 
\texttt{OGBN-Proteins}~\citep{ogb-nips20} is a multi-task protein-protein interaction network whose goal is to predict molecule instances' property. We follow the original splitting of \cite{ogb-nips20} for evaluation. \texttt{Pokec} is a large-scale social network with features including profile information, such as geographical region, registration time, and age, for prediction on users' gender. For semi-supervised learning, we consider randomly splitting the instances into train/valid/test with 10\%/10\%/80\% ratios. Table~\ref{tbl:dataset} summarizes the statistics of these datasets.

\subsection{Image and Text Classification Datasets}
We evaluate our model on two image classification datasets: STL-10 and CIFAR-10. We use all 13000 images from STL-10, each of which belongs to one of ten classes. We choose 1500 images from each of 10 classes of CIFAR-10 and obtain a total of 15,000 images. For STL-10 and CIFAR-10, we randomly select 10/50/100 instances per class as training set, 1000 instances for validation and the remaining instances for testing. We first use the self-supervised approach SimCLR~\citep{chen2020simple} (that does not use labels for training) to train a ResNet-18 for extracting the feature maps as input features of instances.
We also evaluate our model on 20Newsgroup, which is a text classification dataset consisting of 9607 instances. We follow \cite{LDS-icml19} to take 10 classes from 20 Newsgroup and use words (TFIDF) with a frequency of more than 5\% as features.

\subsection{Spatial-Temporal Datasets}
The spatial-temporal datasets are from the open-source library PyTorch Geometric Temporal \citep{rozemberczki2021pytorch}, with properties and summary statistics described in Table~\ref{tab:desc_discrete}. Node features are evolving for all the datasets considered here, i.e., we have different node features for different snapshots. For each dataset, we split the snapshots into training, validation, and test sets according to a 2:2:6 ratio in order to make it more challenging and close to the real-world low-data learning setting.
In details:
\begin{itemize}
    \item \texttt{Chickenpox} describes weekly officially reported cases of chickenpox in Hungary from 2005 to 2015, whose nodes are counties and edges denote direct neighborhood relationships. Node features are lagged weekly counts of the chickenpox cases (we included 4 lags). The target is the weekly number of cases for the upcoming week (signed integers).
    \item \texttt{Covid} contains daily mobility graph between regions in England NUTS3 regions, with node features corresponding to the number of confirmed COVID-19 cases in the previous days from March to May 2020. The graph indicates how many people moved from one region to the other each day, based on Facebook Data For Good disease prevention maps. Node features correspond to the number of COVID-19 cases in the region in the past 8 days. The task is to predict the number of cases in each node after 1 day.
    \item \texttt{WikiMath} is a dataset whose nodes describe Wikipedia pages on popular mathematics topics and edges denote the links from one page to another. Node features are provided by the number of daily visits between 2019 March and 2021 March. The graph is directed and weighted. Weights represent the number of links found at the source Wikipedia page linking to the target Wikipedia page. The target is the daily user visits to the Wikipedia pages between March $16^\text{th}$ 2019 and March $15^\text{th}$ 2021 which results in 731 periods.
\end{itemize}
   
\begin{table}[htb]
\small

\centering
\caption{Properties and summary statistics of the spatial-temporal datasets used in the experiments with information about whether the graph structure is dynamic or static, meaning of node features (the same as the prediction target) and the corresponding dimension ($D$), the number of snapshots ($T$), the number of nodes ($|V|$), as well as the meaning of edges/edge weights.}\label{tab:desc_discrete}
{
\setlength{\tabcolsep}{1.5pt}
\centering
\resizebox{0.9\textwidth}{!}{
\begin{tabular}{ccccccccc}
\toprule
Dataset  & Graph structure &Node features/ Target&$D$& Frequency & $T$ & $|V|$&Edges/ Edge weights \\
\midrule
Chickenpox&Static&Weekly Chickenpox Cases&4 & Weekly & 522 & 20 &Direct Neighborhoods\\
Covid  & Dynamic&Daily Covid Cases&8 & Daily & 61 & 129&Daily Mobility \\
WikiMath &Static&Daily User Visits&14&Daily & 731 & 1,068&Page Links \\
 \bottomrule
\end{tabular}
}}
\end{table}

\section{Implementation Details and Hyper-parameters}\label{appx-implementation}
\vspace{-5pt}

\subsection{Node Classification Experiment}
We use feature transformation for each layer on two large datasets and omit it for citation networks. The head number is set as 1. We set $\tau=0.5$ and incorporate the input graphs for \model. For other hyper-paramters, we adopt grid search for all the models with learning rate from $\{0.0001, 0.001, 0.01, 0.1\}$, weight decay for the Adam optimizer from $\{0, 0.0001, 0.001, 0.01, 0.1, 1.0\}$, dropout rate from $\{0, 0.2, 0.5\}$, hidden size from $\{16, 32, 64\} $, number of layers from $\{2, 4, 8, 16\}$. For evaluation, we compute the mean and standard deviation of the results with five repeating runs with different initializations. For each run, we run for a maximum of 1000 epochs and report the testing performance achieved by the epoch yielding the best performance on validation set.

\subsection{Image and Text Classification Experiment}
For image and text datasets, we consider feature transformation for layer-wise updating. The head number is set as 1. We set $\tau=0.5$. These datasets do not have input graphs so we only consider learning new structures for the diffusion model. For hyper-parameter settings, we conduct grid search for all the models with learning rate from $\{0.0001, 0.0005, 0.005, 0.01, 0.05\}$, weight decay for the Adam optimizer from $\{0.0001, 0.001, 0.01, 0.1\}$, dropout rate from $\{0, 0.2, 0.5\}$, hidden size from $\{32, 64, 100, 200, 300, 400 \} $, number of layers from $\{1, 2, 4, 6, 8, 10, 12\}$. We average the results for five repeating runs and report as well the standard deviation. For each run, we run for a maximum of 600 epochs and report the testing accuracy achieved by the epoch yielding the highest accuracy on validation set.

\subsection{Spatial-Temporal Prediction Experiment}
We do not use feature transformation for these datasets due to their small sizes and also set $\tau=0.5$. The head number is set as 1. These spatial-temporal dynamics prediction datasets contain available graph structures, we consider both cases, using the input graphs and not, in our experiments and discuss their impact on the performance. For other hyper-parameters, we also consider grid search for all models here with learning rate from $\{0.01, 0.05, 0.005\}$, weight decay for the Adam optimizer from $\{0, 0.005\}$, dropout rate from $\{0, 0.2, 0.5\}$, and report the test mean squared error (MSE) based on the lowest validation MSE. We average the results for five repeating runs and report as well the standard deviation for each MSE result. For each run, we run for a maximum of 200 epochs in total and stop the training process with 20-epoch early stopping on the validation performance. The data split is done in time order, and hence is deterministic. We report the results using the same hidden size (4) and number of layers (2) for all methods. 

\section{More Experiment Results}\label{appx-results}

We supplement more experiment results including extra ablation studies, hyper-parameter studies and visualization results on more datasets that are not presented in Section~\ref{sec:exp} due to the limit of space. 

\subsection{Ablation Studies}\label{appx-abl}

In Fig.~\ref{fig:res-appx-abl} we present more experiment results for ablation studies w.r.t. the energy function forms used by \model. See discussions and analysis in Section~\ref{sec:exp}.

\begin{figure}[tb!]
\centering
\begin{minipage}{\linewidth}
\centering
\subfigure[\texttt{STL}]{
\begin{minipage}[t]{0.44\linewidth}
\centering
\includegraphics[width=\linewidth]{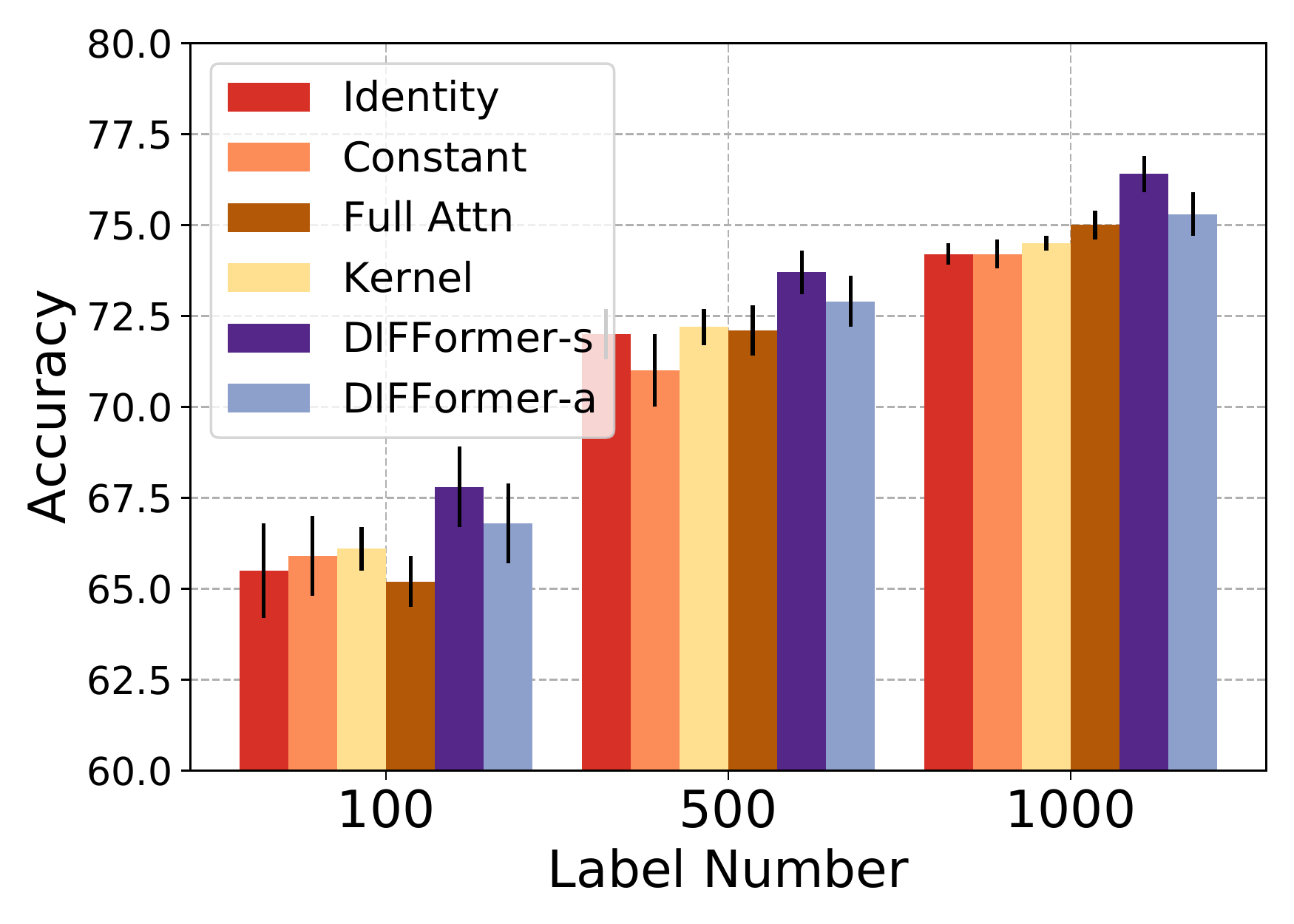}
\end{minipage}
}
\subfigure[\texttt{20News}]{
\begin{minipage}[t]{0.44\linewidth}
\centering
\includegraphics[width=\linewidth]{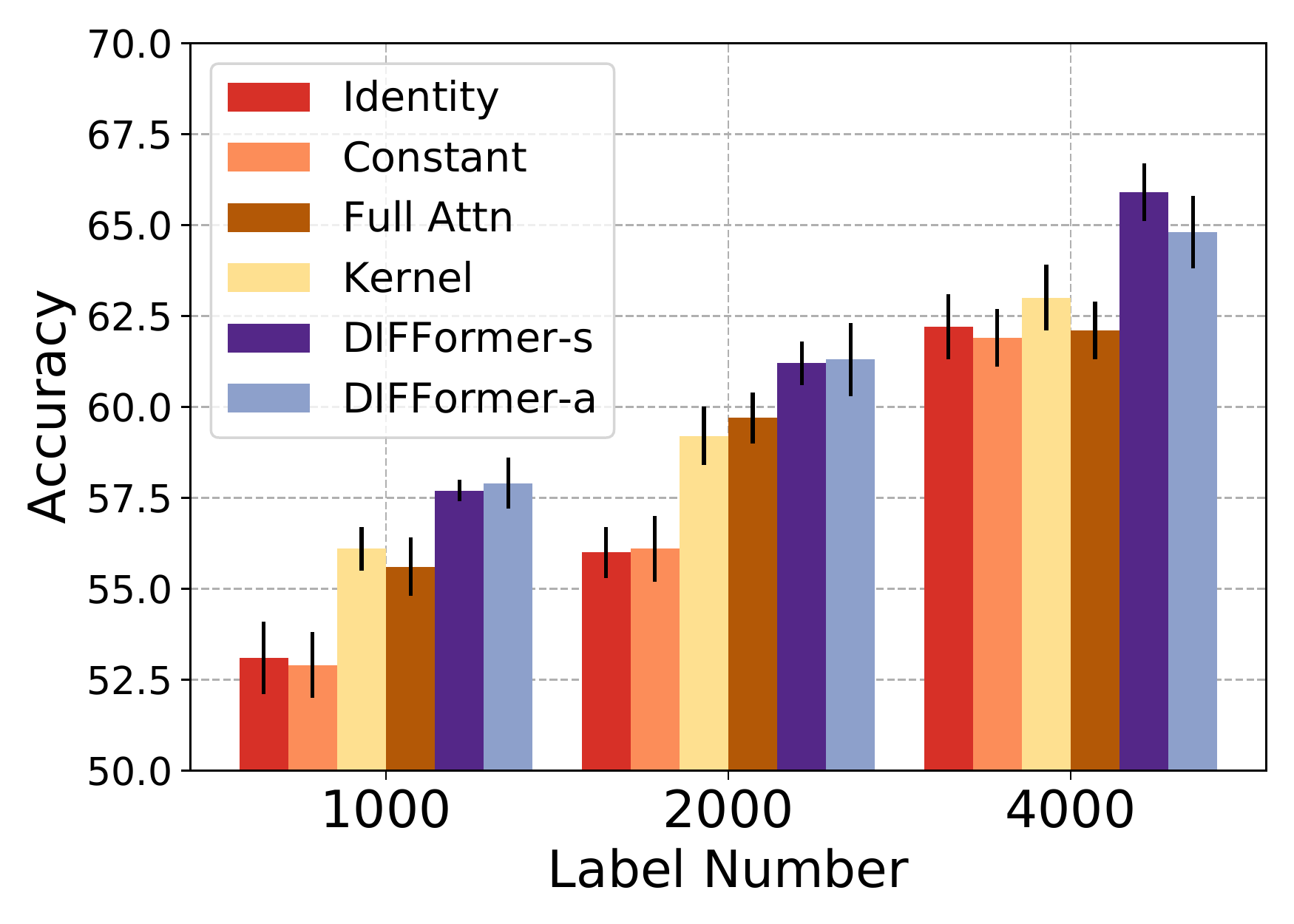}
\end{minipage}
}
\end{minipage}
\caption{Ablation studies for different energy function forms on image and text datasets.~\label{fig:res-appx-abl}}
\end{figure}

\subsection{Hyper-parameter Analysis}\label{appx-hyper}

We plot the testing performance of several baselines and \model with different step size $\tau$ as the model size $K$ increases in Fig.~\ref{fig:res-appx-hyper}. See discussions and analysis in Section~\ref{sec:exp}.

\begin{figure}[tb!]
\centering
\begin{minipage}{\linewidth}
\centering
\subfigure[\texttt{Citeseer}]{
\begin{minipage}[t]{0.44\linewidth}
\centering
\includegraphics[width=\linewidth]{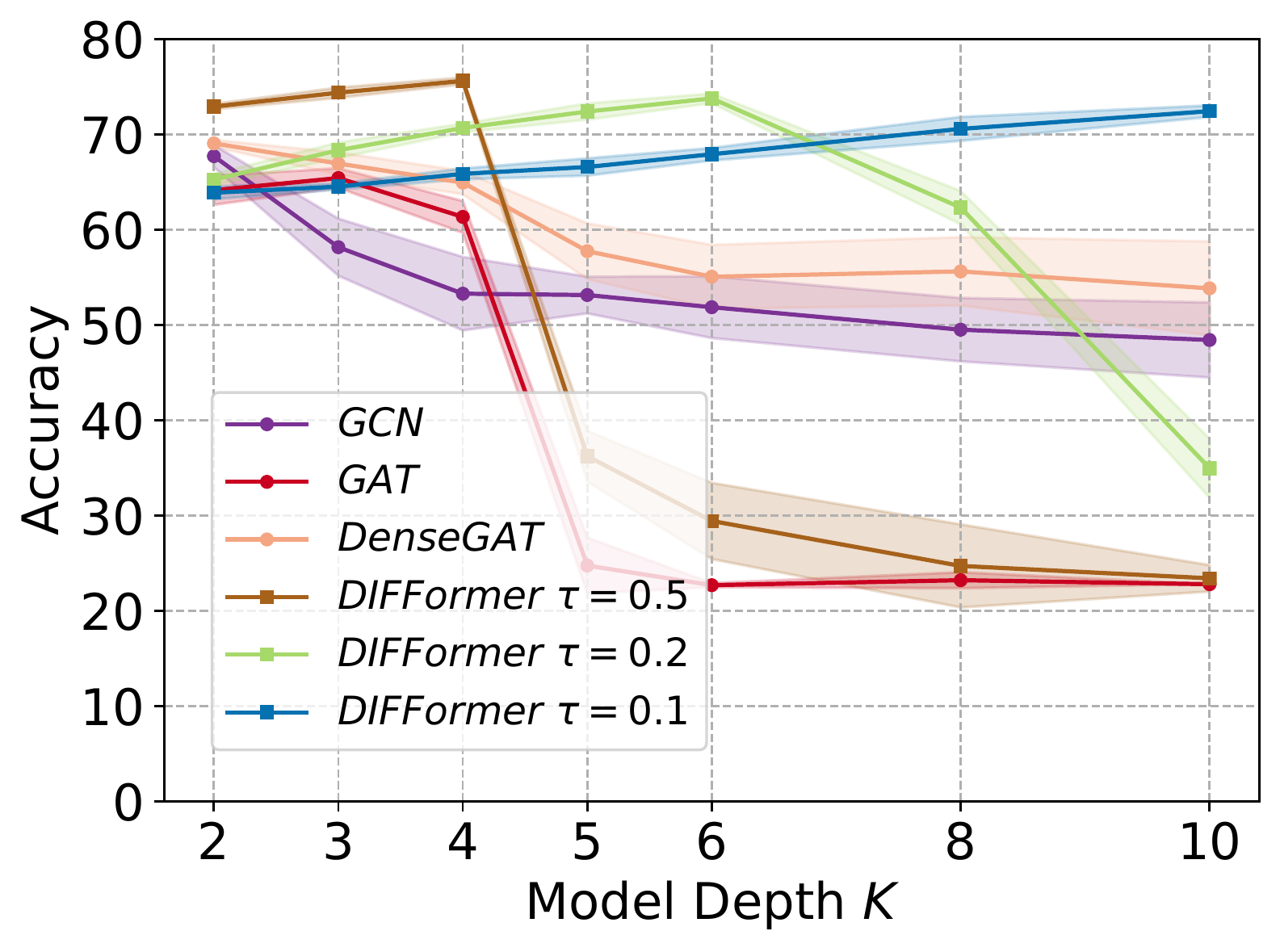}
\end{minipage}
}
\subfigure[\texttt{Pubmed}]{
\begin{minipage}[t]{0.44\linewidth}
\centering
\includegraphics[width=\linewidth]{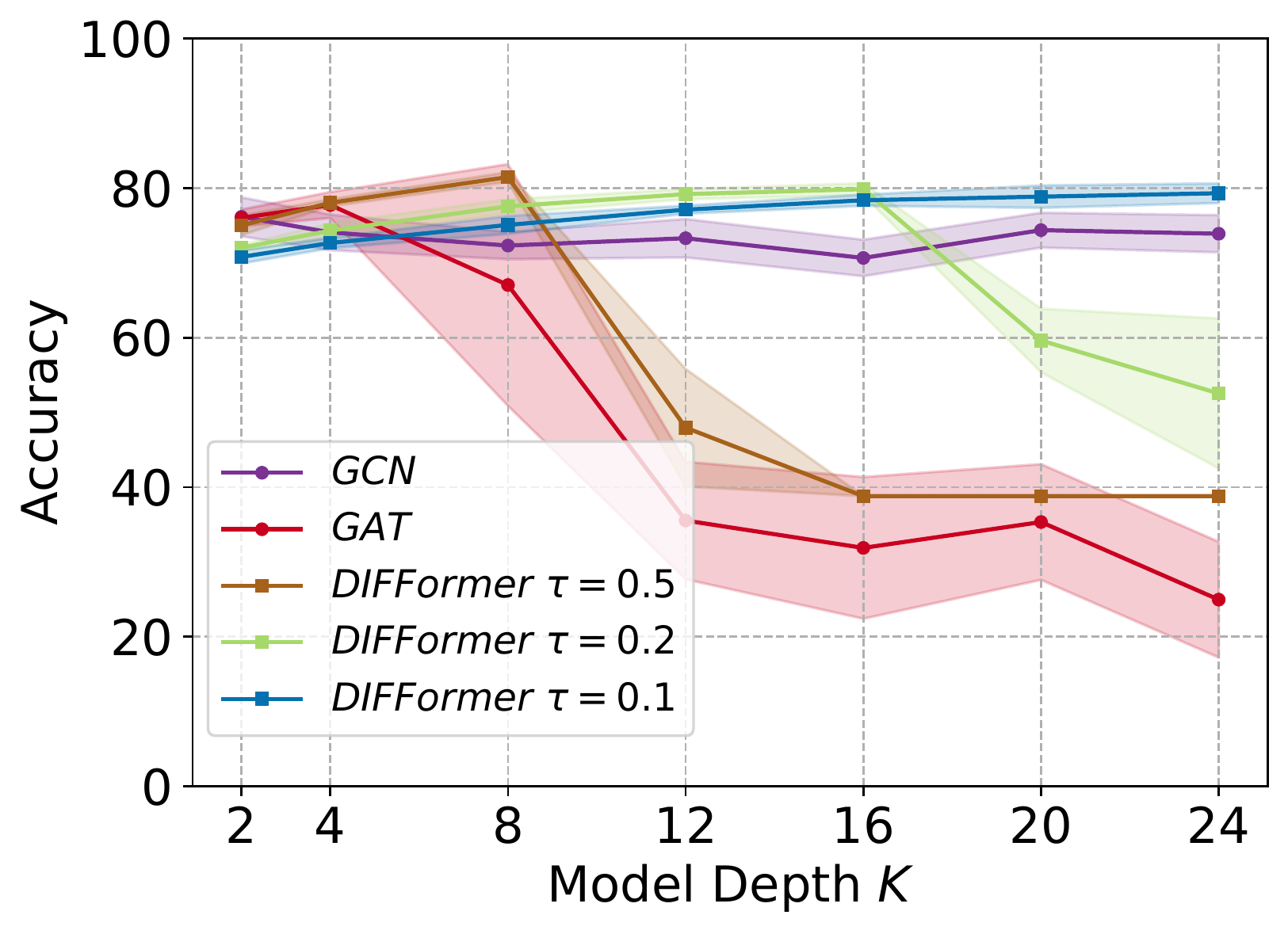}
\end{minipage}
}
\end{minipage}
\caption{Hyper-parameter studies of model depth $K$ and step size $\tau$ on two citation networks. The baseline DenseGAT suffers out-of-memory issue on \textsc{Pubmed}.~\label{fig:res-appx-hyper}}
\end{figure}

\subsection{Visualization}\label{appx-vis}

Fig.~\ref{fig:vis-20news} and \ref{fig:vis-stl} plot the produced instance-level representations and diffusion strength estimates by the model on \texttt{20News} and \texttt{STL}, respectively. We observe that the diffusivity estimates tend to connect nodes with different classes, which contribute to increasing the global connectivity and facilitate absorbing other instances' information for informative representations. The node embeddings produced by our model have small intra-class distance and large inter-class distance, making it easier for the classifier to distinguish.

Fig.~\ref{chickenpox_main_text_visualization} visualizes the diffusivity estimates on \texttt{Chickenpox}. We conclude that large diffusion strength usually exists between nodes with similar ground-truth labels. \model-s has more concentrated large weights while \model-a tends to have large diffusivity spreading out more. \model-a indeed learns more complex underlying structures than \model-s due to its better capacity for diffusivity modeling.


\subsection{Impact of Mini-batch Sizes on Large Graphs}\label{appx-batch}
The randomness of mini-batch partition on large graphs has negligible effect on the performance since we use large batch sizes for training, which is facilitated by the linear complexity of \model-s. Even setting the batch size to be 100000, our model only costs 3GB GPU memory on Pokec. As a further investigation on this, we add more experiments using different batch sizes on \texttt{Pokec} and the results are shown in Table~\ref{tab:dis-minibatch}.

\begin{table}[htb]
\small

\centering
\caption{Discussions on using different mini-batch sizes for training on \texttt{Pokec}. We report testing accuracy and training memory for comparison.}\label{tab:dis-minibatch}
{
\setlength{\tabcolsep}{1.5pt}
\centering
\resizebox{0.8\textwidth}{!}{
\begin{tabular}{c|c|c|c|c|c|c}
\toprule
Batch size & 5000 & 10000 &20000&50000&100000&200000 \\
\midrule
Test Acc (\%) & 65.24 $\pm$ 0.34&67.48 $\pm$ 0.81&68.53 $\pm$ 0.75&68.96 $\pm$ 0.63&69.24 $\pm$ 0.76&69.15 $\pm$ 0.52 \\
GPU Memory (MB)&1244&1326&1539&2060&2928&4011 \\
 \bottomrule
\end{tabular}
}}
\end{table}

One can see that using small batch sizes would indeed sacrifice the performance yet large batch sizes can produce decent and low-variance results with acceptable memory costs.

\subsection{Comparison of Running Time and Memory Costs}\label{appx-time}

To further study the efficiency and scalability of our model, we provide more comparison regarding the training time per epoch and memory costs of two \model's variants, GCN, GAT and DenseGAT in Table~\ref{tab-time}. One can see that compared to GAT, \model-s costs comparable time on small datasets such as \texttt{Cora} and \texttt{WikiMath}, and is much faster on large dataset \texttt{Pokec}. As for memory consumption, \model-s reduces the costs by several times over DenseGAT, which clearly shows the efficiency of our new diffusion function designs. Overall, \model-s has nice scalability, decent efficiency and yield significantly better accuracy. In contrast, \model-a costs much larger time and memory costs than \model-s, due to its quadratic complexity induced by the explicit computation for the all-pair diffusivity. Still, \model-a accommodates non-linearity for modeling the diffusion strengths which enables better capacity for learning complex layer-wise inter-interactions.

\begin{table}[htb]
\small

\centering
\caption{Comparison of training time and memory of different models on \texttt{Cora}, \texttt{Pokec}, \texttt{STL-10} and \texttt{WikiMath}. OOM refers to out-of-memory when training on a GPU with 16GB memory.}\label{tab-time}
{
\setlength{\tabcolsep}{1.5pt}
\centering
\resizebox{0.8\textwidth}{!}{
\begin{tabular}{c|c|c|c|c|c|c}
\toprule
 \multicolumn{2}{c|}{Method}    & GCN & GAT & DenseGAT  & \model-s & \model-a \\
\midrule
\multirow{2}{*}{Cora} & Train time (s)   &  0.0584 & 0.0807 & 0.5165  & 0.1438 & 0.3292 \\
 & Training memory (MB) & 1168 & 1380 & 8460 &  1350 & 3893 \\
\midrule
\multirow{2}{*}{Pokec} & Train time (s)   &  1.069 & 14.87 & 88.07 &  2.206 & OOM \\
& Training memory (MB) & 1812 & 2014 & 13174 &  2923 & OOM \\
\midrule
\multirow{2}{*}{STL} & Train time (s)   & 0.0069  & 0.0424 & OOM  & 0.0323 & 0.3298 \\
 & Training memory (MB) & 1224 & 1980 & OOM &  1342 & 7680 \\
 \midrule
\multirow{2}{*}{WikiMath} & Train time (s)   &  0.0081 & 0.0261 & 0.0364  & 0.0281 & 0.0350 \\
 & Training memory (MB) & 1048 & 1054 & 1316 &  1046 & 1142 \\
 \bottomrule
\end{tabular}
}}
\end{table}

\subsection{Incorporation of Pseudo Labels}\label{appx-pl}

For semi-supervised learning, there is a line of approaches that leverage pseudo labels to augment the training data. Our model \model essentially has orthogonal technical aspects compared to this line of work in that we focus on building a new encoder backbone and only train the model with a standard supervised loss on the labeled data. This means that pseudo-label-based approaches are equally applicable for enhancing the training of our model as well as the competitors we used in our experiments.

As an initial verification of this claim, we use the Meta Pseudo Labels (MPL)~\citep{mpl} as a plug-in module to boost \model as well as our competitors GCN-kNN and GAT-kNN, and empirically compare the relative improvement. Specifically, we use \model-s, \model-a, GCN and GAT as the encoder backbone of teacher and student models, respectively, and use the MPL algorithm to generate pseudo labels for augmenting the training data used for computing the supervised loss. The results on \texttt{CIFAR-10} and \texttt{STL-10} are shown in Table~\ref{tbl-pl}. As we can see, the MPL contributes to some performance gains across all four encoders, while our two \model variants still maintain the superiority over the competitors. Note also that as a proof-of-concept here we did not use an additional consistency loss that requires careful manual tuning. However, in practice this type of more sophisticated MPL implementation could in principle be applied to further improve performance (across all models).

\begin{table}[htb]
\small
\centering
\caption{Comparison of using and not using Meta Pseudo Labels (MPL) as a plug-in module to boost different encoder backbones on \texttt{CIFAR-10} and \texttt{STL-10}.}\label{tbl-pl}
{
\setlength{\tabcolsep}{1.5pt}
\centering
\resizebox{0.65\textwidth}{!}{
\begin{tabular}{c|c|c|c|c|c}
\toprule
 \multicolumn{2}{c|}{Method}    & GCN & GAT & \model-s & \model-a \\
\midrule
\multirow{2}{*}{STL} & w/o MPL & $73.7 \pm 0.4$ & $73.9 \pm 0.6$ & $76.4 \pm 0.5$ & $75.3 \pm 0.6$  \\
& w/ MPL & $74.3 \pm 0.5$ & $74.5 \pm 0.7$ & $77.0 \pm 0.6$ & $75.9 \pm 0.4$  \\
\midrule
\multirow{2}{*}{CIFAR}& w/o MPL & $74.7 \pm 0.5$ & $74.1 \pm 0.5$ & $76.6 \pm 0.3$ & $75.9 \pm 0.3$ \\
& w/ MPL & $75.3 \pm 0.4$ & $74.8 \pm 0.5$ & $77.1 \pm 0.3$ & $76.3 \pm 0.3$ \\
 \bottomrule
\end{tabular}
}}
\end{table}

\begin{figure}[t!]
\centering
\begin{minipage}{\linewidth}
\centering
\subfigure[The first layer]{
\begin{minipage}[t]{0.48\linewidth}
\centering
\includegraphics[width=0.9\linewidth,trim=0cm 0cm 0cm 0cm,clip]{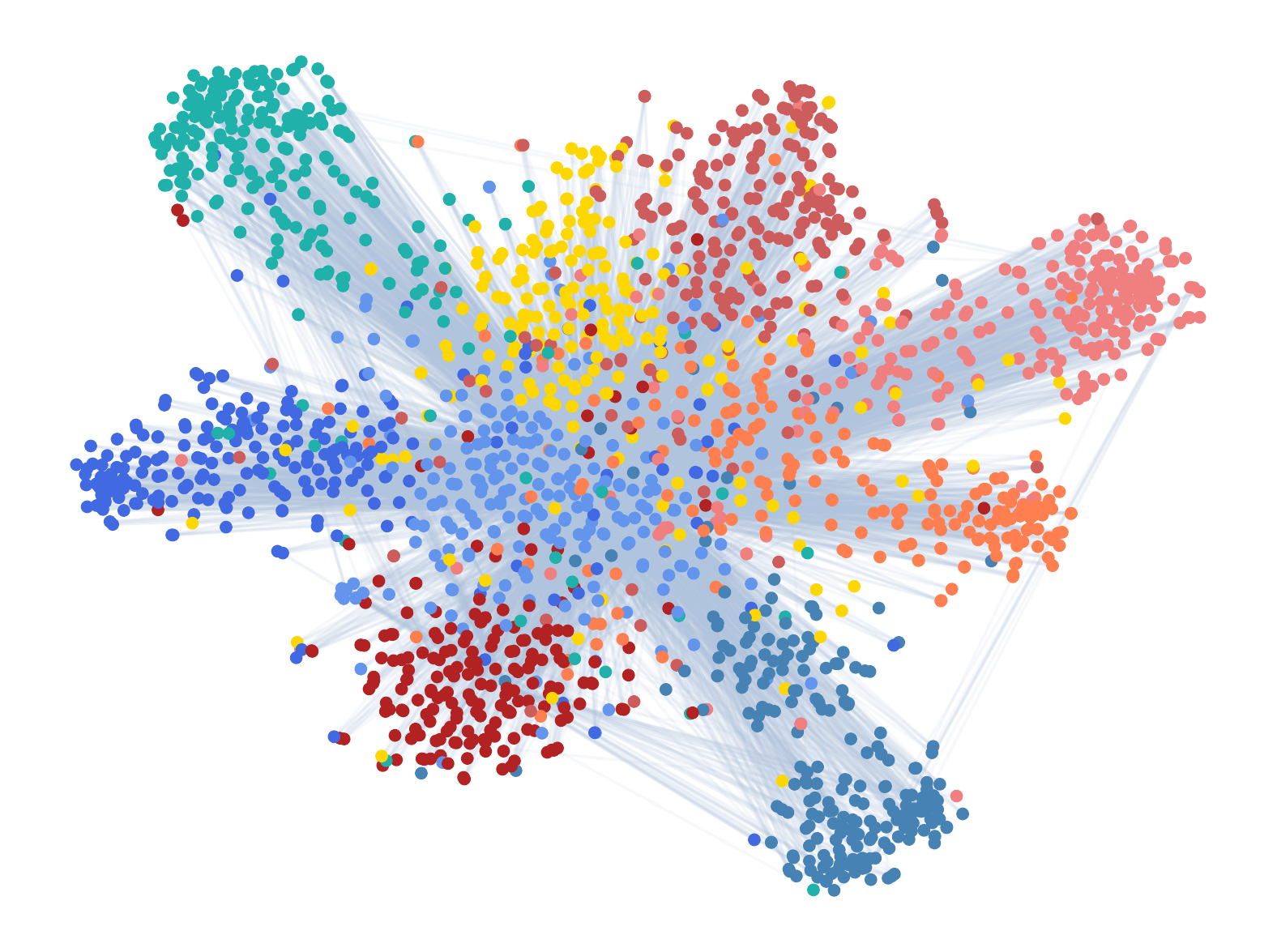}
\end{minipage}
}
\subfigure[The last layer]{
\begin{minipage}[t]{0.48\linewidth}
\centering
\includegraphics[width=0.9\linewidth,trim=0cm 0cm 0cm 0cm,clip]{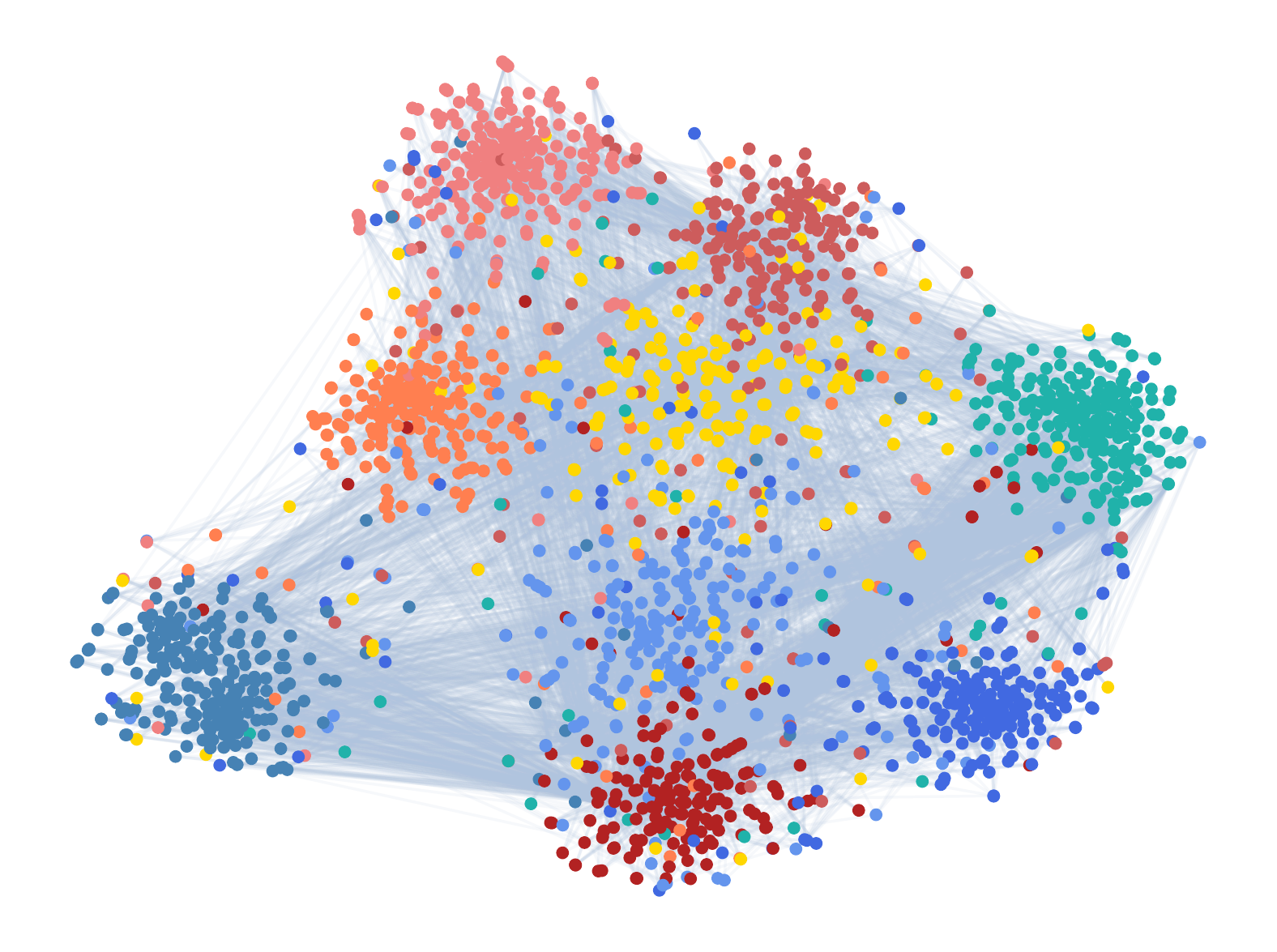}
\end{minipage}
}
\end{minipage}
\vspace{-10pt}
\caption{Visualization of instance representations and diffusivity strengths (we set a threshold and only plot the edges with weights more than the threshold) at different layers given by \model-s on \texttt{20News}.}
\label{fig:vis-20news}
\vspace{-15pt}
\end{figure}

\begin{figure}[t!]
\centering
\begin{minipage}{\linewidth}
\centering
\subfigure[The first layer]{
\begin{minipage}[t]{0.48\linewidth}
\centering
\includegraphics[width=0.9\linewidth,trim=0cm 0cm 0cm 0cm,clip]{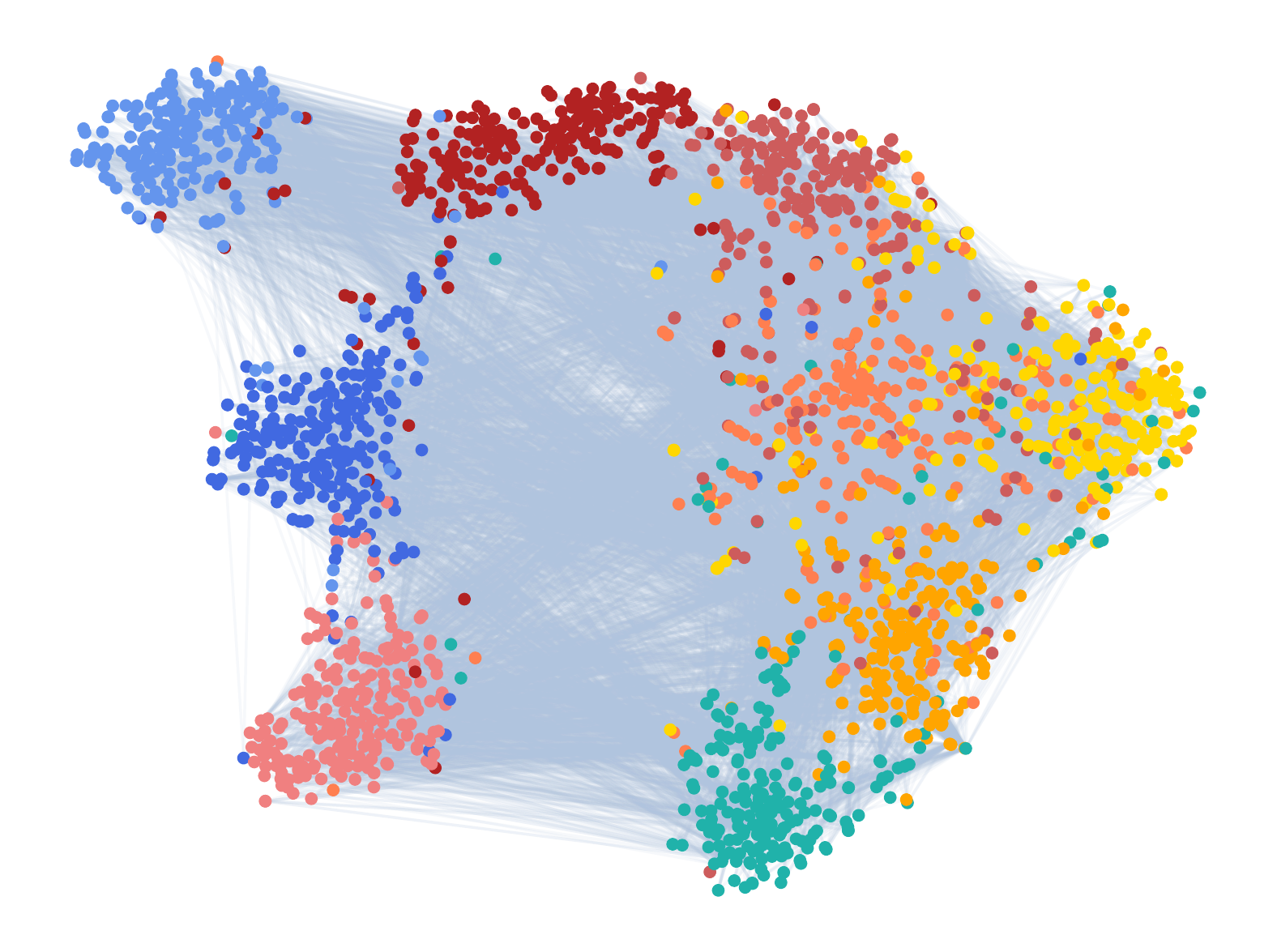}
\end{minipage}
}
\subfigure[The last layer]{
\begin{minipage}[t]{0.48\linewidth}
\centering
\includegraphics[width=0.9\linewidth,trim=0cm 0cm 0cm 0cm,clip]{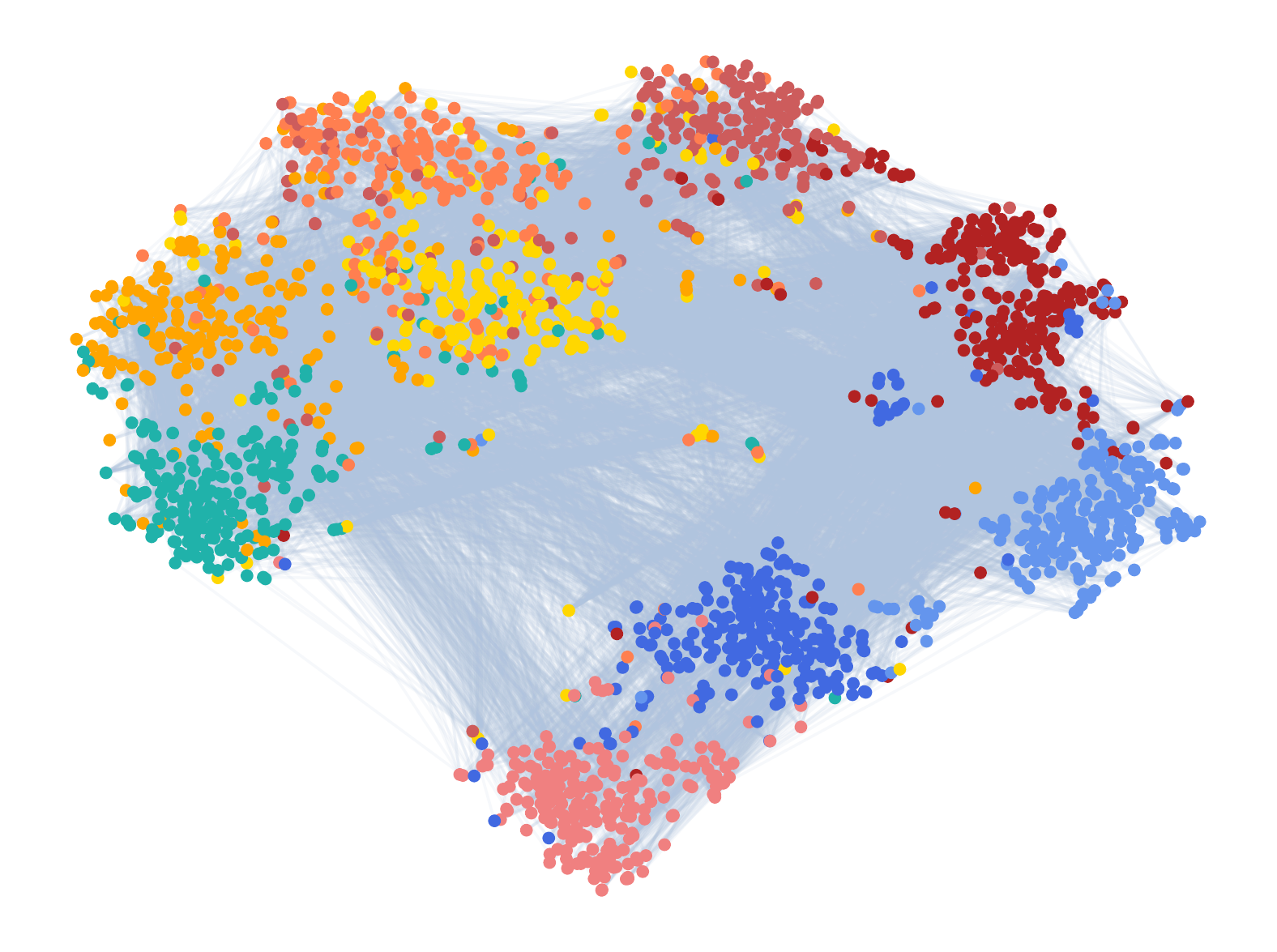}
\end{minipage}
}
\end{minipage}
\vspace{-10pt}
\caption{Visualization of instance representations and diffusivity strengths (we set a threshold and only plot the edges with weights more than the threshold) at different layers given by \model-s on \texttt{STL}.}
\label{fig:vis-stl}
\vspace{-15pt}
\end{figure}





\begin{figure}[t!]
\centering
\begin{minipage}{\linewidth}
\centering
\subfigure[]{
\begin{minipage}[t]{0.148\linewidth}
\centering
\includegraphics[width=0.9\linewidth,trim=0cm 0cm 0cm 0cm,clip]{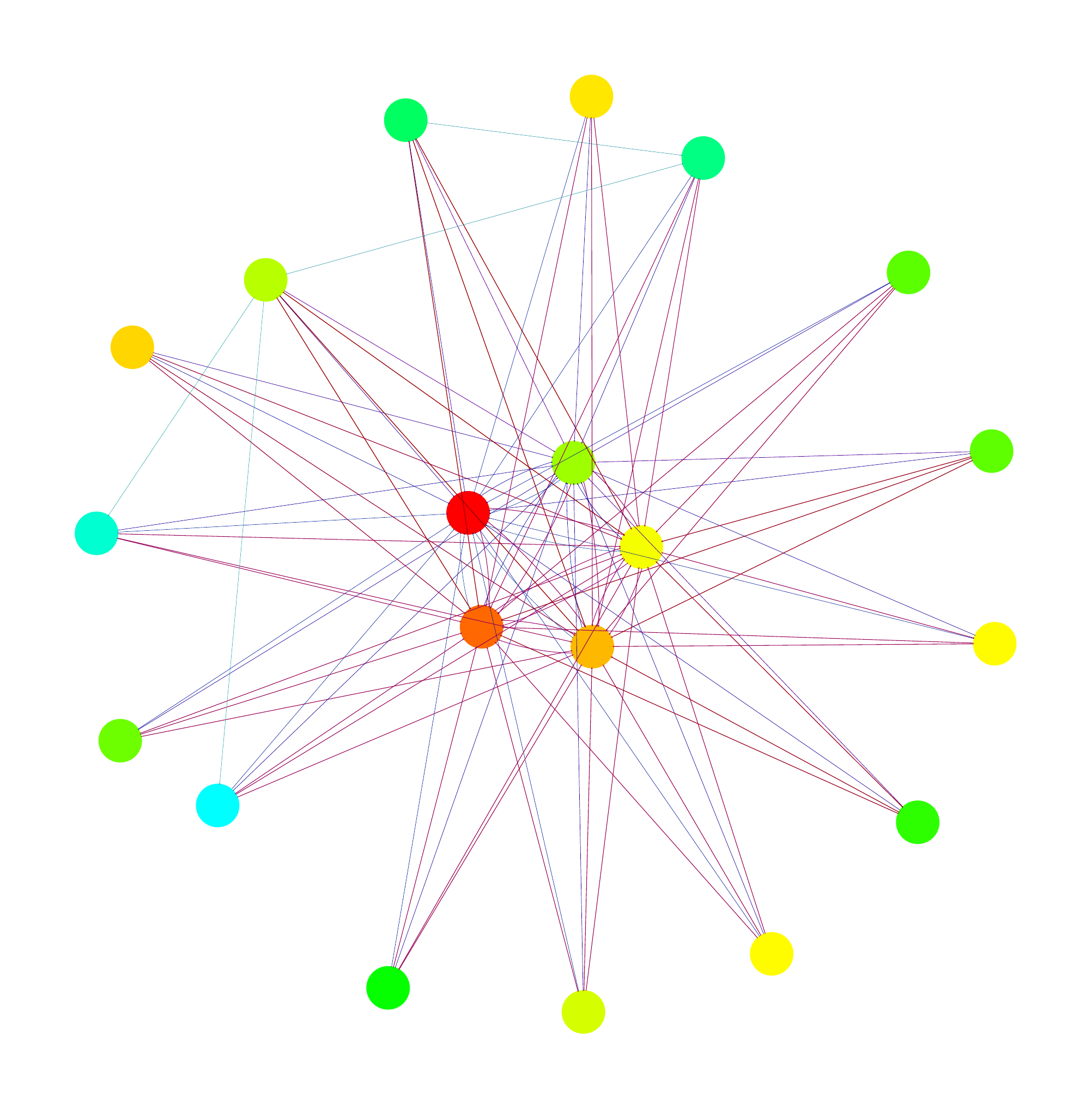}
\end{minipage}
}
\subfigure[]{
\begin{minipage}[t]{0.148\linewidth}
\centering
\includegraphics[width=0.9\linewidth,trim=0cm 0cm 0cm 0cm,clip]{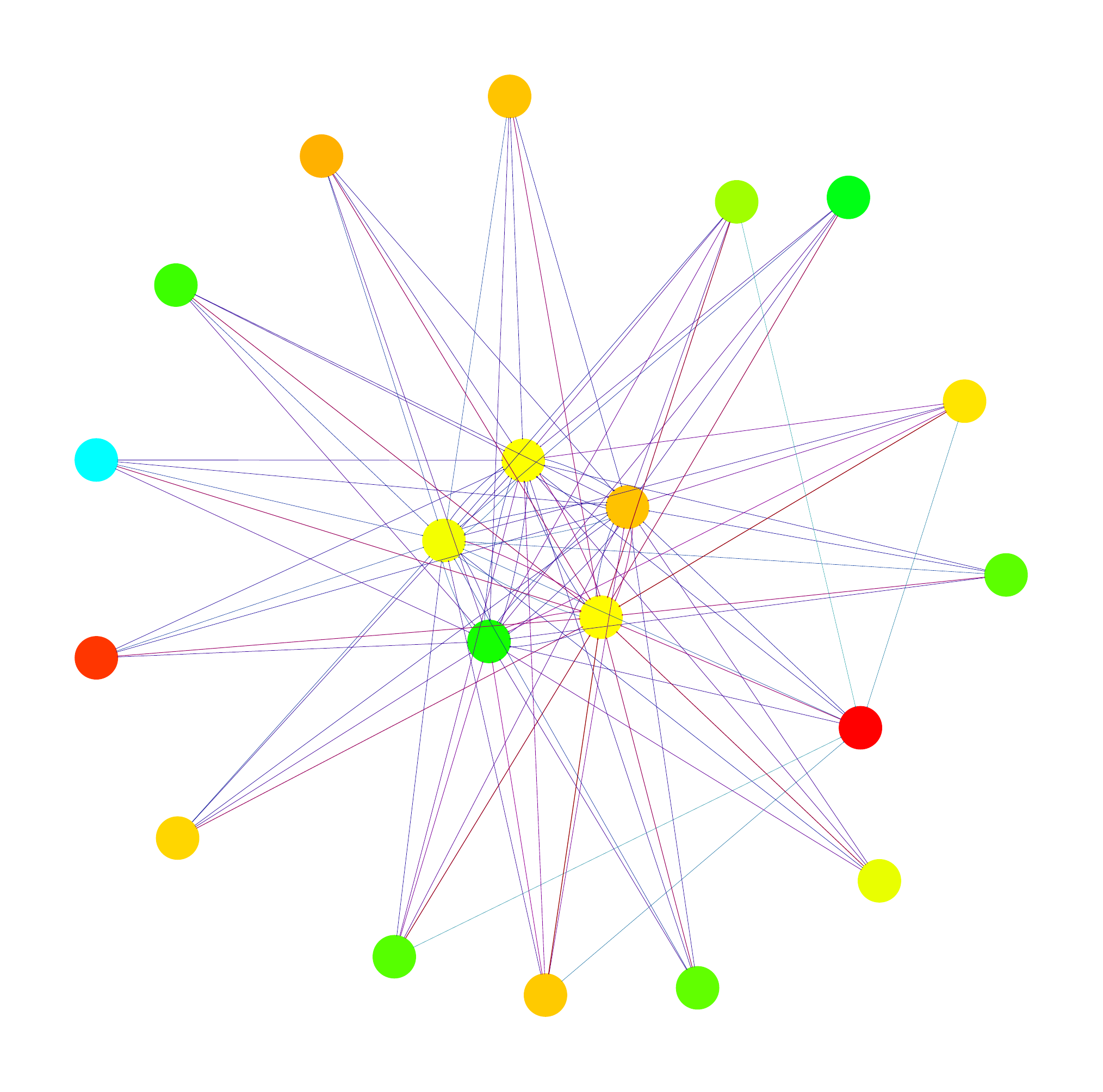}
\end{minipage}
}
\subfigure[]{
\begin{minipage}[t]{0.148\linewidth}
\centering
\includegraphics[width=0.9\linewidth,trim=0cm 0cm 0cm 0cm,clip]{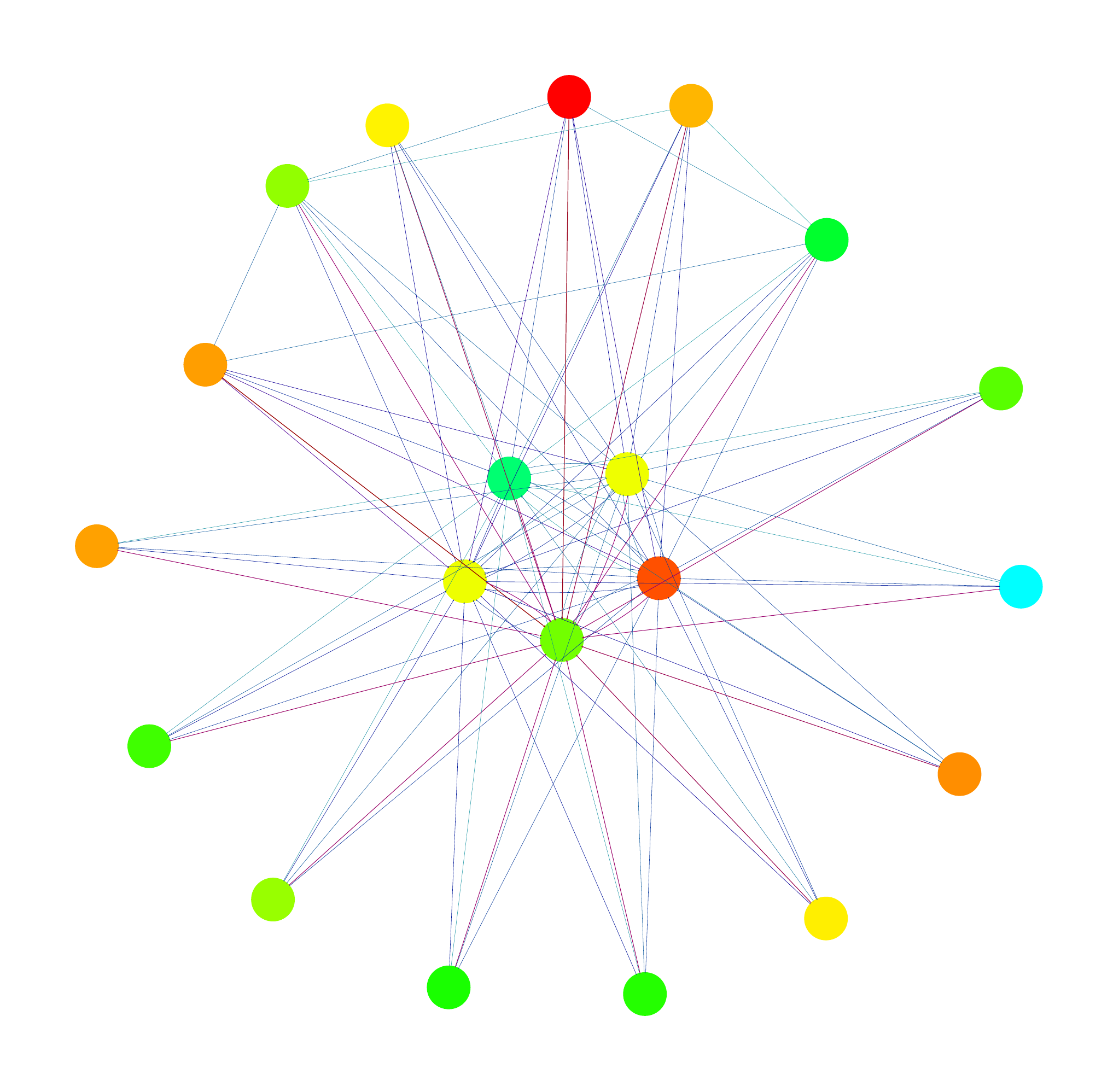}
\end{minipage}
}
\subfigure[]{
\begin{minipage}[t]{0.148\linewidth}
\centering
\includegraphics[width=0.9\linewidth,trim=0cm 0cm 0cm 0cm,clip]{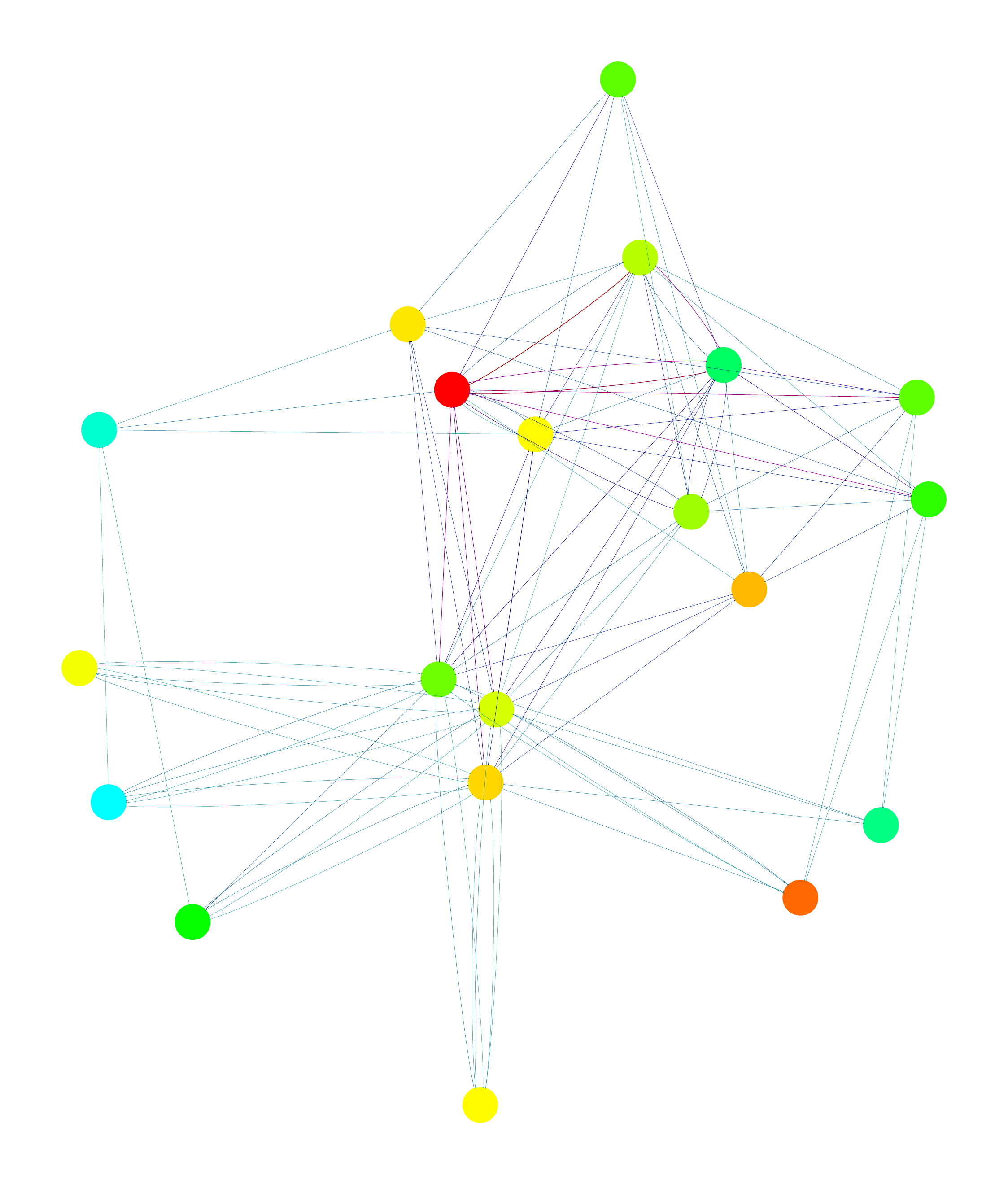}
\end{minipage}
}
\subfigure[]{
\begin{minipage}[t]{0.148\linewidth}
\centering
\includegraphics[width=0.9\linewidth,trim=0cm 0cm 0cm 0cm,clip]{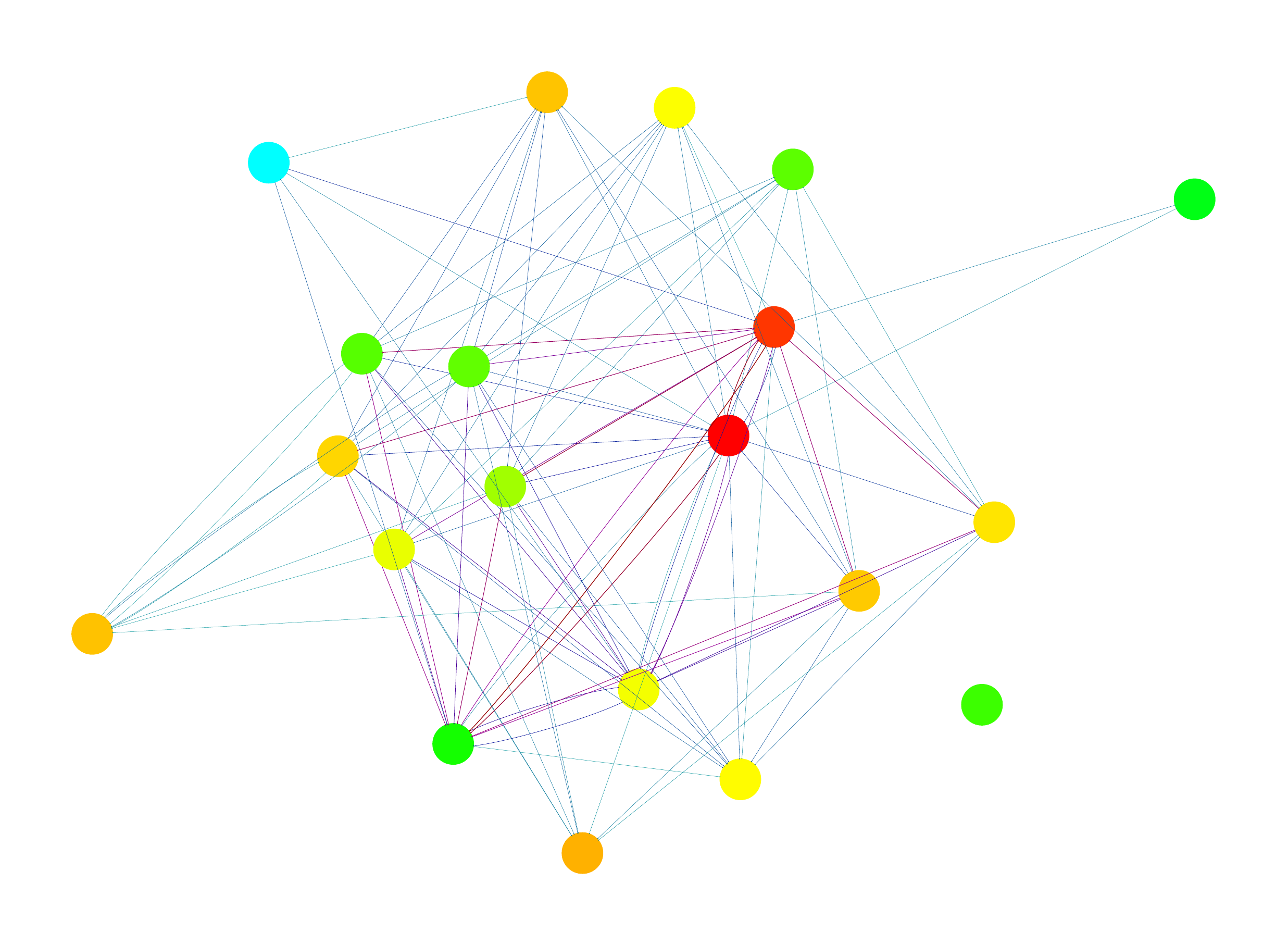}
\end{minipage}
}
\subfigure[]{
\begin{minipage}[t]{0.148\linewidth}
\centering
\includegraphics[width=0.9\linewidth,trim=0cm 0cm 0cm 0cm,clip]{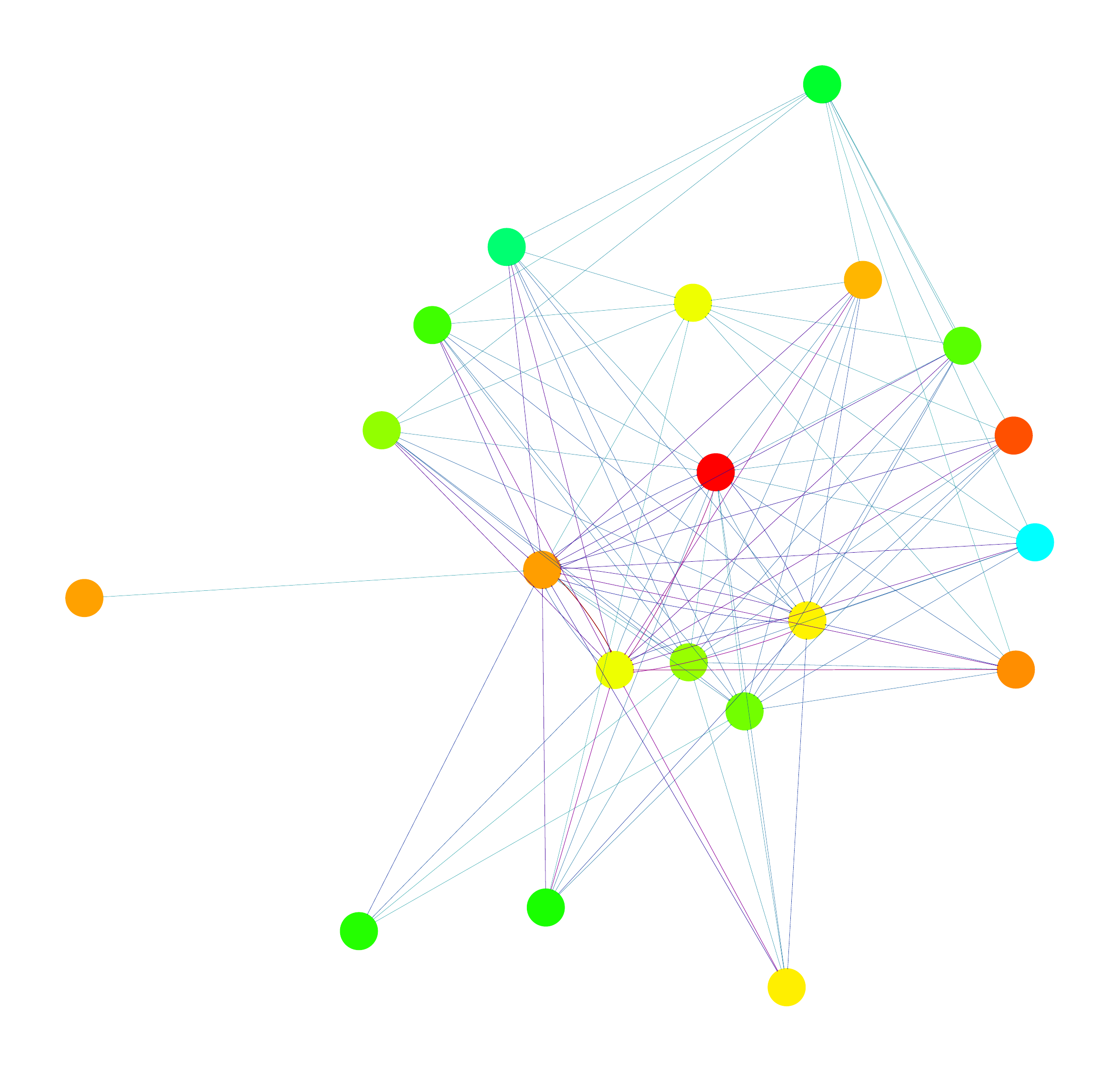}
\end{minipage}
}
\end{minipage}
\vspace{-10pt}
\caption{The produced diffusivity of the first layer (i.e., $\mathbf {\hat S}^{(1)}$) on \texttt{Chickenpox} across the first three snapshots, yielded by \model-s, shown
 in (a)$\sim$(c), and \model-a, shown in (d)$\sim$(f). Node colors correspond to ground-truth labels (i.e., reported cases), varying from red to blue as the label increases. We visualize the edges with top 100 diffusion strength, where edge colors change from blue to red as $\mathbf {\hat S}^{(1)}_{ij}$ increases.}
\label{chickenpox_main_text_visualization}
\vspace{-15pt}
\end{figure}

\end{document}